\documentclass[nonblindrev]{informs3_noinforms}

\OneAndAHalfSpacedXI

\usepackage{bm}
\usepackage{cleveref}
\newcommand{\var}{\operatorname{Var}}
\newcommand{\E}{{\mathbb{E}}}
\usepackage{dsfont} 
\usepackage{algorithm}
\usepackage{algorithmicx}
\usepackage{algpseudocode} 

\DeclareMathOperator{\sd}{sd}

\usepackage{natbib}
 \bibpunct[, ]{(}{)}{,}{a}{}{,}%
 \def\bibfont{\small}%
 \def\newblock{\ }%

\makeatletter
\long\def\@makefigurecaption#1#2{\EGT\FigureCaptionFontStyle
  \begin{Center}\HD{9}{0}{\FigureNameFontStyle #1\kern16pt}\ignorespaces #2\HD{0}{0}\endgraf\end{Center}}
\makeatother

\TheoremsNumberedThrough     
\ECRepeatTheorems

\EquationsNumberedThrough    

\begin{document}


\RUNAUTHOR{Chen and Lu}

\RUNTITLE{Bandit Allocational Instability}

\TITLE{Bandit Allocational Instability}

\ARTICLEAUTHORS{%
\AUTHOR{Yilun Chen}
\AFF{The Chinese University of Hong Kong, Shenzhen, \EMAIL{chenyilun@cuhk.edu.cn}}
\AUTHOR{Jiaqi Lu}
\AFF{The Chinese University of Hong Kong, Shenzhen, \EMAIL{lujiaqi@cuhk.edu.cn}}
} 

\ABSTRACT{
    When multi-armed bandit (MAB) algorithms allocate pulls among competing arms, the resulting allocation can exhibit huge variation. This is particularly harmful in modern applications such as learning-enhanced platform operations and post-bandit statistical inference. Thus motivated, we introduce a new performance metric of MAB algorithms termed \emph{allocation variability}, which is the largest (over arms) standard deviation of an arm's number of pulls. We establish a fundamental trade-off between allocation variability and regret, the canonical performance metric of reward maximization. In particular, for any algorithm, the worst-case regret $\mathcal R_T$ and worst-case allocation variability $\mathcal S_T$ must satisfy $\mathcal R_T \cdot \mathcal S_T=\Omega(T^{\frac{3}{2}})$ as $T\rightarrow\infty$, as long as $\mathcal R_T=o(T)$. This indicates that any minimax regret-optimal algorithm must incur worst-case allocation variability $\Theta(T)$, the largest possible scale; while any algorithm with sublinear worst-case regret must necessarily incur $\mathcal S_T= \omega(\sqrt{T})$. We further show that this lower bound is essentially tight, and that any point on the Pareto frontier $\mathcal R_T \cdot \mathcal S_T=\tilde{\Theta}(T^{3/2})$ can be achieved by a simple tunable algorithm \texttt{UCB-f}, a generalization of the classic \texttt{UCB1}. Finally, we discuss implications for platform operations and for statistical inference, when bandit algorithms are used. As a byproduct of our result, we resolve an open question of \citet{praharaj2025instability}.
}%



\maketitle

%

\section{Introduction}\label{sec:intro}

Multi-armed bandit (MAB) is a canonical model of sequential allocation under uncertainty \citep{gittins2011multi}, with broad applications across Operations Research, Statistics, Economics, and Computer Science. In the classical formulation, a decision maker repeatedly selects one of $K$ arms. Pulling arm $i$ yields a random reward drawn independently from an arm-specific distribution that is unknown a priori. Over a horizon of $T$ pulls, the decision maker must balance \emph{exploration} (learning the reward distributions and identifying the most rewarding arm) and \emph{exploitation} (leveraging what has been learned) in order to maximize expected cumulative reward, or equivalently, minimize the expected regret relative to the best (most rewarding) arm. This exploration--exploitation dilemma has fueled decades of research and has led to a rich theory of regret-optimal algorithms.

While most bandit algorithms are designed to maximize reward, how do they allocate pulls?  {Somewhat surprisingly, numerical evidence shows that in certain cases, the allocation pattern of canonical bandit algorithms can be highly variable.}
We illustrate this in Example~\ref{example:intro} with \texttt{Thompson Sampling} (\cite{thompson1933likelihood}, \cite{russo2014learning}), a popular Bayesian heuristic enjoying multiple regret-optimal guarantees.

\begin{example}\label{example:intro}
    Consider a two-armed bandit instance with Bernoulli($p_i$) arms over a horizon of $T = 5000$. The arms are identical with $p_1 = p_2 = 0.5$. We simulate the behavior of \texttt{Thompson Sampling} (with $\textsf{Unif}[0,1]$ prior over $p$) across $20000$ independent trials. In each trial, we record the number of pulls of arm 1, and plot its empirical distribution across trials as the histogram in Figure~\ref{fig:example1}. Observe that the number of pulls allocated to arm 1 varies drastically across different trials, spreading essentially across the entire range from $0$ to $5000$, exhibiting the {maximal} possible variability\footnote{In recent studies of \cite{kalvit2021closer} {and \cite{halder2025stable}}, similar phenomena were observed under \texttt{Thompson Sampling} when both arms have Bernoulli(1) {or Normal(0,1)} rewards.}.  
\end{example}
\begin{figure}[hb]
  \centering
  \includegraphics[width=0.7\linewidth]{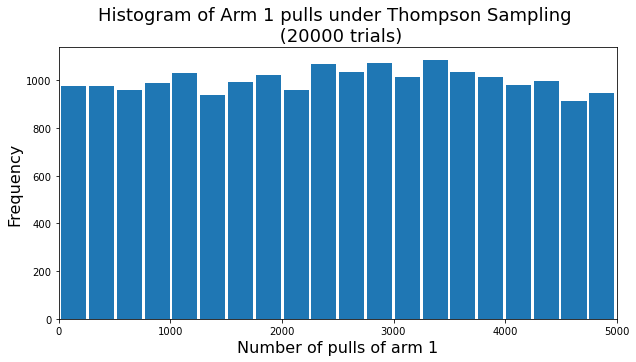}
  \caption{Allocation pattern of \texttt{Thompson Sampling} in Example~\ref{example:intro}.}
  \label{fig:example1}
\end{figure}
    We term the phenomenon in Example~\ref{example:intro} \textit{allocational instability}. Allocational instability highlights a distinct dimension of algorithmic behavior that is not captured by regret alone, and can be particularly harmful in modern bandit applications. For example:
    \begin{itemize}
        \item On a typical content-sharing platform, users arrive sequentially to browse content that are created by competing third-party creators. Upon each arrival, the platform chooses which contents to display or recommend. The platform often uses bandit algorithms to maximize user satisfaction. In the bandit abstraction, creators correspond to ``arms'', and ``pulling an arm'' corresponds to displaying the creator's content to an arriving user. User's satisfaction, often proxied by click-through rate and session length, serves as the reward. An arm's number of pulls represents the total traffic routed to the arm (creator) by the algorithm. When the algorithm exhibits allocational instability, creators subsequently face highly volatile traffic. This can have severe consequences such as perceived unfairness, increased churn risk, reduced effort and lower content quality, which will hurt the platform’s long-term prosperity.
\end{itemize} 
\indent\indent Similar negative consequences of algorithmic allocational instability also arise in e-commerce, service platforms and other settings where agents correspond to arms. Beyond those, allocational instability can also have undesirable consequences for post-bandit statistical inference (see \Cref{sec:literature_review} and \Cref{sec:implications} for details).

    Thus motivated, in this work we introduce a new performance metric, termed \emph{allocation variability}, that quantifies algorithmic allocational instability. In the stochastic MAB model, we define allocation variability as $S_T\triangleq \max_{i\in[K]} \sd\!\big(N_{i,T}\big),$ namely, the largest (over arms) standard deviation of the number of pulls by time $T$. We study allocation variability alongside the standard performance metric of expected pseudo-regret, $R_T\triangleq\sum_{i=1}^K \Delta_i\,\E\big[N_{i,T}\big],$ where $\Delta_i$ denotes the mean-reward gap between the best arm and arm $i$. Adopting a minimax viewpoint, we evaluate algorithms by the worst-case (instance-independent) metrics $\mathcal{S}_T \triangleq\sup_{\nu\in\mathfrak{P}} S_T(\nu)$ and $\mathcal{R}_T \triangleq \sup_{\nu\in\mathfrak{P}} R_T(\nu)$ over some instance class $\mathfrak{P}.$ {We ask:}\\

\emph{Can an algorithm simultaneously achieve small worst-case regret $\mathcal{R}_T$ and small worst-case allocation variability $\mathcal{S}_T$?}\\

As the main message of this paper, we show that one \textbf{cannot} simultaneously do well on both fronts in a fundamental sense. In particular, we establish a trade-off between $\mathcal{R}_T$ and $\mathcal{S}_T$, under which any reduction in regret necessarily comes at a commensurate price in allocation variability. We then provide a class of algorithms that attains the Pareto frontier up to polylog $T$ factors. Our main findings and contributions are summarized below.

\subsection{Contributions}
\paragraph{Fundamental regret-allocation-variability trade-off.} 
We establish a fundamental impossibility result (Theorem~\ref{thm:lower-bound}): over Gaussian bandit instances, any algorithm that \emph{actively learns} (i.e., achieves $\mathcal{R}_T=o(T)$) must satisfy
\[
\mathcal{R}_T\cdot\mathcal{S}_T=\Omega\!\left(T^{3/2}\right)
\qquad\text{as }T\to\infty.
\]
As an immediate corollary, minimax regret-optimal algorithms with $\mathcal{R}_T=\Theta(\sqrt{T})$ (cf.~\cite{lattimore2020bandit}) must have $\mathcal{S}_T=\Omega(T)$. Since the allocation variability is at most linear in $T$, this implies $\mathcal{S}_T=\Theta(T)$. In words, minimax regret-optimal learning necessarily results in the largest possible allocation variability. We further demonstrate that under minimax regret-optimal algorithms, such maximal allocation variability occurs on all instances in which $\Delta_i = O(1/\sqrt{T})$ for some arm $i$ (\Cref{thm:minimax-worst-case-interval}). Moving away from minimax optimality, our lower bound quantifies how reductions in allocation variability must be paid for by a commensurate increase in regret. Specifically, algorithms with $\mathcal S_T=\Theta(T^{1-\alpha})$ must incur $\mathcal R_T=\Omega(T^{\frac{1}{2}+\alpha})$, $\forall\alpha\in(0,\frac{1}{2}]$. At the opposite end of the spectrum, we identify what we term the \textit{price of learning}: any algorithm that has any intention to learn, namely, achieving sub-linear regret as $T$ grows, must incur worst-case allocation variability $ \mathcal{S}_T = \omega(\sqrt{T})$. This is in sharp contrast to algorithms that completely give up on learning, such as round-robin or other static allocation rules, which can in principle achieve $\mathcal{S}_T = 0.$ 

\paragraph{Pareto frontier and the \texttt{UCB-f} algorithms.} We show that the lower bound $\mathcal{R}_T\cdot\mathcal{S}_T=\Omega\!\left(T^{3/2}\right)$ tightly characterizes the achievable regret-allocation-variability region for active learning algorithms, as depicted in Figure~\ref{fig:pareto-frontier}.
\begin{figure}[h]
        \centering
        \includegraphics[width=0.5\linewidth]{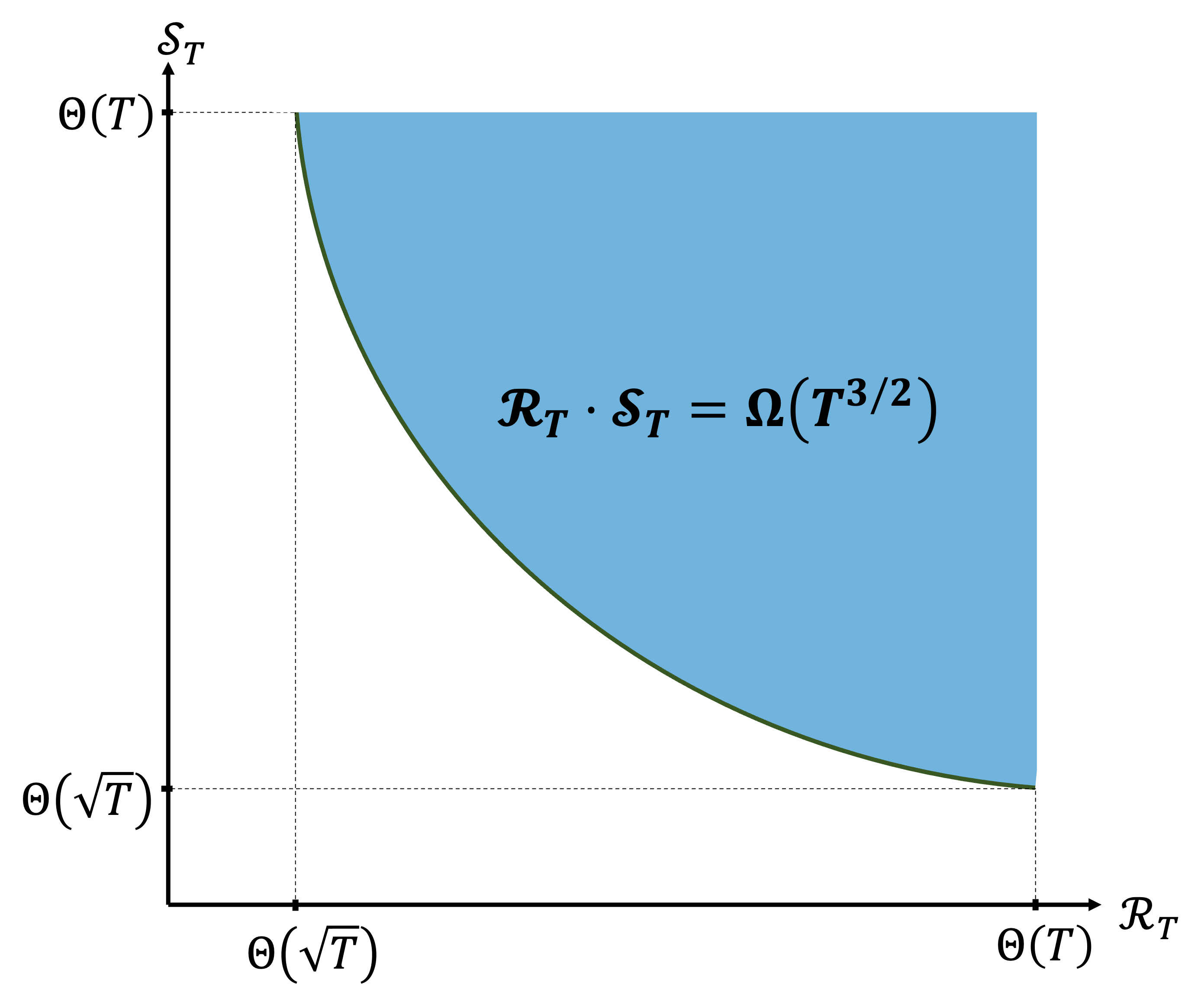}
        \caption{Pareto frontier of the two objectives of interest.}
        \label{fig:pareto-frontier}
    \end{figure}
In particular, we design a family of algorithms that can smoothly trade off regret and allocation variability along the boundary, up to polylog $T$ factors (\Cref{thm:ucb-f}).  This identifies $\mathcal{R}_T\cdot\mathcal{S}_T=\tilde{\Theta}(T^{3/2})$ as the Pareto frontier. Our Pareto-efficient family of algorithms, termed \texttt{UCB-f}, generalizes the canonical \texttt{UCB1} algorithm by replacing the standard exploration function (of order $\sqrt{\log t}$) with a tunable $f(t)$. A faster increasing $f(t)$ shifts the algorithm from aggressive exploitation towards encouraging more exploration, and correspondingly trades larger regret for reduced allocation variability.

\paragraph{Implications.} 
Our theoretical findings have rich implications. For learning-enhanced platform operations, we consider a platform that cares about both user satisfaction and agent (arm) utility, with minimax objective $\sup_{\nu\in\mathfrak{P}}(R_T(\nu)+(S_T(\nu))^\rho)$. The parameter $\rho\geq 0$ captures the agents' risk aversion level for demand variability. Our findings then immediately imply a lower bound of $(\sqrt{T}\vee (T^{\frac{3\rho}{2 + 2\rho}}\wedge T))$ for this objective, which is attained (up to polylog $T$ factors) by the \texttt{UCB-f} algorithm with an appropriate choice of $f(t)$ when $\rho\in[0,2]$. If $\rho>2$, i.e., the agents' risk-aversion level is so high, then the platform should completely give up learning and deterministically split demand among the agents to avoid the superlinear loss from allocation variability. For post-bandit statistical inference, we resolve an open question (with a negative answer) raised in \cite{praharaj2025instability} as a byproduct of Theorem~\ref{thm:lower-bound}--\ref{thm:minimax-worst-case-interval}. This implies that under minimax regret-optimal algorithms, the desired asymptotic normality of an arm's sample mean reward likely fails to hold, or at least can not be established by the techniques in the recent literature; see \Cref{sec:literature_review} and \Cref{sec:implications} for more details.

\paragraph{Novel lower bounding technique.}
To prove our main impossibility result, we develop a systematic approach that bridges regret and allocation variability, which may find a broader application. To convey the core intuition, consider a two-armed Gaussian bandit with rewards $\mathcal{N}(0,1)$ and $\mathcal{N}(-\Delta,1)$, parameterized by the gap $\Delta\geq 0$. Let $g_T(\Delta) = \mathbb{E}[N_{2, T}]$ denote the expected pull counts of the inferior arm. Then the expected pseudo-regret $R_T(\Delta) = \Delta g_T(\Delta)$. A key observation is that the allocation variability can be related to $g'_T$, namely, how fast $g_T(\cdot)$ changes with respect to $\Delta$. Precisely, let $\Delta'$ denote $\Delta+1/\sqrt{g_T(\Delta)}$, then $S_T(\Delta) \asymp \left|g_T(\Delta) - g_T\left(\Delta'\right)\right|$ by the following intuition. Under $\Delta$, there are on average $g_T(\Delta)$ pulls on the inferior arm and consequently, one can not effectively distinguish instances in $\left(\Delta, \Delta'\right)$ due to the noise in the samples. Thus under $\Delta$, the pull counts must ``stretch'' at times to behave as if the instance were $\Delta'$, leading to the unavoidable variability on the order of $\left|g_T(\Delta) - g_T\left(\Delta'\right)\right|$. Such an observation allows us to relate $S_T(\Delta)$ to ${g'_T(\Delta)}/{\sqrt{g_T(\Delta)}}$. The fundamental tension between controlling regret and reducing allocation variability then follows from the following reasoning. At $\Delta = 0, g_T(0) = \frac{T}{2}$ by symmetry. Starting from $g_T(0)$, a small regret requires a fast descent of $g_T(\Delta)$ as $\Delta$ increases, while a small allocation variability requires $g_T(\Delta)$ to change slowly (so $|g'_T(\Delta)|$ stays in control). Explicitly solving this tension results in the desired trade-off.

\subsection{Literature review}\label{sec:literature_review}
\paragraph{Regret-minimization in MAB.}
In the extensive stochastic MAB literature, the primary performance criterion for an algorithm has been the expected (pseudo-)regret, defined as the expected loss in cumulative reward relative to always pulling the optimal arm. For a fixed problem instance—unknown to the algorithm—the seminal work of \cite{lai1985asymptotically} establishes that the minimal achievable expected regret must scale as $\Theta(\log T)$ as $T\rightarrow\infty$. Complementary to this instance-dependent notion is the instance-independent, or worst-case, expected (pseudo-)regret (\cite{lattimore2020bandit}). It is well known that for stochastic MAB, the worst-case expected regret under any algorithm is lower bounded by $\Theta(\sqrt{T})$; see \cite{auer2002finite}. Several classic algorithms attain near-optimal rates under both instance-dependent and worst-case criteria, such as \texttt{UCB1} (\cite{auer2002finite}), \texttt{UCB-V} (\cite{audibert2009exploration}), \texttt{KL-UCB} (\cite{garivier2011kl}) and \texttt{Thompson Sampling} (\cite{agrawal2012analysis, daniel2018tutorial}). There are also various algorithms designed to achieve strict optimality. For example, in terms of worst-case optimal rate $\Theta(\sqrt{T})$, there are \texttt{KL-UCB++} (\cite{menard2017minimax}), \texttt{MOSS} and \texttt{Anytime-MOSS} (\cite{audibert2009minimax}), \texttt{Vanilla-MOSS} (\cite{lattimore2020bandit}), \texttt{ADA-UCB} (\cite{lattimore2018refining}), \texttt{OC-UCB} (\cite{lattimore2015optimally}), \texttt{KL-MOSS} (\cite{praharaj2025instability}), \texttt{KL-UCB-SWITCH} and \texttt{Anytime KL-UCB-SWITCH} (\cite{garivier2022kl}), \texttt{MOTS} (\cite{jin2021mots}), \texttt{DETC} (\cite{jin2021double}), \texttt{Tsallis-INF} (\cite{zimmert2021tsallis}), and more. Our result highlights a fundamental side effect of allocational instability for all these algorithms.

\paragraph{Beyond-expectation analysis of regret.}  A recent stream of work studies bandit performance beyond expected regret. In particular, \cite{fan2021fragility,simchi2022simple,simchi2023multi} investigate the regret tail probability $\mathbb{P}(\hat{R}_T > a )$ for large $a \gg \mathbb{E}[\hat{R}_T]$, characterizing when it decays slowly, namely, exhibiting a heavy tail. In the textbook \cite{lattimore2020bandit}, this phenomenon is discussed informally for the MOSS algorithm. \cite{fan2021fragility} show that, in general, the regret tails of instance-dependent–optimal algorithms (achieving the Lai–Robbins lower bound) must be heavy-tailed, and can be as heavy as a truncated Cauchy (of rate $\Theta(\frac{1}{T})$) for certain distribution classes. Regret tail risk differs from allocation variability in several fundamental ways.  In particular, severe power-law regret tails only arise on instances with a clearly suboptimal arm (with gap $\Delta_i > 0$ independent of $T$), and can be effectively eliminated with only a modest sacrifice in expected regret (e.g., by a constant or polylog factor), using tunable algorithms such as \texttt{UCB-f} \cite{fan2021fragility, simchi2023regret}. In contrast, allocation variability is maximized when arms are indistinguishable, and it's far more stubborn and costly to mitigate: in general, one cannot achieve small allocation variability without a substantial sacrifice on the regret front. Beyond regret tail risk, \cite{fan2022typical} characterize the LLN and CLT for regret under \texttt{UCB1} and \texttt{Thompson Sampling}, while \cite{kalvit2021closer,kuang2024weak} further study the corresponding pathwise dynamics under these algorithms.

\paragraph{Post-bandit statistical inference.}
Recently, a stream of literature aims to understand whether the arm's sample mean reward computed from post-bandit data exhibits asymptotic normality, so that classic CLT-based inference methods can be applied. A useful technical lens to study this question is whether the bandit algorithm satisfies sampling stability, which is defined as $\frac{N_{i,T}}{n_{i,T}}\xrightarrow{p}1$ for some sequence of non-random values $n_{i,T}$, as $T\rightarrow\infty$ (see \cite{lai1982least} and \cite{praharaj2025instability}). Sampling stability is a key sufficient condition that guarantees the asymptotic normality of the sample mean, and it is recently frequently used to analyze existing algorithms and design new ones.  Along this line, \cite{kalvit2021closer} show the sampling stability of \texttt{UCB1}, and \cite{chen2025characterization} characterize its precise convergence rate.  \cite{han2026thompson} prove the negative for \texttt{Thompson Sampling}. Subsequently, \cite{khamaru2024inference} establish the sample mean reward's asymptotic normality for \texttt{UCB1}, and \cite{halder2025stable} propose ways to tune and stabilize \texttt{Thompson Sampling} while only incurring an additional logarithm factor in worst-case regret.  \cite{praharaj2025instability} prove that sampling stability does not hold for minimax regret-optimal \texttt{MOSS}-family algorithms. Until now, the question of whether simultaneously stable and minimax regret-optimal algorithms exist remains open (see \cite{praharaj2025instability}). Our fundamental lower bound in \Cref{thm:lower-bound} implies a negative answer to this question. Therefore the desired asymptotic normality of an arm's sample mean reward likely fails to hold, or at least can not be established by the techniques in the recent literature.

\paragraph{Economic aspects of learning algorithms.}
More broadly, this paper contributes to an emerging literature on the economic aspects of learning algorithms; see \cite{hartline2026ec} for an overview. This literature aims to understand the (equilibrium) outcomes of algorithmic actors as they are increasingly replacing human actors in the economy (\cite{foster1997calibrated}, \cite{hart2000simple}, \cite{calvano2020artificial}). A central theme is that regret minimization serves as a tractable notion of rationality for algorithmic decision makers. Our work complements this line of research by focusing on an intrinsic property of regret-minimizing algorithms, which we term allocation variability. We show that achieving low regret must entail substantial fluctuations in induced allocations, a feature that is largely absent from existing economic analyses of no-regret dynamics. Such allocational instability is economically relevant in platform operations and market design, where fluctuations in demand or traffic allocation may trigger strategic responses by participating agents (e.g. \cite{lee1997bullwhip}, \cite{cachon2017role}). In this sense, our results provide a new angle for understanding learning algorithms in economic systems.\\
\subsection{Organization of the paper}
The remainder of the paper is organized as follows. {\Cref{sec:preliminary} sets up the MAB model and the performance metric of allocation variability. \Cref{sec:lower-bound} establishes the fundamental trade-off between regret and allocation variability. We then introduce a family of \texttt{UCB-f} algorithms in \Cref{sec:algorithm} that can achieve any point on the Pareto frontier. \Cref{sec:implications} discusses the implications of our results. We prove \Cref{thm:lower-bound} in \Cref{sec:lower-bound-proof0}. \Cref{sec:conclusion} concludes the paper. The proofs of all other results are in the appendix.}

\section{Preliminary}\label{sec:preliminary}
Consider the classic stochastic MAB model. There are $K$ arms. For each arm $i\in[K]$, let $\{X^i_s\}_{s\ge 1}$ be an i.i.d.\ reward sequence with common distribution $\nu_i$. These sequences are mutually independent across arms. We refer to $\nu=(\nu_1,\dots,\nu_K)$ as a \textit{bandit instance}. Over a horizon $T$, a bandit algorithm selects a (potentially random) arm $A_t \in [K]$ at each time $t \in [T]$. Define $N_{i,t}\triangleq \sum_{s=1}^t \mathds{1}\{A_s=i\}$ as the pull counts (number of pulls) of arm $i$ up to (and include) time $t$, and $r_t \triangleq X^{A_t}_{N_{A_t,t}}$ as the reward collected at time $t$. Let $\mathcal{H}_{t-1}\triangleq (A_1,r_1,\ldots,A_{t-1},r_{t-1})$ be the observed history up to time $t-1$. The algorithm is specified by a sequence of decision rules $\{\pi_t\}_{t\in[T]}$, where each $\pi_t(\cdot\mid \mathcal{H}_{t-1})$
is a distribution over $[K]$, and the action is drawn as $A_t \sim \pi_t(\cdot\mid \mathcal{H}_{t-1}).$

The standard performance metric of a bandit algorithm is the regret. In particular, let $\mu_i$ be the mean reward of arm $i$ and $\mu^*\triangleq \max_{i\in[K]}\mu_i$, and define the mean reward gap $\Delta_i\triangleq \mu^*-\mu_i$. Then the (expected pseudo-)regret is given by $R_T\triangleq\sum_{i=1}^K \Delta_i\,\E[N_{i,T}]$. Minimizing $R_T$ is equivalent to maximizing the expected cumulative reward $\mathbb{E}\left[\sum_{t= 1}^T r_t \right].$ 

In addition to the regret, this work introduces a new performance metric that quantifies the variability of an algorithm's pull allocation. Specifically, we define the \emph{allocation variability} as the largest standard deviation of the pull counts $N_{i,T}$ across arms.
\begin{definition}[Allocation Variability] 
An algorithm's \emph{allocation variability} by time $T$ is defined as $S_T\triangleq \max_{i\in[K]} \sd\!\big(N_{i,T}\big) = \max_{i\in[K]} \sqrt{\var\big(N_{i,T}\big)}.$
\end{definition}   

We adopt the minimax framework, evaluating a bandit algorithm by its worst-case performance over a prescribed instance class. Formally, for an instance class $\mathfrak{P}$, define $\mathcal{R}_T(\mathfrak{P})\triangleq \sup_{\nu\in\mathfrak{P}} R_T(\nu)$ and $\mathcal{S}_T(\mathfrak{P})\triangleq \sup_{\nu\in\mathfrak{P}} S_T(\nu)$ as the worst-case (instance-independent) regret and allocation variability, respectively. Throughout, we focus on a sub-Gaussian instance class with bounded means, formalized below.
\begin{definition}[Sub-Gaussian instance class]\label{def:instance-class}
Fix $M, \sigma, \bar{K} > 0$. Let $\mathfrak{P}_{\mathrm{sg}}(M,\sigma, \bar{K})$ denote the class of stochastic bandit instances
$\nu=(\nu_1,\dots,\nu_K)$ with number of arms $2 \le K \le \bar{K}$, such that for each arm $i\in[K]$, the reward distribution $\nu_i$ has mean $\mu_i\in[-M,M]$ and is $\sigma$-sub-Gaussian.
\end{definition}
For brevity, we omit explicit dependence on $M, \sigma$ and $\bar{K}$, and write $\mathfrak{P}_{\mathrm{sg}}$ for the instance class, with the understanding that they are fixed constants and are typically immaterial to our $T$-asymptotic statements. We also suppress the dependence on $\mathfrak{P}_{\mathrm{sg}}$ and write $\mathcal{R}_T$ and $\mathcal{S}_T$ for notational simplicity.

We specify the class of algorithms considered. We first impose two mild regularity conditions: (i) if $\nu_i=\nu_j$ for some $i\neq j$, then $\E[N_{i,T}] = \E[N_{j,T}]$; and (ii) each arm is in expectation pulled at least once (for $T\ge K$). Both of the properties are satisfied by essentially all standard bandit algorithms of interest. Other than that, we impose the following  \emph{active-learning} condition that requires algorithms to have an intention to learn.

\begin{definition}[Active-learning algorithms]\label{def:active-learning}
A bandit algorithm is {active-learning} (over $\mathfrak{P}_{\mathrm{sg}}$) if $\lim_{T \to\infty}  \frac{\mathcal R_T}{T}\ =\ 0$, i.e., it achieves sublinear worst-case regret.
\end{definition}
{\subsection{Additional notation}
To simplify exposition, we sometimes use the classic Bachmann-Landau notation. 
{In particular, for two nonnegative sequences $\{a_T\}$ and $\{b_T\}$, we write $a_T = O(b_T)$ if there exist constants $C>0$ and $T_0$ such that $a_T \le C b_T$ for all $T \ge T_0$, and $a_T = \Omega(b_T)$ if $b_T = O(a_T)$. We write $a_T = \Theta(b_T)$ if $a_T = O(b_T)$ and $a_T = \Omega(b_T)$. Moreover, $a_T = o(b_T)$ means $a_T/b_T \to 0$ as $T\to\infty$, and $a_T = \omega(b_T)$ means $a_T/b_T \to \infty$ as $T\to\infty$. We also use $\tilde{O}(\cdot)$, $\tilde{\Omega}(\cdot)$, and $\tilde{\Theta}(\cdot)$ to suppress factors that are polylogarithmic in $T$.} For any positive integer $M$, $[M]$ denotes $\{1, 2, \dots, M\}$, the set of positive integers up to $M.$ For a random variable $X$, $\sd(X)$ and $\var(X)$ denote its standard deviation and variance, respectively. 
}

\section{{An Impossibility Theorem}}\label{sec:lower-bound}
{
We provide an impossibility theorem that characterizes the fundamental trade-off between learning efficiency (as measured by regret) and allocational stability (as measured by allocation variability).
\begin{theorem}\label{thm:lower-bound}
Over the instance class $\mathfrak{P}_{\mathrm{sg}}$, any active-learning algorithm must satisfy
\[
\liminf_{T\to\infty}\frac{\mathcal{R}_T\cdot\mathcal{S}_T}{T^{3/2}} \;>\; C,
\]
where $C>0$ is a constant that depends only on $\mathfrak{P}_{\mathrm{sg}}$ and is independent of the algorithm.
\end{theorem}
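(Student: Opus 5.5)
We restrict attention to two-armed Gaussian instances $\nu_\Delta=(\mathcal N(0,1),\mathcal N(-\Delta,1))$, $\Delta\in[0,\Delta_{\max}]$, which lie in $\mathfrak{P}_{\mathrm{sg}}$ after rescaling constants. Set $g(\Delta)\triangleq\E_\Delta[N_{2,T}]$. Then $R_T(\Delta)=\Delta g(\Delta)\le \mathcal R_T$, and by regularity condition (i) we have $g(0)=T/2$. Abbreviate $R\triangleq\mathcal R_T$ and $S\triangleq\mathcal S_T$. The plan has three steps. First, a \emph{local variability lemma} converts a change of $g$ over a statistically indistinguishable window into a lower bound on $\sd(N_{2,T})$. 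Second, a chain of such windows starting from $\Delta=0$ shows that $g$ cannot drop quickly unless $S$ is large. Third, active learning forces $g$ to drop from $T/2$ to $R/\Delta$ by a moderate $\Delta$.

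\textbf{Step 1 (local lemma).} Fix $\Delta<\Delta'$ and write $\mathbb P=\mathbb P_\Delta$, $\mathbb Q=\mathbb P_{\Delta'}$ on the history. The two instances differ only on arm 2, so the likelihood ratio is
\[
L=\exp\Bigl(\delta\sum_{s\le N_{2,T}}(X^2_s+\Delta)-\tfrac12\delta^2N_{2,T}\Bigr),\qquad \delta=\Delta-\Delta'.
\]
We then write $g(\Delta')-g(\Delta)=\E_{\mathbb P}[(N-c)(L-1)]$ for any constant $c$; take $c=g(\Delta)$. Cauchy–Schwarz gives
\[
|g(\Delta')-g(\Delta)|\le \sd_\Delta(N_{2,T})\,\sqrt{\E_{\mathbb P}[(L-1)^2]}.
\]
The difficulty is bounding $\E_{\mathbb P}[L^2]$. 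It equals $\E_{\mathbb P'}[e^{\delta^2 N}]$ under a tilted instance (the exponential-martingale identity gives $\E[L^2]=\E_{\Delta+2\delta}[e^{\delta^2N_{2,T}}]$, with the sign handled accordingly). To keep this bounded, we truncate on the event $N_{2,T}\le n$ and choose $\delta^2 n\lesssim1$. On the complement, $N>n$, we need a separate bound. Here I plan to use the regret: at gap $\Delta+2|\delta|$, the pulls of arm 2 satisfy $\E[N]\le R/\Delta$, so a Markov bound on the tail contributes only a small error. Concretely, we take windows of width $\delta\asymp\sqrt{\Delta/R}$, i.e.\ $\delta\asymp1/\sqrt{n}$ with $n= R/\Delta$. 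The resulting statement is
\[
|g(\Delta+\delta)-g(\Delta)|\lesssim S+\text{(small truncation error)}.
\]
The alternative route via $\var(N)\ge(\E_{\mathbb Q}N-\E_{\mathbb P}N)^2/\chi^2(\mathbb Q\|\mathbb P)$ with a truncated $\chi^2$ is the same idea. This step is the main obstacle, because the truncation must be controlled uniformly and the tail of $N$ under a nearby instance has to be dominated using only worst-case regret.

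\textbf{Step 2 (chaining).} Set $\Delta_0=\Delta^\star\asymp\sqrt{1/T}$ and note $g(\Delta^\star)\ge cT$, since near $0$ the instances are indistinguishable and Step 1 applies directly with $n=T$. Next, pick a target $\bar\Delta$ at which $g(\bar\Delta)\le R/\bar\Delta\le T/4$; active learning guarantees $\bar\Delta\asymp R/T\to0$ satisfies this. Along $[\Delta^\star,\bar\Delta]$ the local window width is at least $\delta\gtrsim1/\sqrt{T}$, since $n\le T$. Hence the number of windows needed is at most
\[
m\lesssim \frac{\bar\Delta}{1/\sqrt{T}}\asymp \frac{R}{\sqrt T}.
\]
Telescoping then gives $T/4\le g(\Delta^\star)-g(\bar\Delta)\le m\cdot CS\lesssim RS/\sqrt{T}$, which yields $RS\gtrsim T^{3/2}$.

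\textbf{Step 3 (cleanup).} Windows with $\delta$ adapted to $n=R/\Delta$ only help, and the crude width $1/\sqrt T$ is already enough. Some care is still needed. We must check that $R\ge c\sqrt T$, which holds by the classical minimax lower bound, so that $\bar\Delta\ge\Delta^\star$. We must also check that the truncation errors summed over the $m$ windows stay at most $T/8$. Finally, the constant $C$ depends only on $\sigma,M,\bar K$ through the Gaussian embedding, and taking $\liminf$ gives the theorem.
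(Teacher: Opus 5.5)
Your proof is correct in substance, but your local lemma takes a different route from the paper's. The paper compares $\Delta$ with $\Delta'=\Delta+1/\sqrt{g_T(\Delta)}$ using KL divergence and Bretagnolle--Huber. For the adaptive Gaussian experiment, $D_{\mathrm{KL}}(P_\Delta\|P_{\Delta'})=\tfrac12 g_T(\Delta)(\Delta'-\Delta)^2$ is linear in the \emph{expected} pull count. So the window can adapt to $g_T(\Delta)$ with no exponential moments. The price is twofold: the bound is only on $\max\{S_T(\Delta),S_T(\Delta')\}$, and the adaptive chain $\Delta_{m+1}=\Delta_m+1/\sqrt{g_T(\Delta_m)}$ needs a two-case analysis (one large drop plus the $\Omega(\sqrt T)$ minimax bound, versus averaging over $\tau_T$ steps).

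You use the Hammersley--Chapman--Robbins ($\chi^2$/Cauchy--Schwarz) bound instead. It controls $S_T(\Delta)$ at a single instance but involves $\E[e^{\delta^2 N_{2,T}}]$, which is where your truncation worry comes from. That obstacle is illusory. Take fixed width $1/\sqrt T$, which you already note suffices. Write $L^2$ as the likelihood ratio for twice the shift (mean one under $P_\Delta$) times $e^{N_{2,T}/T}$. Then $N_{2,T}\le T$ gives $\E_\Delta[L^2]\le e$ directly, so $|g_T(\Delta+1/\sqrt T)-g_T(\Delta)|\le\sqrt{e-1}\,S_T(\Delta)$, with no truncation and no use of regret.

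Now chain on the grid $k/\sqrt T$ from $0$ (where $g_T(0)=T/2$) to $m/\sqrt T$ with $m=\lceil 4\mathcal R_T/\sqrt T\rceil$. At that endpoint, $g_T\le \mathcal R_T\sqrt T/m\le T/4$, and the gap is admissible for large $T$ by active learning. This gives $T/4\le m\sqrt{e-1}\,\mathcal S_T$. Finally, $\mathcal R_T\ge c\sqrt T$ makes $m=O(\mathcal R_T/\sqrt T)$. The result is shorter than the paper's argument and has no case split. What the paper's KL route buys is the adaptive window $1/\sqrt{g_T(\Delta)}$, which drives its intuition. Theorem~\ref{thm:lower-bound} does not need it, because the chain only lives where $g_T\ge T/4$.

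Two cleanups. First, drop the truncation/Markov plan entirely. Besides being unnecessary, it would fail wherever it was actually needed. On $\{N_{2,T}>n\}$ the factor $e^{\delta^2 N_{2,T}}$ can be as large as $e^{\delta^2 T}$, while Markov gives only $P(N_{2,T}>n)\le \E N_{2,T}/n=O(1)$. Second, start the chain at $0$ rather than at $\Delta^\star$. Your claim that $g_T(\Delta^\star)\ge cT$ ``because Step 1 applies directly'' is not justified by Step 1, since Step 1 bounds the change by $S_T$, the very quantity you are lower-bounding. A total-variation/Pinsker argument would work, but you do not need it.
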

We prove \Cref{thm:lower-bound} in \Cref{sec:lower-bound-proof0}. A few remarks are in order below.
\paragraph{On the role of active-learning.} The active-learning condition is crucial. Indeed, an algorithm that does not actively learn can trivially achieve $\mathcal{S}_T=0$, and violates the lower bound in  Theorem~\ref{thm:lower-bound}. Consider, for instance, the round-robin rule (see more discussion in Section~\ref{sec: closer look}). 

The lower bound in Theorem~\ref{thm:lower-bound} is asymptotic in the horizon $T$. Improving it to a \textit{finite-time} bound necessarily requires a stronger notion of active-learning. To see this, consider a sequence of algorithms $\{\pi^{(l)}\}_{l\ge 1}$ where $\pi^{(l)}$ plays round-robin up to time $l$ and then switches to an arbitrary bandit algorithm e.g. \texttt{UCB1} from time $l+1$ onward. Each $\pi^{(l)}$ satisfies Definition~\ref{def:active-learning} since it eventually learns and hence has sublinear worst-case regret as $T\to\infty$. However, for any fixed horizon $T$, choosing $l=T$ yields a policy $\pi^{(T)}$ with $\mathcal{S}_T=0$. Consequently, no nontrivial lower bound of the form in Theorem~\ref{thm:lower-bound} can hold uniformly at every finite $T$ unless we strengthen active learning to enforce a uniform learning rate (thereby excluding such ``delayed-learning'' concatenations). We do not pursue this direction in the present work. 

\paragraph{On $K$-dependence.} 
Our impossibility result in Theorem~\ref{thm:lower-bound} is asymptotic in $T$ and it does not capture the dependence on the number of arms $K$ in the spirit of the classic minimax bandit literature. Indeed, we prove our lower bound using a special family of two-armed Gaussian bandit instances with bounded mean and unit variance (see also Section~\ref{sec: worst case} and Section~\ref{sec:lower-bound-proof0} for a detailed description). We leave a sharp characterization of the $K$-dependence to future work.

\subsection{A closer look at different regret-scaling regimes}\label{sec: closer look}
Theorem~\ref{thm:lower-bound} imposes a lower bound on the worst allocation variability, asserting that any active-learning algorithm must satisfy $\mathcal{S}_T = \Omega(\frac{T^{3/2}}{\mathcal{R}_T})$. We next zoom in on specific regret-scaling regimes and discuss the implications of Theorem~\ref{thm:lower-bound}.
\paragraph{The (near)minimax regret-optimal regime: $\mathcal{R}_T = \tilde{O}(\sqrt{T})$.} It is well known that the worst-case regret is lower bounded by $\mathcal{R}_T=\Omega(\sqrt{T})$ \cite{lattimore2020bandit}, and much of the bandit literature focuses on the design and analysis of algorithms achieving (near)minimax-optimal regret $\mathcal{R}_T=\tilde{O}(\sqrt{T})$. A striking implication of Theorem~\ref{thm:lower-bound} is that any such near minimax regret-optimal algorithm must incur nearly maximal allocation variability.
\begin{corollary}\label{cor:tradeoff-minimax-algorithms}
    Any algorithm with worst-case regret $\tilde{O}(\sqrt T)$ satisfies $\mathcal{S}_T = \tilde{\Theta}(T)$. In particular, minimax regret-optimal algorithms that attain $\mathcal{R}_T = \Theta(\sqrt{T})$ suffer a linear-in-$T$ worst-case allocation variability.
\end{corollary}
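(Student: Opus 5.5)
The plan is to read Corollary~\ref{cor:tradeoff-minimax-algorithms} off Theorem~\ref{thm:lower-bound}, adding only a trivial upper bound on $\mathcal{S}_T$.

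\emph{Step 1: the algorithm is active-learning.} Fix an algorithm with $\mathcal{R}_T \le C_1 \sqrt{T}(\log T)^{a}$ for some constants $C_1,a\ge 0$ and all large $T$. Then $\mathcal{R}_T/T \to 0$, so the algorithm is active-learning in the sense of Definition~\ref{def:active-learning}. Hence Theorem~\ref{thm:lower-bound} applies.

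\emph{Step 2: lower bound on $\mathcal{S}_T$.} By Theorem~\ref{thm:lower-bound}, there is $T_0$ such that $\mathcal{R}_T\mathcal{S}_T \ge C T^{3/2}$ for all $T\ge T_0$. Dividing by $\mathcal{R}_T$ gives
\[
\mathcal{S}_T \;\ge\; \frac{C\,T^{3/2}}{C_1\sqrt{T}(\log T)^a} \;=\; \frac{C}{C_1}\,\frac{T}{(\log T)^a},
\]
so $\mathcal{S}_T=\tilde\Omega(T)$. One caveat: if $\mathcal{R}_T=0$ the division is illegitimate. But then $\mathcal{R}_T\mathcal{S}_T=0$ for large $T$, which contradicts the theorem, so this case cannot occur.

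\emph{Step 3: upper bound on $\mathcal{S}_T$.} For every instance and every arm, $0\le N_{i,T}\le T$ deterministically. A random variable supported on $[0,T]$ has variance at most $T^2/4$ (Popoviciu's inequality), so $\sd(N_{i,T})\le T/2$. Taking the maximum over arms and the supremum over $\mathfrak{P}_{\mathrm{sg}}$ gives $\mathcal{S}_T\le T/2$. Combined with Step 2, this yields $\mathcal{S}_T=\tilde\Theta(T)$.

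\emph{Step 4: the exactly minimax-optimal case.} When $\mathcal{R}_T=\Theta(\sqrt T)$, we can take $a=0$ in Step 2. This gives $\mathcal{S}_T\ge (C/C_1)\,T$ for large $T$, so $\mathcal{S}_T=\Theta(T)$, i.e., linear in $T$.

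No step is a real obstacle; all the substance is in Theorem~\ref{thm:lower-bound}, which is assumed.
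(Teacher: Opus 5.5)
Your proposal is correct and follows the paper's own argument. Sublinear $\tilde O(\sqrt{T})$ regret makes the algorithm active-learning, so Theorem~\ref{thm:lower-bound} gives $\mathcal{S}_T=\tilde\Omega(T)$, and the trivial bound $0\le N_{i,T}\le T$ gives $\mathcal{S}_T=O(T)$. Your handling of the $\liminf$, the $\mathcal{R}_T=0$ caveat, and Popoviciu's inequality only spells out details the paper leaves implicit in its one-line justification.
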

Corollary~\ref{cor:tradeoff-minimax-algorithms} follows from Theorem~\ref{thm:lower-bound} and the fact that $N_{i,T}$ is bounded by $T$, which immediately implies a linear-in-$T$ upper bound on its standard deviation. Many existing algorithms are designed for achieving the minimax-optimal regret $\mathcal{R}_T = \Theta(\sqrt{T})$ {(see \Cref{sec:literature_review} for a non-exhaustive list of such algorithms)}. Corollary~\ref{cor:tradeoff-minimax-algorithms} shows that these algorithms all must incur $\Theta(T)$ worst-case allocation variability.

Beyond exact minimax regret optimality, many popular algorithms, including \texttt{Thompson Sampling} \citep{agrawal2012analysis} and \texttt{UCB1} \citep{auer2002finite}, are well known to attain the near-minimax rate $\mathcal{R}_T = O(\sqrt{T\log T})$. For such algorithms, Theorem~\ref{thm:lower-bound} implies that, in the worst case, the pull count $N_{i,T}$ must fluctuate on the order of at least $T/\sqrt{\log T}$.  Relatedly, \citep{kalvit2021closer} study the sampling behavior of \texttt{UCB1} and show that $\frac{N_{i,T}}{T} \xrightarrow{p} 1$. Our theorem complements and, in some sense contrasts with this result, suggesting that the convergence happens at an extremely slow rate of $\Omega(1/\sqrt{\log T})$ in the worst-case. \citet{chen2025characterization} shows that this rate is tight. As for \texttt{Thompson Sampling}, the worst-case allocation variability is even larger than $T/\sqrt{\log T}$. Indeed, \cite{kalvit2021closer} and \cite{han2026thompson} show that its sampling rate $\frac{N_{i,T}}{T}$ does not converge to a constant when the optimal arm is non-unique. 

\paragraph{The weak-learning regime: $\mathcal{R}_T$ is nearly linear.} At the opposite end of the spectrum, we consider algorithms that barely satisfy active learning, in the sense that their worst-case regret is almost linear, e.g. $\mathcal{R}_T = \Theta(\frac{T}{\log T}).$ Theorem~\ref{thm:lower-bound} then implies that such algorithms must still incur worst-case allocation variability at least on the $\sqrt{T}$ scale. 
\begin{corollary}\label{cor:impossibility}
Any active learning policy must have  $\mathcal{S}_T = \omega(\sqrt{T})$.
\end{corollary}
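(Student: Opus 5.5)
Corollary~\ref{cor:impossibility} follows directly from Theorem~\ref{thm:lower-bound} together with the active-learning condition of Definition~\ref{def:active-learning}; the only work is to combine the two asymptotic statements correctly. Fix an active-learning algorithm. By Theorem~\ref{thm:lower-bound}, there exist a constant $C>0$ (depending only on $\mathfrak{P}_{\mathrm{sg}}$) and a horizon $T_0$ such that
\[
\mathcal{R}_T\cdot\mathcal{S}_T \;\ge\; C\,T^{3/2} \qquad\text{for all } T\ge T_0 .
\]
Here we use that the $\liminf$ strictly exceeds $C$, so the inequality eventually holds with $C$ itself. In particular, both $\mathcal{R}_T$ and $\mathcal{S}_T$ are strictly positive for $T\ge T_0$, so we may divide.

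Rearranging gives
\[
\frac{\mathcal{S}_T}{\sqrt{T}} \;\ge\; C\,\frac{T}{\mathcal{R}_T}, \qquad T\ge T_0 .
\]
By Definition~\ref{def:active-learning}, $\mathcal{R}_T/T\to 0$, so $T/\mathcal{R}_T\to\infty$. Hence $\mathcal{S}_T/\sqrt{T}\to\infty$, which is exactly $\mathcal{S}_T=\omega(\sqrt{T})$.

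The only point requiring care is that the argument uses the full strength of active learning, not merely $\mathcal{R}_T=O(T)$. Without the condition $\mathcal{R}_T=o(T)$, one would only get $\mathcal{S}_T=\Omega(\sqrt T)$. Round-robin shows that the corollary genuinely fails without active learning. I do not anticipate any real obstacle, since all the difficulty is contained in Theorem~\ref{thm:lower-bound}.
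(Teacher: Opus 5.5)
Your proof is correct and is the same argument the paper intends: the paper presents Corollary~\ref{cor:impossibility} as an immediate consequence of Theorem~\ref{thm:lower-bound}. Your rearrangement $\mathcal{S}_T/\sqrt{T}\ge C\,T/\mathcal{R}_T\to\infty$, which uses $\mathcal{R}_T=o(T)$, is exactly that deduction.
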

Corollary~\ref{cor:impossibility} implies that any algorithm with even a modest intention to favor the better arm, hence achieving $o(T)$ regret, must incur worst-case allocation variability strictly larger than $\sqrt{T}$. This stands in stark contrast to the no-learning setting, where one can use non-adaptive allocation rules such as round-robin, for which $\mathcal{S}_T$ is essentially $0$.\footnote{When $T$ is not a multiple of $K$, one can randomize only the last few pulls; this introduces at most $O(1)$ allocation variability, which we ignore.} We term this unavoidable $\sqrt{T}$-scale allocation variability faced by any active-learning algorithm the \emph{price of learning}. 

\subsection{When does the worst case occur?}\label{sec: worst case}
Theorem~\ref{thm:lower-bound} does not specify the worst-case instances.  In general, when should we expect an algorithm to incur large allocation variability? To obtain a clearer picture, we focus on the two-armed Gaussian bandits $\nu(\Delta) = \{\mathcal{N}(0, 1), \mathcal{N}(-\Delta, 1)\}$, parameterized by the arm's mean reward gap $\Delta \ge 0.$ We refer to the second arm (mean $-\Delta$) the \textit{inferior arm}. The next result shows that as $T \to \infty$, any active learning algorithm must incur large allocation variability on an instance with $\Delta$ vanishingly small. 
\begin{corollary}\label{cor:hard-instance}
    {Consider an active learning algorithm with worst-case regret $\Theta (T^{\frac{1}{2}+\alpha})$, $\alpha\in[0,\frac{1}{2})$. Then there exists an absolute constant $c > 0$ such that  
    \begin{align*}
        \sup_{\Delta\in[0,c T^{\alpha-1/2}]}S_T(\Delta)=\Omega(T^{1-\alpha}) \ \ \text{as }T\rightarrow\infty.
    \end{align*}} 
\end{corollary}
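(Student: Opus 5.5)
The plan is to work inside the two-armed Gaussian family $\nu(\Delta)=\{\mathcal N(0,1),\mathcal N(-\Delta,1)\}$. These instances lie in $\mathfrak{P}_{\mathrm{sg}}$ for all small $\Delta$, assuming $\sigma\ge 1$ as in the construction behind \Cref{thm:lower-bound}. Write $g(\Delta)=\E_\Delta[N_{2,T}]$. Assume the worst-case regret satisfies $\mathcal R_T\le C_0T^{1/2+\alpha}$. The argument has three steps: pin down $g$ at both ends of a short interval, control how fast $g$ can change by a local two-point argument, and sum along a chain.

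\textbf{Boundary values of $g$.} By the symmetry regularity condition (i), $g(0)=T/2$. By the regret bound, $\Delta\, g(\Delta)=R_T(\nu(\Delta))\le C_0T^{1/2+\alpha}$. Set $\Delta_1=cT^{\alpha-1/2}$ with $c=4C_0$. Then $g(\Delta_1)\le C_0T/c= T/4$, so $g$ decreases by at least $T/4$ over $[0,\Delta_1]$. Since $\alpha<1/2$, we have $\Delta_1\to 0$, so $\Delta_1\le M$ for large $T$.

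\textbf{Local two-point inequality (the key step).} I would show that for any $\Delta<\Delta'$ with $\Delta'-\Delta\le\delta:=1/(2\sqrt T)$,
\[
S_T(\Delta)\vee S_T(\Delta')\;\ge\;\tfrac14\,|g(\Delta)-g(\Delta')|.
\]
First, apply the divergence decomposition to the laws of the history $\mathcal H_T$. Only arm $2$ differs between the two instances, so
\[
\mathrm{KL}(P_\Delta,P_{\Delta'})=\E_\Delta[N_{2,T}]\,(\Delta'-\Delta)^2/2\le T\delta^2/2=1/8.
\]
Pinsker then gives $\mathrm{TV}(P_\Delta,P_{\Delta'})\le 1/4$. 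Next, suppose both standard deviations of $N_{2,T}$ were below $d/4$, where $d=|g(\Delta)-g(\Delta')|$. Consider the event $E=\{|N_{2,T}-g(\Delta)|<d/2\}$. Chebyshev gives $P_\Delta(E)\ge 3/4$ and $P_{\Delta'}(E)\le P_{\Delta'}(|N_{2,T}-g(\Delta')|\ge d/2)\le 1/4$. Hence $\mathrm{TV}\ge 1/2$, a contradiction. This is exactly the ``stretching'' intuition described in the introduction. Note that the crude bound $g\le T$ makes the step size uniform, so no adaptivity in the step is needed. Finally, $S_T(\cdot)\ge \sd(N_{2,T})$ by definition, which transfers the bound to $S_T$.

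\textbf{Chaining.} Partition $[0,\Delta_1]$ into $m=\lceil \Delta_1/\delta\rceil\le 2cT^{\alpha}+1$ consecutive intervals of length at most $\delta$, with endpoints $0=\Delta^{(0)}<\dots<\Delta^{(m)}=\Delta_1$. Telescoping and the two-point inequality give
\[
T/4\le \sum_{k}|g(\Delta^{(k)})-g(\Delta^{(k+1)})|\le 4m\sup_{\Delta\in[0,\Delta_1]}S_T(\Delta).
\]
Therefore $\sup_{[0,\Delta_1]}S_T\ge T/(16m)=\Omega(T^{1-\alpha})$, as claimed.

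\textbf{Expected obstacle.} The main delicacy is the meaning of ``absolute'' for $c$. In this argument $c$ depends on the implied constant $C_0$ in $\mathcal R_T=\Theta(T^{1/2+\alpha})$. I would read the statement in that sense: $c$ is fixed once the algorithm's regret constant is fixed and does not depend on $T$. The lower-bound side of $\Theta$ is never used, only $\mathcal R_T=O(T^{1/2+\alpha})$. The remaining checks, membership of $\nu(\Delta)$ in $\mathfrak P_{\mathrm{sg}}$ and validity of the divergence decomposition for randomized policies, are routine.
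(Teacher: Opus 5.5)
Your proof is correct, and it reaches the corollary by a more direct route than the paper.

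The paper does not argue afresh. It recycles the proof of \Cref{lem:g_tradeoff0}: the adaptive chain $\Delta_{m+1,T}=\Delta_{m,T}+1/\sqrt{g_T(\Delta_{m,T})}$ starts at $0$ and stops at the first index $\tau_T$ with $g_T<T/4$. A case split follows. Either one step drops $g_T$ by at least $T/5$, or averaging over the $\tau_T$ steps yields a step with drop at least $T/(4\tau_T)$, where $\tau_T=O(T^{\alpha})$ because $\tau_T\sqrt{T}/20<\sup_{\Delta}g_T(\Delta)\Delta$. The paper then applies \Cref{lem:S_lower_bound0} (midpoint splitting plus Bretagnolle--Huber) at the resulting witness $\Delta_T$. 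Only afterwards does it check that $\Delta_T$ and $\Delta_T+1/\sqrt{g_T(\Delta_T)}$ lie in $[0,O(T^{\alpha-1/2})]$, using $g_T(\Delta_T)\ge T/4$ and the regret bound.

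You instead fix the interval $[0,4C_0T^{\alpha-1/2}]$ in advance and note that the regret bound alone forces $g$ to fall by $T/4$ across it. You then chain along a uniform grid of mesh $1/(2\sqrt{T})$, using the crude bound $g\le T$ in the KL computation and Pinsker plus Chebyshev in place of Bretagnolle--Huber. This removes the stopping time and the case analysis. It keeps every witness inside the target interval by construction, and it gives the explicit bound $\sup S_T\ge T/(16(8C_0T^{\alpha}+1))$. The adaptive step size buys nothing here: along the paper's chain $g_T\ge T/4$, so its steps are uniform up to a factor of $2$ anyway. The paper uses it because that machinery is already in place for \Cref{thm:lower-bound}.

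Finally, your reading of ``absolute'' matches what the paper's proof actually delivers. There too, $\Delta_T=O(T^{\alpha-1/2})$ is derived from $\Delta_T g_T(\Delta_T)\le\mathcal R_T$, so $c$ depends on the algorithm's regret constant.
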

The proof of Corollary~\ref{cor:hard-instance} follows directly from the argument of Theorem~\ref{thm:lower-bound}, and is deferred to Appendix~\ref{sec:lower-bound-proof}. Corollary~\ref{cor:hard-instance} is consistent with Example~\ref{example:intro}: large allocation variability must happen in some instance where the mean reward gap is small. Moreover, for minimax regret-optimal algorithms, we can strengthen Corollary~\ref{cor:hard-instance} and show that \emph{all} instances with statistically indistinguishable arms are worst case, in the sense that they incur $\Theta(T)$ allocation variability.

\begin{theorem}\label{thm:minimax-worst-case-interval}  
Consider minimax regret-optimal algorithms that satisfy $\mathcal{R}_T \le c_0 \sqrt{T}$ for some $c_0 > 0$. Fix any $c_1 > 0$. Then for any $\Delta \in \left[0, \frac{c_1}{\sqrt{T}}\right]$, there exists a constant $c_2$ that depends only on $c_0, c_1$, such that for  $T$ sufficiently large, $S_T(\Delta) \ge  c_2 T.$
\end{theorem}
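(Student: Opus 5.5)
We work throughout with the two-armed Gaussian family $\nu(\Delta)=\{\mathcal N(0,1),\mathcal N(-\Delta,1)\}$, which lies in $\mathfrak P_{\mathrm{sg}}$. Write $g_T(\Delta)=\E_\Delta[N_{2,T}]$. The plan is to show that under $\nu(\Delta)$, with constant probability, the algorithm behaves as if it were on the symmetric instance $\nu(0)$, and with constant probability as if it were on an instance $\nu(\Delta')$ with $\Delta'=C/\sqrt T$ for a large constant $C$. These two instances force very different allocations, and this gap produces a linear standard deviation.

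\textbf{Step 1: the two anchors.} By the regularity condition (i), $g_T(0)=T/2$. For $\Delta'=C/\sqrt T$, minimax optimality gives $\Delta' g_T(\Delta')\le c_0\sqrt T$, so $g_T(\Delta')\le (c_0/C)\,T$. Taking $C\ge 8c_0$ gives $g_T(\Delta')\le T/8$. By Markov's inequality,
\[
\mathbb P_{\Delta'}\bigl(N_{2,T}\ge T/4\bigr)\le 1/2 .
\]
At $\Delta=0$ the two arms are exchangeable in expectation, so $\E_0[N_{2,T}]=T/2$. Since $N_{2,T}\le T$, it follows that $\mathbb P_0(N_{2,T}\ge T/4)\ge 1/3$.

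\textbf{Step 2: change of measure.} For any $\Delta\in[0,c_1/\sqrt T]$, the divergence decomposition for bandits gives
\[
\mathrm{KL}(\mathbb P_\Delta,\mathbb P_{\Delta''})=\tfrac12(\Delta-\Delta'')^2\,\E_\Delta[N_{2,T}]\le \tfrac12(\Delta-\Delta'')^2\, T ,
\]
which is $O((c_1+C)^2)$ for both $\Delta''=0$ and $\Delta''=\Delta'$. So $\mathbb P_\Delta$ is within a bounded KL distance of both anchors. Because the KL bound is a constant and not small, Pinsker's inequality alone is not enough. Instead I will use the Bretagnolle–Huber inequality, or more precisely the fact that bounded KL implies $\mathbb P_\Delta(E)\ge c\,\mathbb P_{\Delta''}(E)^{a}$ for constants $a,c$ depending on the KL bound; the data-processing inequality for binary events gives such a lower bound.

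Apply this with $E=\{N_{2,T}\ge T/4\}$ against $\nu(0)$, and with $E^c=\{N_{2,T}<T/4\}$ against $\nu(\Delta')$. This shows that both events have $\mathbb P_\Delta$-probability at least some $p=p(c_0,c_1)>0$.

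\textbf{Step 3: the gap between the events.} Two events of constant probability under $\mathbb P_\Delta$ are not enough by themselves; I need the values of $N_{2,T}$ on them to be separated by order $T$. The threshold $T/4$ alone does not give this, since $N_{2,T}$ could concentrate near $T/4$. So I sharpen both sides with two separated thresholds.

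Under $\nu(0)$, the bound $\E_0[N_{2,T}]=T/2$ gives $\mathbb P_0(N_{2,T}\ge 3T/8)\ge 1/5$, by the same bounded-variable argument as in Step 1.

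Under $\nu(\Delta')$, choose $C$ large enough that $g_T(\Delta')\le T/16$. Then Markov's inequality gives $\mathbb P_{\Delta'}(N_{2,T}\ge T/4)\le 1/4$, hence $\mathbb P_{\Delta'}(N_{2,T}<T/4)\ge 3/4$.

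Transferring both statements to $\mathbb P_\Delta$ as in Step 2 yields
\[
\mathbb P_\Delta(N_{2,T}\ge 3T/8)\ge p \quad\text{and}\quad \mathbb P_\Delta(N_{2,T}<T/4)\ge p .
\]
A random variable that places mass at least $p$ on each of two sets separated by $T/8$ has variance at least $\tfrac{p}{2}(T/16)^2$. Indeed, whatever its mean $m$, one of the two sets lies at distance at least $T/16$ from $m$. This gives $S_T(\Delta)\ge c_2T$ with $c_2=\sqrt{p/2}/16$.

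\textbf{Main obstacle.} The delicate point is the transfer step, where the KL divergence is a constant rather than small. I expect the clean tool to be the binary data-processing bound
\[
\mathrm{kl}\bigl(\mathbb P_{\Delta''}(E),\mathbb P_\Delta(E)\bigr)\le \mathrm{KL}(\mathbb P_{\Delta''},\mathbb P_\Delta),
\]
where $\mathrm{kl}$ denotes the KL divergence between Bernoulli distributions. If $\mathbb P_{\Delta''}(E)\ge q$ is fixed and $\mathbb P_\Delta(E)\to0$, the left side tends to infinity. So $\mathbb P_\Delta(E)$ is bounded below by a constant depending only on $q$ and $(c_1+C)^2$, and hence only on $c_0$ and $c_1$. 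The KL direction needs care: the decomposition must be taken under $\mathbb P_{\Delta''}$, using $\E_{\Delta''}[N_{2,T}]\le T$, and this still gives a bound of order $(c_1+C)^2$. Finally, "$T$ sufficiently large" is needed only so that $\Delta'=C/\sqrt T\le M$ and the instances remain in the class.
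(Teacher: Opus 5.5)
Your proof is correct. It shares the paper's skeleton: one anchor at $\Delta=0$ with $g_T(0)=T/2$, a second anchor at a gap of order $c_0/\sqrt T$ where the regret bound pushes $g_T$ down by a constant fraction of $T$, and a change of measure whose cost is governed by $T(\Delta-\Delta'')^2=O(1)$. The execution differs in two ways.

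\textbf{Decomposition.} The paper splits into cases according to whether $g_T(\Delta)$ is $T/12$-far from $g_T(0)$ or from $g_T(3c_0/\sqrt T)$. In the case it treats, it bounds $S_T(\Delta)\ge \tfrac12|g_T(\Delta)-g_T(0)|\sqrt{P_\Delta(\mathcal B)}$, with $\mathcal B$ the event that $N$ lies beyond the midpoint, and transfers only that one event. You instead transfer two fixed events with separated thresholds, one from each anchor. You then use the elementary fact that mass $p$ on each of two sets at distance $T/8$ forces variance at least $p(T/16)^2$, whatever the mean is (your $\tfrac p2$ is a harmless weakening). This removes the case analysis, including the cases the paper omits as ``similar'', and you never need to locate $g_T(\Delta)$.

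\textbf{Transfer tool.} The paper writes the likelihood ratio of the full reward table explicitly. It truncates on $\mathcal A=\{\sum_{j\le T}X^2_j<3\sqrt T\}$, where the ratio is at least $e^{-c_1^2/2-3c_1}$, and uses $P_0(\mathcal A)=\Phi(3)$. This is Gaussian-specific and gives an explicit constant by direct computation. You use the divergence decomposition plus the binary data-processing inequality $\mathrm{kl}\bigl(P_{\Delta''}(E),P_\Delta(E)\bigr)\le \mathrm{KL}(P_{\Delta''}\|P_\Delta)\le \tfrac T2(\Delta-\Delta'')^2$. This needs only a KL bound, so it carries over verbatim to any reward family with per-sample KL of order $(\Delta-\Delta'')^2$. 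It also covers randomized policies automatically. The paper's write-up treats $N$ as a function of the reward table $\mathbf X$ alone, which would need internal randomness added to cover randomized policies.

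\textbf{A small precision point.} Data processing does not give a power law $c\,q^{a}$; that form would come from a R\'enyi/H\"older argument. Writing $q'=P_{\Delta''}(E)$ and $D$ for the KL bound, data processing gives $q'\log\bigl(q'/P_\Delta(E)\bigr)\le D+e^{-1}$. Hence
\[
P_\Delta(E)\ \ge\ q\exp\bigl(-(D+e^{-1})/q\bigr)\qquad\text{whenever } q'\ge q.
\]
Since your $q$'s are fixed constants ($1/5$ and $3/4$), this constant lower bound is all you need.
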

The proof of Theorem~\ref{thm:minimax-worst-case-interval} follows the same overall strategy as that of Theorem~\ref{thm:lower-bound}. In particular, under a minimax regret-optimal algorithm, the expected pull count of the inferior arm must be sensitive to the gap $\Delta$, experiencing significant decrease from $T/2$ as $\Delta$ increases from zero to $O(1/\sqrt{T})$. We then apply a change-of-measure argument to show that, in this local regime, nearby instances are information-theoretically hard to distinguish. This translates into a large local change rate (sensitivity) of the expected pull count with respect to $\Delta$ whenever $\Delta= O(1/\sqrt{T})$, which in turn can be related to the large allocation variability in that range. The proof is provided in Appendix~\ref{sec: proof of minimax worst case}.

A special case covered by Theorem~\ref{thm:minimax-worst-case-interval} is $\Delta=0$, i.e., the instance with two identical arms. The theorem implies that this instance must incur maximal allocation variability under any minimax regret-optimal algorithm. Another important regime is when $\Delta$ is strictly bounded away from $0$, particularly, $\Delta \in \left[\frac{c_3}{\sqrt{T}},\,\frac{c_1}{\sqrt{T}}\right]$ for some $ 0<c_3<c_1.$ In this regime, a minimax regret-optimal algorithm not only attains its worst-case allocation variability according to Theorem~\ref{thm:minimax-worst-case-interval}, it also attains its worst-case regret $\Theta(\sqrt{T})$. In this sense, the regime $\Delta=\Theta(1/\sqrt{T})$ is worst-case for minimax regret-optimal algorithms with respect to both regret and allocation variability.
}

\section{Achievability and Pareto Frontier}\label{sec:algorithm}
{
Theorem~\ref{thm:lower-bound} establishes that the performance of any bandit algorithm, when evaluated in the two-dimensional regret--allocation-variability plane, must lie in the region $\mathcal{R}_T\cdot\mathcal{S}_T=\Omega\!\left(T^{3/2}\right).$ In this section, we show that this lower bound is essentially tight (up to polylog $T$ factors). To this end, we introduce a family of algorithms, which we refer to as \texttt{UCB-f}; see Algorithm~\ref{algo: ucb_f}. The algorithm \texttt{UCB-f} is a natural extension of the celebrated \texttt{UCB1} algorithm~\cite{auer2002finite}, which is recovered by taking $f(t)=\sqrt{2\log t}$.
\begin{algorithm}
    \caption{{\sf UCB-f}}
    \label{algo: ucb_f}
    \begin{algorithmic}[1]
        \State \textbf{Input:} Exploration function $f(\cdot)$.
        \State \textbf{Initialization:} For $t=1,\dots,K$, pull each arm $i$ once and set
        $N_{i,K}=1$, $\widehat\mu_{i, K}=X^{i}_1$ for all $i\in[K]$. 
        \State \textbf{for} $t=K+1,\dots,T$ \textbf{do}
        \State \quad Select $A_t\in \arg\max_{i\in[K]}\left\{\widehat\mu_{i,t-1}+\frac{f(t)}{\sqrt{N_{i,t-1}}}\right\}$.
        \State \quad Update $N_{i,t}\leftarrow N_{i,t-1}+\mathds{1}\{A_t=i\}$ for all $i\in[K]$.
        \State \quad Update $\widehat\mu_{i,t}\leftarrow \frac{1}{N_{i,t}}\left(\widehat\mu_{i, t-1} N_{i, t-1}+r_t\mathds{1}\{A_t=i\}\right)$ for all $i\in[K]$.
        \State \textbf{end for}
    \end{algorithmic}
\end{algorithm}
As discussed in Sections~\ref{sec: closer look}--\ref{sec: worst case}, \texttt{UCB1} attains near-minimax optimal regret, yet can exhibit large allocation variability when the arms are indistinguishable. The next theorem shows that tuning $f(\cdot)$ in \texttt{UCB-f} yields a smooth trade-off between the two objectives.

\begin{theorem}\label{thm:ucb-f}
For any exploration function $f(\cdot)$ such that both $f(t)/\log t$ and $\sqrt{t}/f(t)$ are non-decreasing in $t$, \texttt{UCB-f} achieves, over the instance class $\mathfrak{P}_{\mathrm{sg}}$,
\[
\mathcal{R}_T = O\!\left(T^{\frac12}f(T)\right),
\qquad
\mathcal{S}_T = O\!\left(\frac{T \log T}{f(T)}\right).
\]
\end{theorem}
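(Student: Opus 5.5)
Two separate bounds are needed: a worst-case regret bound, which is a standard adaptation of the \texttt{UCB1} analysis, and an allocation-variability bound, which is the new part. Throughout, let $\mathcal{G}$ be the good event on which, for every arm $i$, every $t\le T$ and every $n\le T$, the empirical mean of the first $n$ samples of arm $i$ is within $\sigma\sqrt{2\log(2KT^3)/n}$ of $\mu_i$. By sub-Gaussian concentration and a union bound, $\mathbb{P}(\mathcal{G}^c)=O(T^{-2})$. The hypothesis that $f(t)/\log t$ is non-decreasing makes $f(t)\gtrsim \log t\gg\sqrt{\log T}$ for all large $t$, so the confidence radius $f(t)/\sqrt{n}$ dominates the concentration width (up to constants absorbed by $\sigma$ and $M$). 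The hypothesis that $\sqrt t/f(t)$ is non-decreasing gives $f(t)\le f(T)\sqrt{t/T}$ for $t\le T$ and keeps the exploration bonus sublinear. Note also that $\mathcal{G}^c$ contributes at most $O(T^{-2})\cdot T^2=O(1)$ to any variance and $O(1/T)$ to the regret, so it can be ignored in both parts.

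\textbf{Regret.} On $\mathcal{G}$, I follow the \texttt{UCB1} argument. If a suboptimal arm $i$ is pulled at time $t$, then its index exceeds the optimal arm's index, and the optimal arm's index is at least $\mu^*$. This forces $\Delta_i\le 2f(t)/\sqrt{N_{i,t-1}}\le 2f(T)/\sqrt{N_{i,t-1}}$, using that $f$ is non-decreasing (which follows from $f(t)/\log t$ non-decreasing). Hence $N_{i,T}\le 4f(T)^2/\Delta_i^2+1$. Splitting arms at the threshold $\Delta_i\gtrless f(T)/\sqrt{T}$ then gives $\sum_i\Delta_i\mathbb{E}N_{i,T}=O\bigl(\sqrt T f(T)+K f(T)^2\sqrt T/f(T)\bigr)=O(\sqrt T f(T))$. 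The bounded means handle the $\mathcal{G}^c$ term.

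\textbf{Allocation variability.} Since $\var(N_{i,T})\le \mathbb{E}[(N_{i,T}-a)^2]$ for any deterministic $a$, it suffices to show that on $\mathcal{G}$ each $N_{i,T}$ lies in a deterministic interval of length $O(T\log T/f(T))$. The key step is a sandwich on $\mathcal{G}$ using the last time $t_i\le T$ at which arm $i$ is pulled. At $t_i$, arm $i$'s index is at least every other arm's index. Define $\phi_j(n)\triangleq\mu_j+f(T)/\sqrt{n}$. On $\mathcal{G}$, each index at time $t$ differs from the corresponding $\phi_j$ (with $f(t)$ in place of $f(T)$) by at most $\sqrt{\log T/n}$. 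I want to show that all arms' ``deterministic indices'' at time $T$ are within a small error of a common level $\lambda$, so that each $N_{i,T}$ is pinned to within a small error of the deterministic solution $n_i^*$ of $\phi_i(n_i)=\lambda$ with $\sum_i n_i^*=T$.

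The hardest part is exactly this: the index uses $f(t)$ at the random time $t_i$ rather than $f(T)$, and errors must be converted into count errors. First I would show that every arm with $n_i^*\ge$ polylog is pulled in the final $O(T/f(T)\cdot\log T)$ rounds. Because $f$ grows slowly relative to $\sqrt t$, the bonus ratio $f(t)/f(T)$ changes by a factor $1-O(\log T/f(T))$ over such a window; this is where $f(t)/\log t$ non-decreasing is used. Then a perturbation of size $\epsilon$ in index at level $n$ corresponds to a count perturbation of roughly $\epsilon\, n^{3/2}/f(T)$. With $\epsilon=\sqrt{\log T/n}$ plus the drift $f(T)/\sqrt n\cdot O(\log T/f(T))$, this gives $|N_{i,T}-n_i^*|=O(n\sqrt{\log T}/f(T)+n\log T/f(T))=O(T\log T/f(T))$.

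Arms with small $n_i^*$ are handled directly: if they are rarely pulled, $N_{i,T}$ is at most polylog plus the same error. Summing the $O(1)$ contribution of $\mathcal{G}^c$ gives $\mathcal{S}_T=O(T\log T/f(T))$. If the last-pull window argument proves too delicate, my fallback is a potential argument. I would compare each pair of arms at the final pulls, using monotonicity of $n\mapsto f(t)/\sqrt n$ and the balance condition $\sum_i N_{i,T}=T$, to bound the spread directly.
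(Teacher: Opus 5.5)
\textbf{Regret.} Your regret bound is fine, and it takes a more direct route than the paper: a \texttt{UCB1}-style argument on a good event. The paper instead deduces $\E N_{i,T}\le 4n_i$ from its tail lemma and then uses $\sup_\nu n_i\Delta_i=O(\sqrt T f(T))$. One patch is needed. On $\mathcal G$ the bonus dominates the width $\sigma\sqrt{2\log(2KT^3)/n}$ only once $f(t)\ge\sigma\sqrt{2\log(2KT^3)}$, i.e.\ after roughly $e^{O(\sqrt{\log T})}=T^{o(1)}$ rounds. Those early rounds cost only $o(\sqrt T)$ regret.

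\textbf{Allocation variability: the gap.} The gap is exactly at the step you call hardest. Your sandwich compares each arm's index at its random last-pull time $t_i$ with the horizon-$T$ fluid solution. That is legitimate only if $f(t_i)\approx f(T)$ for every arm, i.e.\ if every arm is pulled in the final window of length $W=O(T\log T/f(T))$. You assert this but do not prove it, and it is not routine.

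Suppose arm $k$ is idle after $t_k\le T-W$. Its last-pull inequality involves the other arms' counts $N_{j,t_k-1}$, not $N_{j,T}$. With $K\ge 3$ you cannot relate the two without knowing how the intermediate pulls were split among the other arms. That amounts to already knowing the configuration is near its fluid value at an earlier, random time, which is the very statement being proved. Your ``potential argument'' fallback is too vague to fill this in.

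Two smaller points:
\begin{itemize}
\item The window estimate $f(t)/f(T)\ge\sqrt{t/T}\ge 1-O(\log T/f(T))$ comes from $\sqrt t/f(t)$ being non-decreasing, not from $f(t)/\log t$.
\item Converting index errors to count errors arm by arm, using each arm's own error, skips how the common level $\lambda$ is pinned. Pinning it by the worst per-arm error would cost the top arm an extra factor $\sqrt T/f(T)$. You must use $\sum_i N_{i,T}=T$ carefully there.
\end{itemize}

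\textbf{The missing idea.} What makes all of this unnecessary is to compare not with $n^T$ but with the time-indexed fluid $n^t$, which solves the balance equations with $f(t)$.
\begin{itemize}
\item On $\{N_{i,T}>n_i^T+q\}$, take the time $t$ at which arm $i$ is pulled with $N_{i,t-1}=n_i^T+q$.
\item Monotonicity of $t\mapsto n_i^t$ (which uses that $f$ and $\sqrt t/f(t)$ are non-decreasing) gives $N_{i,t-1}\ge n_i^{t-1}+q$.
\item A pigeonhole argument then yields an arm $j$ with $N_{j,t-1}\le n_j^{t-1}-q/(K-1)$.
\item The fluid balance at time $t$ has $f(t)$ on both sides, so arm $i$ winning the index comparison is a large-deviation event once $q\ge\frac{\log T}{f(T)}(n_i\wedge n_2)$. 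This threshold dominates $n_i^t\log t/f(t)$ because $n_i^t/f(t)$ is non-decreasing.
\item A union bound over $(t,j,m)$ gives a $q^{-4}$ upper tail.
\item The lower tail follows from the upper tails via $\sum_i N_{i,T}=T$.
\end{itemize}
This route never needs last-pull times, drift control, or an explicit pinning of $\lambda$.
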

The analysis of the \texttt{UCB-f} algorithm relies on a novel fluid approximation for the pull counts $(N_{i,t})_{i\in[K]}$ and a careful perturbation analysis that bounds the probability of deviation from this approximation; these techniques may be of independent interest. We defer the proof of Theorem~\ref{thm:ucb-f} to Appendix~\ref{sec: proof-ucb-f}.

\paragraph{Achievable region and Pareto frontier.} 
As an immediate implication of Theorem~\ref{thm:ucb-f}, every point $(\mathcal{R}_T,\mathcal{S}_T)$ on the curve $\mathcal{R}_T\cdot \mathcal{S}_T={\Theta}(T^{3/2})$ is achievable (up to polylog $T$ factors) by \texttt{UCB-f}. Combining with Theorem~\ref{thm:lower-bound}, we conclude that the boundary $\mathcal{R}_T\cdot \mathcal{S}_T=\tilde{\Theta}(T^{3/2})$ tightly characterizes the Pareto frontier of active-learning algorithms in the $(\mathcal{R}_T,\mathcal{S}_T)$ plane, as illustrated by \Cref{fig:pareto-frontier2} (a). 
\begin{corollary}\label{cor:pareto}
 The boundary $\mathcal{R}_T\cdot\mathcal{S}_T=\Theta(T^{3/2})$ forms the Pareto frontier (up to polylog factors) between worst-case regret and worst-case allocation variability for active learning algorithms.
\end{corollary}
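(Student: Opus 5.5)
The plan is to combine the lower bound of Theorem~\ref{thm:lower-bound} with the achievability result of Theorem~\ref{thm:ucb-f}. Corollary~\ref{cor:pareto} makes two claims. The first is that no active-learning algorithm lies strictly below the curve $\mathcal{R}_T\cdot\mathcal{S}_T=\Theta(T^{3/2})$. The second is that every point on that curve is attained by some algorithm, up to polylogarithmic factors. We parametrize the curve by the regret exponent $\mathcal{R}_T=T^{1/2+\alpha}$ with $\alpha\in[0,\tfrac12)$. The endpoint $\alpha=\tfrac12$ corresponds to linear regret, which is excluded by the active-learning requirement.

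For the lower half, we apply Theorem~\ref{thm:lower-bound} directly. It gives $\liminf_T \mathcal{R}_T\mathcal{S}_T/T^{3/2}>C$ for every active-learning algorithm, so every such algorithm satisfies $\mathcal{R}_T\mathcal{S}_T=\Omega(T^{3/2})$. In particular, no algorithm achieves $\mathcal{R}_T\mathcal{S}_T=o(T^{3/2})$, and none achieves $\mathcal{R}_T\mathcal{S}_T=O(T^{3/2}/\mathrm{polylog}\,T)$ with a diverging polylog factor either.

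For achievability, fix a target $\alpha\in[0,\tfrac12)$ and run \texttt{UCB-f} with an exploration function chosen as follows:
\begin{itemize}
\item for $\alpha>0$, take $f(t)=t^{\alpha}$;
\item for $\alpha=0$, take $f(t)=\log t$ (adjusted to $f(t)=\max(\log t,1)$ near small $t$).
\end{itemize}
We check the hypotheses of Theorem~\ref{thm:ucb-f}. The ratio $f(t)/\log t=t^\alpha/\log t$ is non-decreasing for $t\ge e^{1/\alpha}$. The ratio $\sqrt t/f(t)=t^{1/2-\alpha}$ is non-decreasing because $\alpha<\tfrac12$. The monotonicity failure at small $t$ is handled by redefining $f$ on an initial constant-length segment, which changes nothing asymptotically.

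Theorem~\ref{thm:ucb-f} then yields
\[
\mathcal{R}_T=O(T^{1/2+\alpha})\quad\text{and}\quad \mathcal{S}_T=O(T^{1-\alpha}\log T),
\]
so $\mathcal{R}_T\mathcal{S}_T=O(T^{3/2}\log T)$. Since $\mathcal{R}_T=o(T)$, the algorithm is active-learning, and Theorem~\ref{thm:lower-bound} then forces $\mathcal{R}_T\mathcal{S}_T=\Omega(T^{3/2})$. Hence this \texttt{UCB-f} instance lies on $\mathcal{R}_T\mathcal{S}_T=\tilde\Theta(T^{3/2})$.

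Reading the same bounds in the other direction shows that both coordinates are individually tight up to polylogarithmic factors. If $\mathcal{R}_T\ll T^{1/2+\alpha}$, the lower bound forces $\mathcal{S}_T\gg T^{1-\alpha}$, and symmetrically. This gives Pareto-efficiency up to polylog factors.

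The main obstacle is the formal sense of ``Pareto frontier'' when the worst-case curves may oscillate in $T$ rather than follow pure powers. I would state the corollary along the power-law parametrization above. Intermediate rates between $\sqrt{T}$ and $T$, such as $T/\log T$, can be handled the same way: choose $f$ with $T^{1/2}f(T)$ equal to the target regret and verify the two monotonicity conditions.

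The endpoint $\alpha=0$ also needs separate care. There the regret is $\sqrt{T}\log T$ rather than exactly $\sqrt T$, which is still on the frontier up to polylog factors, consistent with Corollary~\ref{cor:tradeoff-minimax-algorithms}.
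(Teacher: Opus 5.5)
Your proposal is correct and follows the paper's own argument. The paper obtains Corollary~\ref{cor:pareto} by combining the lower bound of Theorem~\ref{thm:lower-bound} with the \texttt{UCB-f} upper bounds of Theorem~\ref{thm:ucb-f}; your explicit choice $f(t)=t^{\alpha}$ (or $f(t)=\log t$ at $\alpha=0$), and your observation that each coordinate is then individually tight up to a $\log T$ factor, fill in details the paper treats as immediate.

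One minor point: no redefinition of $f$ can make both monotonicity hypotheses of Theorem~\ref{thm:ucb-f} hold for every small $t$. Their product $\sqrt{t}/\log t$ is decreasing on $(1,e^{2})$, so the hypotheses must be read as holding only for $t$ beyond a constant, which is also how the paper implicitly treats its own choices of $f$.
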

This optimal regret--allocation-variability trade-off is attained by placing greater weight on exploration relative to exploitation, namely by tuning the exploration function $f(\cdot)$ in \texttt{UCB-f} to grow faster. Related design principles have appeared in prior work on mitigating regret tail risk (e.g., Section~9.2 of~\cite{lattimore2020bandit} and~\cite{fan2021fragility, simchi2023regret}; see also \Cref{sec:literature_review}). However, analyzing allocation variability requires techniques that differ substantially from existing regret tail analyses. The latter primarily controls the large-deviation exponent of the tail probabilities of $N_{i,T}$. In contrast, bounding allocation variability requires uniform control over $N_{i,T}$ across its entire range; see Appendix~\ref{sec: proof-ucb-f}. More broadly, similar stabilization ideas may be applied to other algorithms, such as \texttt{Thompson Sampling} (see~\cite{halder2025stable}) or \texttt{EXP3}. We conjecture that suitably modified variants can similarly achieve Pareto-efficient trade-offs as in \Cref{thm:ucb-f}, and leave a formal investigation to future work.

The next result formalizes that any $(\mathcal R_T, \mathcal S_T)$ pair above the Pareto frontier is essentially attainable. 
\begin{proposition}\label{prop:pareto-region-achievability}
    For any $\alpha\in(0,\frac{1}{2})$, $\beta\in(0,\alpha]$, there exists an algorithm attaining (over $\mathfrak{P}_{\text{sg}}$) 
    \begin{align*}
        \mathcal R_T=\Theta(T^{\frac{1}{2}+\alpha}), \ \ \ \mathcal S_T=\tilde \Theta(T^{1-\alpha+\beta}).
    \end{align*}
\end{proposition}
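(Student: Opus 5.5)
We will build the algorithm by mixing a Pareto-efficient \texttt{UCB-f} run with a small amount of injected randomness. First we choose the exploration function $f(t)=t^{\alpha}$. Both $f(t)/\log t$ and $\sqrt t/f(t)=t^{1/2-\alpha}$ are eventually non-decreasing, since $\alpha<\frac12$; if needed we modify $f$ on an initial segment. Theorem~\ref{thm:ucb-f} then gives $\mathcal R_T=O(T^{1/2+\alpha})$ and $\mathcal S_T=O(T^{1-\alpha}\log T)$. By Theorem~\ref{thm:lower-bound} the product must be $\Omega(T^{3/2})$, so $\mathcal R_T=\Omega(T^{1/2+\alpha}/\log T)$. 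To pin down $\mathcal R_T=\Theta(T^{1/2+\alpha})$ exactly, we will instead get the matching lower bound directly. On a two-armed Gaussian instance with $\Delta=T^{\alpha-1/2}$, the UCB index forces the inferior arm to be pulled until $f(T)/\sqrt{N}\gtrsim\Delta$, that is, $N_{2,T}\gtrsim f(T)^2/\Delta^2=T$ up to constants. This gives regret $\Omega(T^{1/2+\alpha})$; the argument is a short concentration argument following the fluid approximation used for Theorem~\ref{thm:ucb-f}.

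Next we raise the allocation variability to $T^{1-\alpha+\beta}$ without changing the regret order. The algorithm draws, at time $0$, an independent Bernoulli$(1/2)$ coin $B$ and sets $L=\lfloor T^{1-\alpha+\beta}\rfloor$. If $B=1$, it devotes $L$ pulls to arm $1$; if $B=0$, it devotes them to arm $2$ (in general, a uniformly random arm). These forced pulls are placed at deterministic times, say the first $L$ rounds, so they are also used as data. The remaining $T-L$ rounds run \texttt{UCB-f} with $f(t)=t^\alpha$, using all observed samples.

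The regret of the forced phase is at most $L\cdot\max_i\Delta_i\le 2M T^{1-\alpha+\beta}$. This is $O(T^{1/2+\alpha})$ exactly when $1-\alpha+\beta\le\frac12+\alpha$, i.e.\ $\beta\le 2\alpha-\frac12$. That condition does not cover all $\beta\le\alpha$, so the forced allocation must be cheaper. We therefore restrict the forced deviation to a set of rounds whose regret impact is controlled. The algorithm still randomizes which arm gets the extra $L$ pulls, but forces them only while the arm's UCB index is within the confidence width of the leader. Equivalently, it uses a randomized exploration function $f_B$, taking $f(t)=t^\alpha$ for one arm and $f(t)=\kappa t^{\alpha}$ for the other, with the role chosen by $B$. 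The key step is this one: on near-tied instances ($\Delta\lesssim T^{\alpha-1/2}$), each pull costs at most $T^{\alpha-1/2}$. So $L$ extra pulls cost $L T^{\alpha-1/2}=T^{1/2+\beta}\le T^{1/2+\alpha}$, which holds because $\beta\le\alpha$. On instances with larger gaps, the index test automatically stops the forced pulls after $O(f(T)^2/\Delta^2)$ of them, so the regret stays $O(T^{1/2}f(T))$ by the same computation as in Theorem~\ref{thm:ucb-f}.

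For the variability bounds, the upper bound comes from the decomposition $\var(N_{i,T})\le 2\var(\text{UCB part})+2\var(\text{forced part})$. Theorem~\ref{thm:ucb-f} bounds the first term by $\tilde O(T^{2-2\alpha})$, and the second is at most $L^2$. For the lower bound, we evaluate at $\Delta=0$. Conditional on $B$, the two arms differ by a shift of order $L$ in expected pull count, because the asymmetric index makes the favored arm receive about $L$ more pulls. Conditional variance is nonnegative, so the law of total variance gives $\var(N_{1,T})\ge\var(\E[N_{1,T}\mid B])\gtrsim L^2$. The main obstacle is calibrating $\kappa$, or the index-gated forcing, so that the conditional mean shift is genuinely $\Theta(T^{1-\alpha+\beta})$ at $\Delta=0$ while the regret stays $O(T^{1/2+\alpha})$ uniformly in $\Delta$. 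Our first attempt is to choose the gate width so that forcing continues only while $|\hat\mu_1-\hat\mu_2|\le T^{\alpha-1/2}$, and then verify both claims via the fluid approximation for \texttt{UCB-f}. Polylogarithmic slack is absorbed into the $\tilde\Theta$.
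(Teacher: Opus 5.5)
\textbf{There is a genuine gap: the step that carries the whole argument is left open.} You discard the plain forced-pull scheme. After that, you never fix $\kappa$ or the gate. You also never show that at $\Delta=0$ the conditional means $\E[N_{1,T}\mid B]$ differ by $\Theta(T^{1-\alpha+\beta})$ while the regret stays $O(T^{1/2+\alpha})$ and the variability stays $\tilde O(T^{1-\alpha+\beta})$ uniformly over $\mathfrak{P}_{\mathrm{sg}}$. None of this follows from \Cref{thm:ucb-f}:
\begin{itemize}
\item A constant $\kappa\neq 1$ gives the favored arm $\kappa^2$ times as many fluid pulls at $\Delta=0$. That is a $\Theta(T)$ shift, which overshoots whenever $\beta<\alpha$.
\item You would therefore need $\kappa_T=1+\Theta(T^{\beta-\alpha})$, together with a new version of \Cref{lem:N_i_tail_bound} for arm-dependent exploration functions.
\item Your decomposition $\var(N_{i,T})\le 2\var(\text{UCB part})+2\var(\text{forced part})$ does not apply, because in the asymmetric-index algorithm there is no separable forced part.
\end{itemize}
The discarded scheme also fails for a second reason. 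The forced pulls are fed into \texttt{UCB-f}'s counts. At $\Delta=0$, once the forced phase ends, the under-sampled arm's bonus $f(t)/\sqrt{N_{i,t-1}}$ dominates, and \texttt{UCB-f} pulls that arm until the counts rebalance. So for $L\le T/2$ the coin is essentially washed out of $\E[N_{1,T}\mid B]$. The same self-correction undermines any gated forcing whose pulls \texttt{UCB-f} sees. The asymmetric-index route may be salvageable, but as written it is a plan, not a proof.

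\textbf{The missing idea is to enter the interior of the region from the other side.} Adding variability without paying regret is hard, but adding regret without adding variability is trivial. The paper runs \texttt{UCB-f} with $f(t)=t^{\alpha-\beta}\log t$ for the first $T-\lceil T^{1/2+\alpha}\rceil$ rounds, then plays a deterministic round-robin.
\begin{itemize}
\item By \Cref{thm:ucb-f}, the first phase has regret $O(T^{1/2+\alpha-\beta}\log T)=o(T^{1/2+\alpha})$, using $\beta>0$, and variability $O(T^{1-\alpha+\beta})$.
\item The round-robin tail shifts each $N_{i,T}$ by a deterministic amount, so it adds zero variance. It adds regret at most $2M\lceil T^{1/2+\alpha}\rceil$, and of order $T^{1/2+\alpha}$ on a constant-gap instance. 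This gives $\mathcal R_T=\Theta(T^{1/2+\alpha})$.
\item The matching $\tilde\Omega(T^{1-\alpha+\beta})$ lower bound on $\mathcal S_T$ comes from \Cref{thm:lower-bound} applied to the \texttt{UCB-f} phase, whose worst-case regret is $\tilde O(T^{1/2+\alpha-\beta})$.
\end{itemize}
No new concentration analysis is needed.
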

Proposition~\ref{prop:pareto-region-achievability}, together with Corollary~\ref{cor:pareto}, show that $\mathcal{R}_T\cdot\mathcal{S}_T=\Omega\!\left(T^{3/2}\right)$ characterizes the achievable region (up to polylog $T$ factors) of active-learning algorithms in the $(\mathcal{R}_T,\mathcal{S}_T)$ plane. The proof of Proposition~\ref{prop:pareto-region-achievability} is deferred to Appendix~\ref{sec:pareto_other_proofs}, where we construct a family of algorithms that run \texttt{UCB-f} for an initial phase and then switch to a round-robin rule.  Broadly, many heuristic policies, such as \texttt{Greedy} and \texttt{ETC} (explore-then-commit) are non-Pareto-efficient, and correspond to points that lie strictly in the interior of the achievable region.

}

\paragraph{Allocation versus regret variability.} 
Let $\hat R_T \triangleq \sum_{i\in[K]} \Delta_i\, N_{i,T}$ denote the \emph{random pseudo-regret}. We refer to the standard deviation of $\hat R_T$ as \emph{regret variability}. Although regret variability is closely related to allocation variability, their worst-case behaviors differ drastically, particularly in how each trades off with worst-case (expected) regret $\mathcal{R}_T$. The next corollary characterizes the worst-case behavior of regret variability and highlights the contrast in parallel to Corollary~\ref{cor:pareto}. let $\mathfrak{P}^2_{\mathrm{sg}}$ denote the subclass of two-armed bandit instances in $\mathfrak{P}_{\mathrm{sg}}$.
\begin{proposition}\label{prop:regret-variability}
Under any active-learning algorithm, we have
\[
\liminf_{T\rightarrow\infty}\frac{\log T}{\sqrt{T}}\cdot
\sup_{\nu\in\mathfrak{P}_{\mathrm{sg}}}
\sd\!\big(\hat R_T(\nu)\big)
>c_4
\]
for constant $c_4 > 0$ independent of the algorithm. Furthermore, over $\mathfrak{P}^2_{\mathrm{sg}}$, \texttt{UCB-f} achieves
\[
\liminf_{T\rightarrow\infty}
\frac{\sup_{\nu\in\mathfrak{P}^2_{\mathrm{sg}}}
{\sd\!\big(\hat R_T(\nu)\big)}}{\sqrt{T}\log T}
<c_5
\]
for constant $c_5 > 0$ and any exploration function $f(\cdot)$ satisfying the conditions of Theorem~\ref{thm:ucb-f}.
\end{proposition}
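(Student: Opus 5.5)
The statement has two parts, and I will treat them separately.

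\emph{Lower bound.} I will reuse the two-armed Gaussian family $\nu(\Delta)=\{\mathcal N(0,1),\mathcal N(-\Delta,1)\}$ and $g_T(\Delta)=\E[N_{2,T}]$ from the proof of \Cref{thm:lower-bound}. On $\nu(\Delta)$ we have $\sd(\hat R_T)=\Delta\,\sd(N_{2,T})$. Thus it suffices to find a gap $\Delta_T$ with $\Delta_T\,\sd(N_{2,T}(\Delta_T))\gtrsim \sqrt T/\log T$.

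The proof of \Cref{thm:lower-bound} yields a local sensitivity inequality of the form
\[
\sd\big(N_{2,T}(\Delta)\big)\gtrsim \big|g_T(\Delta)-g_T(\Delta+1/\sqrt{g_T(\Delta)})\big|,
\]
and I take it as given. Because the algorithm is active-learning, $g_T$ must fall from $T/2$ at $\Delta=0$ to $o(T)/\Delta$ at constant gaps. I will parametrize by $h=\log g_T$. Moving from $\Delta$ to $\Delta'=\Delta+g_T^{-1/2}$ changes $g$ by roughly $g\cdot(\text{relative drop})$, so $\Delta\cdot\sd \gtrsim \Delta\,g\,\delta$, where $\delta$ is the relative decrease of $g$ over one indistinguishability window.

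Next I discretize $\Delta$ into windows $\Delta_{k+1}=\Delta_k+g_T(\Delta_k)^{-1/2}$ and sum the relative decreases. Over the range where $g$ falls from $T/2$ to a level of order $\sqrt T$ (or where $\Delta$ reaches a constant), the total logarithmic decrease is of order $\log T$. A pigeonhole argument on a dyadic scale of $g$ then works as follows. Split into $O(\log T)$ dyadic levels $g\in[2^{-j-1}T,2^{-j}T]$. Within the level where $g\approx G$, the product $\Delta\cdot(\text{absolute drop})$ summed over windows is at least about $\Delta_{\min}\cdot G/2$. Moreover $\Delta$ must exceed roughly $1/\sqrt{G}$ by the time $g$ reaches $G$, since regret optimality is not assumed but the windows themselves have width $G^{-1/2}$.

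This gives a per-level value $\sqrt G$ spread over $\mathrm{(\#windows)}$. The number of windows is controlled by the level's $\Delta$-width times $\sqrt G$. I will choose the level with $G\approx T/\log T$-ish, or average over levels, to extract a single window where $\Delta\cdot\sd\gtrsim\sqrt T/\log T$. \textbf{This counting is the main obstacle.} The $\log T$ loss is exactly the price of not knowing at which of the $\log T$ scales the drop happens. To avoid the regret-dependent bound from \Cref{thm:lower-bound}, I will first try the level-averaging argument with the weight $\Delta\sqrt g$.

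\emph{Upper bound for \texttt{UCB-f}.} On two-armed instances I will bound $\sd(\hat R_T)=\Delta\,\sd(N_{2,T})\le \Delta\sqrt{\E[(N_{2,T}-n^*)^2]}$ for a deterministic fluid value $n^*$, using the fluid approximation and concentration established in the proof of \Cref{thm:ucb-f}. Two regimes arise.
\begin{itemize}
\item If $\Delta\lesssim f(T)/\sqrt T$, the fluctuation of $N_{2,T}$ is driven by sample-mean noise. The UCB index balance gives $|N_{2,T}-n^*|\lesssim T\cdot \frac{\sqrt{\log T}}{f(T)}$-type deviations, more precisely $\Delta\cdot T\log T/f(T)\le \sqrt T\log T$ by the bound $\mathcal S_T=O(T\log T/f(T))$ in \Cref{thm:ucb-f}.
\item If $\Delta\gtrsim f(T)/\sqrt T$, the inferior arm is pulled about $n^*\approx f(T)^2/\Delta^2$ times. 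Its fluctuations are of order $n^*\sqrt{\log T}/(\Delta\sqrt{n^*})$-relative, i.e.\ $\sd(N_{2,T})\lesssim \sqrt{n^*}\sqrt{\log T}/\Delta\cdot\Delta\ldots$, giving $\Delta\,\sd\lesssim f(T)\sqrt{\log T}/\Delta\cdot\Delta\lesssim \sqrt T\log T$ after using $f(T)\le\sqrt T$ (from monotonicity of $\sqrt t/f(t)$) and the tail bounds from the proof of \Cref{thm:ucb-f}.
\end{itemize}
Taking the supremum over $\Delta$ gives the claimed $c_5$.
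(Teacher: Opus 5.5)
\textbf{The \texttt{UCB-f} half is essentially fine. The lower-bound half has a genuine gap, exactly at the counting step you flag as the main obstacle, and the route you sketch would not close it.}

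The dyadic-in-$g$ picture is misdirected. Active learning only gives $g_T(\Delta)\le\mathcal R_T/\Delta$ with $\mathcal R_T=o(T)$. So on $[0,2]$ the function $g_T$ may cross only $O(\log\log T)$ dyadic levels (e.g.\ when $\mathcal R_T\asymp T/\log\log T$), and it need never reach $G\approx T/\log T$. Inside one level, spreading $\Delta_{\min}\cdot G/2\approx\sqrt G$ evenly over up to order $\sqrt G$ windows of width $G^{-1/2}$ only guarantees a window of value of order $1$. The $\log T$ is not the number of $g$-scales. It is the number of $\Delta$-scales between $1/\sqrt T$ and $1$, and the single level $g_T\in[T/4,T/2]$ suffices.

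The missing argument (the paper's) runs on your own windows $\Delta_{0,T}=0$, $\Delta_{m+1,T}=\Delta_{m,T}+g_T(\Delta_{m,T})^{-1/2}$:
\begin{itemize}
\item Since $g_T\le T$, always $\Delta_{m,T}\ge m/\sqrt T$.
\item Let $\tau_T$ be the first index with $g_T(\Delta_{\tau_T,T})<T/4$. Active learning forces $\Delta_{\tau_T,T}\le 1$ for large $T$ (as in the proof of \Cref{lem:g_tradeoff0}), hence $\tau_T\le\sqrt T$.
\item If $\xi$ bounds the weighted drop $\Delta\,|g_T(\Delta)-g_T(\Delta+g_T(\Delta)^{-1/2})|$, then the drop across $[\Delta_{m-1,T},\Delta_{m,T}]$ is at most $\xi/\Delta_{m-1,T}\le\xi\sqrt T/(m-1)$ for $m\ge 2$.
\item Summing this harmonic series gives $g_T(\Delta_{1,T})-g_T(\Delta_{\tau_T,T})\le\xi\sqrt T(\tfrac12\log T+1)$. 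So whenever $g_T(\Delta_{1,T})\ge 3T/8$, you get $\xi=\Omega(\sqrt T/\log T)$.
\end{itemize}
Put differently, the right average uses weights $1/\Delta_{m,T}$, whose total over $m\le\tau_T$ is $O(\sqrt T\log T)$, not the uniform weight $\Delta_{\min}$.

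Two further points.

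First, \Cref{lem:S_lower_bound0} bounds $\max\{S_T(\Delta),S_T(\Delta')\}$, not $S_T(\Delta)$ alone as you write. So the weight you may attach to a window is its smaller endpoint. The first window $[0,\sqrt{2/T}]$ then has weight $0$, and the case $g_T(0)-g_T(\sqrt{2/T})>T/8$ needs a separate step. One option is the change-of-measure argument from the proof of \Cref{thm:minimax-worst-case-interval}, applied between $\Delta=0$ and $\Delta=\sqrt{2/T}$ and using $\E_0[N_{2,T}]=T/2$. It gives $S_T(\sqrt{2/T})=\Omega(T)$, hence regret variability $\Omega(\sqrt T)$ there. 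The paper's write-up weights by $\Delta\vee\Delta'$, which the max-form lemma does not directly justify, so this step is needed in either version.

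Second, for \texttt{UCB-f}: your regime-(ii) fluctuation estimate is not established anywhere, and the displayed computation does not parse. No case split is needed, though. The proof of \Cref{thm:ucb-f} gives $\sd(N_{2,T})\le\sqrt2(K-1)\frac{\log T}{f(T)}n_2$ uniformly, and \Cref{lem:n_i_asymptotics} gives $\sup_\nu\Delta n_2=O(\sqrt T f(T))$. Hence $\Delta\,\sd(N_{2,T})=O(\sqrt T\log T)$ directly, which is the paper's argument.
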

We provide the proof of Proposition~\ref{prop:regret-variability} in Appendix~\ref{sec:pareto_other_proofs}. The impossibility result follows from arguments similar to those used in the proof of Theorem~\ref{thm:lower-bound}, while the achievability result immediately follows from the proof of Theorem~\ref{thm:ucb-f}. We establish the achievability statement for two-armed bandits, as this suffices for illustrative and contrastive purposes.

Proposition~\ref{prop:regret-variability}, together with Theorem~\ref{thm:ucb-f}, indicates that there is essentially \textit{no} trade-off between worst-case expected regret and worst-case regret variability. Indeed, one may vary $\mathcal{R}_T$ to attain any rate $\tilde \Theta\left(T^{1/2}\,f(T)\right)$ by tuning $f(\cdot)$ in \texttt{UCB-f}, while the worst-case regret variability remains at the order $\tilde{\Theta}(\sqrt{T})$. In other words, the corresponding Pareto frontier (up to polylog $T$ factors) is a flat horizontal line, as plotted in Figure~\ref{fig:pareto-frontier2} (b). This is in sharp contrast to the nontrivial Pareto frontier between $\mathcal{R}_T$ and the worst-case allocation variability $\mathcal{S}_T$ characterized in Corollary~\ref{cor:pareto} and illustrated in Figure~\ref{fig:pareto-frontier2} (a).
\begin{figure}[h]
    \centering
    \begin{minipage}{0.49\linewidth}
        \centering
        \includegraphics[width=\linewidth]{pareto-frontier.png}
        \par\smallskip
        (a) $\mathcal R_T$ versus allocation variability.
    \end{minipage}
    \hfill
    \begin{minipage}{0.49\linewidth}
        \centering
        \includegraphics[width=\linewidth]{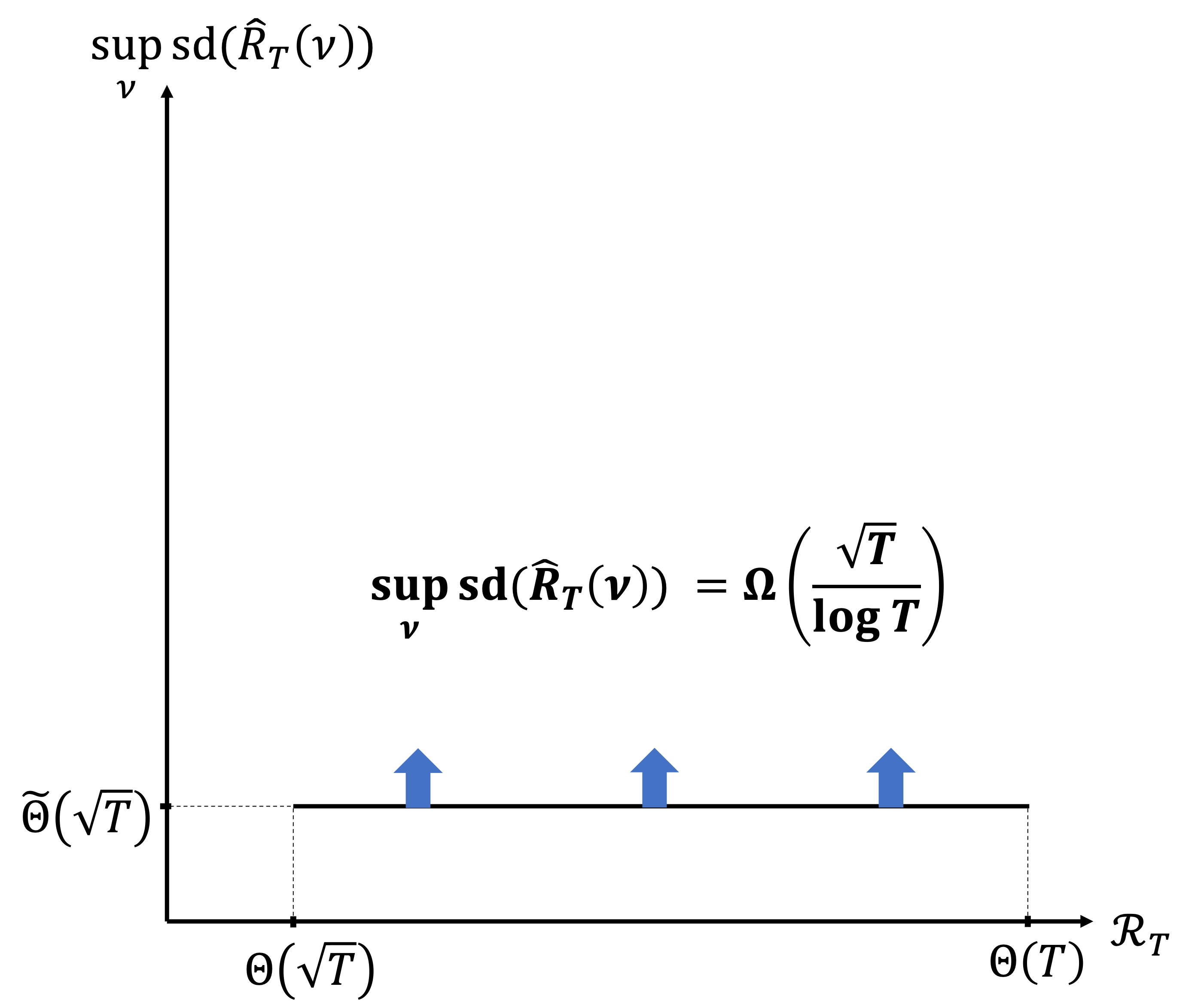}
        \par\smallskip
        (b) $\mathcal R_T$ versus regret variability.
    \end{minipage}
    \caption{Pareto frontiers of different objectives of interest.}
    \label{fig:pareto-frontier2}
\end{figure}

\section{Implications}\label{sec:implications}
Our theoretical findings have rich implications in various domains. In this section, we discuss how bandit algorithms' allocation variability can impact platform operations and statistical inference.

\paragraph{Implications on platform operations.} {Consider a platform who dynamically allocates online arriving demand to competing third-party agents with unknown service quality. Each demand allocation at time $t$ generates a random reward $r_t$ from the corresponding agent's reward distribution. The platform aims for a large total reward. At the same time, the platform also cares about the agents' utility to maintain its long term growth. The agents are risk-averse and get disutility if their demand variability is large. Following the classic mean-risk formulation (e.g., \cite{markowitz1952portfolio}), we express the platform's aggregate goal as maximizing expected total reward $\mathbb E\big[\sum_{s=1}^T r_s\big]$ minus the (largest among agents) agent disutility $S_T^\rho$ due to allocation variability, where $\rho\geq 0$ measures the agents' risk-aversion level (e.g., $\rho=0$ means the agents are risk-neutral). Equivalently, the platform aims to minimize the loss $R_T +S_T^\rho$.

Suppose the platform wants to incur a small aggregate loss in all possible instances $\nu$. Taking the robust optimization's minimax perspective, the platform's objective is to find an algorithm to minimize the worst-case loss 
\begin{align}\label{eq:platform-problem}
    \sup_{\nu\in\mathfrak{P}_{\mathrm{sg}}}\left(R_T(\nu)+(S_T(\nu))^\rho\right).
\end{align}
The following corollary of \Cref{thm:lower-bound} and \Cref{thm:ucb-f} establishes the lower bound of the platform's optimal loss, and prescribes an algorithm to achieve asymptotic near-optimality.
\begin{corollary}\label{cor:platform-value-lower-bound}
Consider the platform's problem in \eqref{eq:platform-problem} where $\rho\geq 0$. Under any bandit algorithm, we have
    \begin{align*}
        \liminf_{T\rightarrow\infty}\frac{\sup_{\nu\in\mathfrak{P}_{\mathrm{sg}}}\left(R_T(\nu)+(S_T(\nu))^\rho\right)}{\sqrt T\vee (T^{\frac{3\rho}{2+2\rho}}\wedge T}>c_6
    \end{align*}
    for constant $c_6>0$ independent of the algorithm. Furthermore, if $\rho\in[0,2]$, then the \texttt{UCB-f} algorithm with $f(t)=t^{\left(\frac{2\rho-1}{2+2\rho}\right)^+}\log t$ achieves
    \begin{align*}
        \sup_{\nu\in\mathfrak{P}_{\mathrm{sg}}}(R_T(\nu)+(S_T(\nu))^\rho)=\tilde\Theta(\sqrt T\vee T^\frac{3\rho}{2+2\rho}).
    \end{align*}
\end{corollary}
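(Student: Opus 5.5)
We prove the lower bound by splitting on whether the algorithm is active-learning. Write $L_T\triangleq\sup_{\nu}\bigl(R_T(\nu)+(S_T(\nu))^\rho\bigr)\ge \max\{\mathcal R_T,\mathcal S_T^\rho\}$; this uses $\sup(a+b)\ge \sup a$ and $\sup(a+b)\ge\sup b$ with nonnegative terms.

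\emph{Case (i): the algorithm is not active-learning.} Then $\limsup_T \mathcal R_T/T>0$, so along a subsequence $L_T\ge cT\ge c\,(\sqrt T\vee(T^{\frac{3\rho}{2+2\rho}}\wedge T))$. Strictly, the statement uses $\liminf$. So I would establish it for a fixed algorithm by applying the active-learning argument along subsequences where $\mathcal R_T/T\to0$ and the trivial bound elsewhere. Concretely, the proof of \Cref{thm:lower-bound} (from \Cref{sec:lower-bound-proof0}) only needs $\mathcal R_T\le \epsilon T$ at the given $T$ for small fixed $\epsilon$. If $\mathcal R_T\ge\epsilon T$ we are done directly. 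Otherwise $\mathcal R_T\mathcal S_T\ge CT^{3/2}$ holds at that $T$.

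\emph{Case (ii): $\mathcal R_T\mathcal S_T\ge CT^{3/2}$.} The universal minimax bound $\mathcal R_T\ge c\sqrt T$ handles the $\sqrt T$ term. For the other term, set $\mathcal R_T=T^{a}$ and $\mathcal S_T\ge CT^{3/2-a}$. Then $L_T\ge T^{a}\vee C^\rho T^{\rho(3/2-a)}$. Minimizing over $a$ balances the two at $a=\rho(3/2-a)$, that is $a=\frac{3\rho}{2+2\rho}$, giving $L_T\gtrsim T^{\frac{3\rho}{2+2\rho}}$. Combined with $L_T\ge\mathcal R_T$ and the branch $\mathcal R_T\ge \epsilon T$, this yields $L_T\gtrsim T^{\frac{3\rho}{2+2\rho}}\wedge T$. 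To make this rigorous without the exponent parametrization: if $\mathcal R_T< T^{\frac{3\rho}{2+2\rho}}$, then $\mathcal S_T^\rho> C^\rho T^{\rho(3/2-\frac{3\rho}{2+2\rho})}=C^\rho T^{\frac{3\rho}{2+2\rho}}$.

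\emph{Achievability.} Take $f(t)=t^{\gamma}\log t$ with $\gamma=(\frac{2\rho-1}{2+2\rho})^+\in[0,\frac12]$ for $\rho\le2$. Both $f/\log t$ and $\sqrt t/f$ are nondecreasing, since $\gamma\le 1/2$; at $\gamma=1/2$ the latter is $1/\log t$, which decreases, so at $\rho=2$ one should take $f(t)=\sqrt t$ or accept a log loss. This is a minor technicality to flag. \Cref{thm:ucb-f} gives $\mathcal R_T=O(T^{1/2+\gamma}\log T)$ and $\mathcal S_T^\rho=O(T^{\rho(1-\gamma)})$. For $\rho\ge1/2$, $\frac12+\gamma=\frac{3\rho}{2+2\rho}$ and $\rho(1-\gamma)=\frac{3\rho}{2+2\rho}$, so both terms match. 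For $\rho<1/2$, $\gamma=0$ and $\mathcal R_T=\tilde O(\sqrt T)$, while $\mathcal S_T^\rho\le T^\rho\le \sqrt T$. The matching lower bound comes from the first part.

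The main obstacle is the $\liminf$ formulation for algorithms that are not active-learning but have small regret along some horizons. This requires the finite-$T$ form of \Cref{thm:lower-bound}'s argument: the two-armed Gaussian derivative argument conditional on $\mathcal R_T\le\epsilon T$ at that $T$. I would extract that from the proof rather than the asymptotic statement.
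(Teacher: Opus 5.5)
Your proposal is correct and follows the same route as the paper. Both bound the loss below by $\max\{\mathcal R_T,\mathcal S_T^\rho\}$, combine the minimax $\sqrt T$ bound with the product bound of \Cref{thm:lower-bound}, balance at the exponent $\frac{3\rho}{2+2\rho}$, and read off achievability from \Cref{thm:ucb-f}. Your version is in fact more careful than the paper's on three counts:
\begin{itemize}
\item Your pointwise dichotomy at each $T$ gives the $\liminf$ directly, whereas the paper argues along subsequences and treats only active-learning algorithms.
\item Your finite-$T$ extraction is valid: active learning enters the proof of \Cref{lem:g_tradeoff0} only to guarantee $\Delta_{\tau_T,T}\le 1$, and that already follows from $\sup_{\Delta\in[0,1]}\Delta g_T(\Delta)<T/8$ at the given $T$.
\item You correctly note that $f(t)=\sqrt t\log t$ at $\rho=2$ violates the hypothesis of \Cref{thm:ucb-f}, so $f(t)=\sqrt t$ should be used there; the paper does not address this.
\end{itemize}
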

\Cref{cor:platform-value-lower-bound} states that the platform must at least incur a total loss of $\Theta\left(\sqrt T\vee \left(T^{\frac{3\rho}{2+2\rho}}\wedge T\right)\right))$ in the worst case. If $\rho>2$, the agents are very risk-averse, and the loss due to allocation variability dominates regret. In this case, the platform should give up learning (e.g., conduct round-robin) to avoid the superlinear loss from $\Omega(\sqrt T)$ allocation variability. If $\rho<\frac{1}{2}$, the agents are very insensitive to risk. This propels the platform to prioritize minimizing regret, with a minimax loss of $\Theta(\sqrt T)$ dominated by the regret term. If $\rho\in[\frac{1}{2},2]$, i.e., the agents are median level risk-averse, then the platform should carefully balance between regret and allocation variability. Our carefully tuned \texttt{UCB-f} algorithm can strike this balance. For example, when $\rho=1$, \texttt{UCB-f} with $f(t)=t^{\frac{1}{4}}\log t$ generates $\tilde \Theta(T^{\frac{3}{4}})$ worst-case regret and $\tilde \Theta(T^{\frac{3}{4}})$ worst-case allocation variability, which achieves the minimax loss of $\Theta(T^\frac{3}{4})$ up to a polylog $T$ factor in problem \eqref{eq:platform-problem}. The proof of \Cref{cor:platform-value-lower-bound} is in \Cref{app:sampling_instability}.
}

\paragraph{Implications for statistical inference.} 
Our results also have direct implications for statistical inference in MAB, a topic that has received growing attention in recent years (see literature review). Here the goal is to perform inference, e.g. to estimate an arm's mean reward, using data generated by a bandit algorithm. The central difficulty is that bandit algorithms are highly adaptive, and the data thus generated are non-\emph{i.i.d.} To ensure that standard inferential guarantees (developed under the \emph{i.i.d.} assumption) remain valid, a recent line of work \citep{han2024ucb,khamaru2024inference,praharaj2025instability} focuses on a key sufficient condition, termed \emph{sampling stability}, first introduced in \citet{lai1982least}. It is shown that under this condition, the sample mean reward $\hat\mu_{i,T} = \frac{1}{N_{i,T}}\sum_{s=1}^{N_{i,T}} X^i_s$ admits asymptotic normality \citep{han2024ucb}, thereby supporting CLT-based confidence intervals and hypothesis tests. We state this condition below (as in \citealp{praharaj2025instability}).
\begin{definition}[Sampling stability]\label{def:sampling-stability}
A bandit algorithm exhibits sampling stability over an instance class $\mathfrak{P}$ if for any instance $\nu\in\mathfrak{P}$ and any arm $i\in[K]$, there exist non-random scalars $n_{i,T}(\nu)$ such that $\frac{N_{i,T}(\nu)}{n_{i,T}(\nu)}\xrightarrow{p}1$, and $n_{i,T}(\nu)\to\infty$, as $T\to\infty.$
\end{definition}
A large body of work is devoted to determining whether various bandit algorithms exhibit sampling stability, and the current understanding remains largely algorithm-specific. In particular, \citet{praharaj2025instability} pose the following open question: \emph{Is it possible to design bandit algorithms that achieve both sampling stability and minimax-optimal regret?}

Our results resolve this open question with a negative answer. 
\begin{corollary}\label{cor:minimax-sampling-instability} Over $\mathfrak{P}_{\mathrm{sg}}$, no minimax regret-optimal algorithm that satisfies $\mathcal{R}_T \leq c_0 \sqrt{T}$ for some $c_0>0$ exhibits sampling stability.
\end{corollary}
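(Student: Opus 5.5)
We argue by contradiction, using Theorem~\ref{thm:minimax-worst-case-interval} at the single instance $\Delta=0$. Suppose a minimax regret-optimal algorithm with $\mathcal{R}_T\le c_0\sqrt{T}$ exhibits sampling stability over $\mathfrak{P}_{\mathrm{sg}}$. Consider the two-armed Gaussian instance $\nu(0)=\{\mathcal N(0,1),\mathcal N(0,1)\}$, which lies in $\mathfrak{P}_{\mathrm{sg}}$. Regularity condition (i) gives $\E[N_{1,T}]=\E[N_{2,T}]=T/2$. Theorem~\ref{thm:minimax-worst-case-interval} with $\Delta=0$ gives $\max_i\sd(N_{i,T})\ge c_2T$ for all large $T$. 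With two arms, $N_{1,T}+N_{2,T}=T$, so $\sd(N_{1,T})=\sd(N_{2,T})\ge c_2T$.

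Sampling stability supplies deterministic $n_{1,T}$ with $N_{1,T}/n_{1,T}\xrightarrow{p}1$. Set $Y_T\triangleq N_{1,T}/T\in[0,1]$. Since $Y_T$ is bounded, convergence in probability to a deterministic sequence will force its variance to vanish, which contradicts $\var(Y_T)\ge c_2^2$.

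We need a deterministic centering at the right scale. Let $a_T=n_{1,T}/T$. The sequence $a_T$ is bounded: if $a_T\to\infty$ along a subsequence, then $Y_T/a_T\le 1/a_T\to0$, contradicting $Y_T/a_T\to1$. Because $Y_T/a_T\to1$ in probability and $a_T$ is bounded, we get
\[
Y_T-a_T=a_T\,(Y_T/a_T-1)\xrightarrow{p}0 .
\]
The differences $Y_T-a_T$ are uniformly bounded, since both $Y_T$ and $a_T$ are bounded. Bounded convergence in probability therefore yields $\E[(Y_T-a_T)^2]\to0$. Finally,
\[
\var(Y_T)\le \E[(Y_T-a_T)^2]\to 0,
\]
which contradicts $\var(Y_T)\ge c_2^2>0$. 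Hence sampling stability fails, already at the identical-arms instance.

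The only delicate point is ruling out a degenerate normalization $n_{1,T}$, such as $a_T\to\infty$ or $a_T\to0$; the boundedness argument above handles it. Everything substantive is carried by Theorem~\ref{thm:minimax-worst-case-interval}. Alternatively, $\Delta=c_1/\sqrt T$ also works, but then the instance depends on $T$, whereas Definition~\ref{def:sampling-stability} fixes $\nu$. Using $\Delta=0$ avoids this issue.
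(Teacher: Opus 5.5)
Your proof is correct and follows essentially the paper's argument: the identical-arms instance $\Delta=0$, Theorem~\ref{thm:minimax-worst-case-interval} to get $\sd(N_{1,T})\ge c_2T$, an upper bound on the normalization (your boundedness of $a_T$ plays the role of the paper's $n_T\le 2T$), and then convergence in probability of a bounded sequence to a deterministic centering, which forces $\var(N_{1,T})=o(T^2)$. Your final step $\var(Y_T)\le\E[(Y_T-a_T)^2]$ is slightly cleaner than the paper's Jensen step, which loses an unnecessary factor of $2$.
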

The corollary follows immediately from \Cref{thm:lower-bound}--\ref{thm:minimax-worst-case-interval}, on instances with two identical Gaussian arms, together with the fact that allocation variability on the order of $\Theta(T)$ implies sampling instability in the sense of Definition~\ref{def:sampling-stability}. We provide the proof in \Cref{app:sampling_instability}. Beyond the negative result in Corollary~\ref{cor:minimax-sampling-instability}, even for bandit algorithms satisfying Definition~\ref{def:sampling-stability}, our notion of allocation variability provides a smooth, granular quantification of how stable they are. For example, the canonical \texttt{UCB1} algorithm has been shown to satisfy Definition~\ref{def:sampling-stability} and is therefore deemed stable; however, Corollary~\ref{cor:tradeoff-minimax-algorithms} suggests that its worst-case allocation variability is $\tilde{\Omega}(T)$, i.e., it is nearly unstable in the worst case. Consistent with this, numerical evidence in \citet{chen2025characterization} indicates a substantial difference between the distribution of $\sigma_i\sqrt{N_{i,T}}(\hat\mu_{i,T}-\mu_i)$ and the standard normal even at $T=10^5$, despite the fact that the former converges to the latter asymptotically. We conjecture that the large allocation variability of \texttt{UCB1} contributes to this slow CLT convergence, and we leave a precise characterization of the convergence rate as an intriguing open problem.

\section{Proof of \Cref{thm:lower-bound} }\label{sec:lower-bound-proof0}
In this section, we prove our main impossibility result, Theorem~\ref{thm:lower-bound}. Our proof of Theorem~\ref{thm:lower-bound} primarily focuses on the family of two-armed Gaussian bandit instances as in Section~\ref{sec: worst case}. Recall that such an instance is $\nu(\Delta) \triangleq \{\mathcal{N}(0,1), \mathcal{N}(-\Delta,1)\}$, parameterized by the mean reward gap $\Delta$ between the two arms. We adopt notations $S_T(\Delta), R_T(\Delta)$, whose meaning is clear from context. Denote by $g_T(\Delta) \triangleq \mathbb{E}_{\Delta}[N_{2, T}]$, i.e., the expected pull counts of the second (inferior) arm in instance $\Delta$. For simplicity, we assume $\Delta \le 2$. Hence $\nu(\Delta) \in \mathfrak{P}_{\text{sg}}(2,1)$. The generalization to arbitrary $\mathfrak{P}_{\text{sg}}(M,\sigma)$ is straightforward and we omit.

The proof relies on two key lemmas. First, for each gap $\Delta$, we relate the allocation variability $S_T(\Delta)$ to the local sensitivity of $g_T(\cdot)$ at $\Delta$ (Lemma~\ref{lem:S_lower_bound0}). Second, we show that $g_T(\cdot)$ cannot simultaneously vary slowly everywhere and yet decrease sufficiently as $\Delta$ grows, which is necessary to keep the expected regret $\mathcal{R}_T$ small (Lemma~\ref{lem:g_tradeoff0}). We now proceed to establish these two lemmas.
\begin{lemma}\label{lem:S_lower_bound0}
    Let $\Delta'\triangleq  \Delta + \frac{1}{\sqrt{g_T(\Delta)}}.$ Under any algorithm, it is true that
    \begin{align*}
         \max\left\{S_T(\Delta), S_T\left(\Delta'\right)\right\}\geq\frac{\sqrt{\frac{1}{2}e^{-\frac{1}{2}}}}{4}\cdot\left|g_T(\Delta)-g_T\left(\Delta'\right)\right|.
    \end{align*}
\end{lemma}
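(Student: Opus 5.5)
We will use a change-of-measure argument between the two Gaussian instances $\nu(\Delta)$ and $\nu(\Delta')$, which differ only in the mean of arm 2. Write $P=P_\Delta$, $Q=P_{\Delta'}$ for the laws of the full history (including the algorithm's internal randomness). The divergence decomposition gives
\[
\mathrm{KL}(P\,\|\,Q)=\E_\Delta[N_{2,T}]\cdot\frac{(\Delta'-\Delta)^2}{2}=g_T(\Delta)\cdot\frac{1}{2g_T(\Delta)}=\frac12 .
\]
The choice of $\Delta'$ is designed exactly so that this divergence is a constant. We then invoke the Bretagnolle--Huber type inequality. For the form with constant $\tfrac12 e^{-1/2}$, we use the standard fact $\int\min(dP,dQ)\ge \tfrac12 e^{-\mathrm{KL}(P\|Q)}=\tfrac12 e^{-1/2}$. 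The overlap measure $\min(dP,dQ)$ will be the tool for coupling the two distributions of $N_{2,T}$.

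Set $a=g_T(\Delta)$ and $b=g_T(\Delta')$, and suppose without loss of generality that $a\neq b$. Let $d=|a-b|$ and $m=(a+b)/2$. Consider the event $E=\{|N_{2,T}-m|<d/4\}$, i.e., the pull count lies in a window around the midpoint. Since $|N_{2,T}-a|\ge d/4$ on $E$, Chebyshev gives $P(E)\le P(|N_{2,T}-a|\ge d/4)\le 16S_T(\Delta)^2/d^2$, and similarly $Q(E)\le 16 S_T(\Delta')^2/d^2$.

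Next we split the outcome space into $A=\{N_{2,T}\ge m\}$ and $A^c$, and assume $a<b$. Then
\[
\int\min(dP,dQ)\le P(A)+Q(A^c).
\]
On $A$ we have $N_{2,T}-a\ge d/2$, so $P(A)\le 4S_T(\Delta)^2/d^2$; on $A^c$ we have $b-N_{2,T}>d/2$, so $Q(A^c)\le 4S_T(\Delta')^2/d^2$. (The window event $E$ above is in fact unnecessary.) Combining,
\[
\tfrac12 e^{-1/2}\le \frac{8\max\{S_T(\Delta),S_T(\Delta')\}^2}{d^2},
\]
hence $\max\{S_T(\Delta),S_T(\Delta')\}\ge \sqrt{\tfrac12 e^{-1/2}}\,d/\sqrt8$. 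Since $\sqrt8<4$, this is stronger than the stated constant $\sqrt{\tfrac12 e^{-1/2}}/4$. The stated constant presumably comes from a cruder bound, such as splitting at the midpoint with one-sided Chebyshev plus an extra factor of 2, so either route works.

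The main obstacle is justifying the overlap inequality $\int\min(dP,dQ)\ge\tfrac12e^{-\mathrm{KL}}$ and the divergence identity for adaptive algorithms with internal randomization. Both are standard (see \cite{lattimore2020bandit}, the divergence decomposition and Theorem 14.2), so we will cite them. One technical point needs care: $g_T(\Delta)\ge$ some positive quantity is required so that $\Delta'$ is well defined, and this follows from regularity condition (ii), which ensures each arm is pulled at least once in expectation.
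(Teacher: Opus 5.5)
Your proposal is correct and follows essentially the paper's route: a Chebyshev (second-moment truncation) bound on each instance for the midpoint event, combined with the Bretagnolle--Huber inequality and the divergence decomposition giving $\mathrm{KL}=\tfrac12$. The differences are minor. You use the symmetric overlap $\int\min(dP,dQ)$ with a single KL direction and orient the event by the ordering of $a,b$, whereas the paper takes a maximum over both KL directions. You also combine the squared bounds directly, which gives the slightly better constant $1/\sqrt{8}$; the paper's $1/4$ is lost by averaging $S_T(\Delta)$ and $S_T(\Delta')$ and using $\sqrt{x}+\sqrt{y}\ge\sqrt{x+y}$.
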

Intuitively, under instance $\Delta$, an algorithm pulls the inferior arm $g_T(\Delta)$ times on average. By CLT, the algorithm (up to time $T$) cannot effectively distinguish instance $\Delta$ from an alternative instance $\Delta'$ satisfying $|\Delta' - \Delta| \sim \frac{1}{\sqrt{g_T(\Delta)}}$ due to noisy reward samples. Thus under $\Delta$, the pull counts must ``stretch'' at times to behave as if the instance were $\Delta'$. The standard deviation of the pull counts under $\Delta$ should thus be at least $\left|g_T(\Delta)-g_T\left(\Delta'\right)\right|$. The formal proof of \Cref{lem:S_lower_bound0} utilizes the Bretagnolle-Huber inequality and the divergence decomposition. Details are provided in \Cref{sec:lower-bound-proof}.

We next establish a fundamental tension between
\[
\sup_{\Delta\in[0,1]} g_T(\Delta)\,\Delta
\qquad\text{and}\qquad
\sup_{\Delta\in[0,1]}\Bigl|\,g_T(\Delta)-g_T\!\Bigl(\Delta+\tfrac{1}{\sqrt{g_T(\Delta)}}\Bigr)\Bigr|.
\]
To make the former small, $g_T(\Delta)$ must decay rapidly as $\Delta$ increases. In contrast, making the latter small requires $g_T(\cdot)$ to vary slowly over $\Delta\in[0,1]$. These two requirements cannot hold simultaneously. We formalize this tension in the following lemma.

\begin{lemma}\label{lem:g_tradeoff0}
There exists an absolute constant $c$, such that for any active-learning algorithm
\begin{align*}   \liminf_{T\rightarrow\infty}\frac{\left(\sup_{\Delta\in[0, 1]}g_T(\Delta)\Delta\right)\left(\sup_{\Delta\in[0, 1]}\left|g_T(\Delta)-g_T\left(\Delta+\frac{1}{\sqrt{g_T(\Delta)}}\right)\right|\right)}{T^{\frac{3}{2}}}>c.
\end{align*}
\end{lemma}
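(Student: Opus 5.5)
The plan is a discrete ``walk'' in $\Delta$ that uses the step size $1/\sqrt{g_T(\Delta)}$ appearing in the second supremum. Write
\[
A_T\triangleq\sup_{\Delta\in[0,1]}g_T(\Delta)\Delta,\qquad B_T\triangleq\sup_{\Delta\in[0,1]}\Bigl|g_T(\Delta)-g_T\bigl(\Delta+\tfrac{1}{\sqrt{g_T(\Delta)}}\bigr)\Bigr|.
\]

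\textbf{Basic facts.} Three facts drive everything.
\begin{itemize}
\item By regularity condition (i), the two arms are identical at $\Delta=0$, so $g_T(0)=T/2$.
\item Since $\nu(\Delta)\in\mathfrak{P}_{\mathrm{sg}}$ and $R_T(\Delta)=\Delta g_T(\Delta)$, we have $A_T\le\mathcal R_T=o(T)$ by active learning.
\item For every $\Delta\in(0,1]$ we have $g_T(\Delta)\le A_T/\Delta$, and trivially $1\le g_T\le T$ (using regularity condition (ii)).
\end{itemize}

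\textbf{The walk.} Define $\Delta_0=0$ and $\Delta_{k+1}=\Delta_k+1/\sqrt{g_T(\Delta_k)}$. Each step changes $g_T$ by at most $B_T$, i.e.\ $|g_T(\Delta_{k+1})-g_T(\Delta_k)|\le B_T$, as long as $\Delta_k\le 1$. Let $k^*$ be the first index with $g_T(\Delta_{k^*})<T/4$.

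This index exists, and the walk stays inside $[0,1]$ until then. Indeed, every step taken while $g_T\ge T/4$ has length at least $1/\sqrt T$, so the walk advances. Moreover, any grid point with $\Delta_k\ge 5A_T/T$ satisfies $g_T(\Delta_k)\le T/5<T/4$. Hence $\Delta_{k^*-1}<5A_T/T=o(1)$, and $\Delta_{k^*}\le\Delta_{k^*-1}+2/\sqrt T\le 1$ for large $T$.

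\textbf{Two estimates.} Since $g_T$ drops from $T/2$ to below $T/4$ in $k^*$ steps, each of size at most $B_T$,
\[
k^*B_T\ge T/4 .
\]
Every step before $k^*$ has length at least $1/\sqrt T$, so $\Delta_{k^*-1}\ge (k^*-1)/\sqrt T$, while $g_T(\Delta_{k^*-1})\ge T/4$. Therefore
\[
A_T\ \ge\ g_T(\Delta_{k^*-1})\,\Delta_{k^*-1}\ \ge\ \tfrac{T}{4}\cdot\tfrac{k^*-1}{\sqrt T}.
\]

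\textbf{Combining.} If $k^*\ge2$, then
\[
A_TB_T\ \ge\ \frac{(k^*-1)\sqrt T}{4}\cdot\frac{T}{4k^*}\ \ge\ \frac{T^{3/2}}{32}.
\]
If $k^*=1$, the first estimate gives $B_T\ge T/4$. In that case it suffices to show $A_T\ge c'\sqrt T$. For this I would invoke the classical minimax argument on this same two-armed family at $\Delta=1/\sqrt T$: compare $\nu(0)$ and $\nu(1/\sqrt T)$ with the Bretagnolle--Huber inequality and the divergence decomposition, which the paper already uses for \Cref{lem:S_lower_bound0}. The KL divergence is $g_T(0)/(2T)=1/4$, bounded. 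If $g_T(1/\sqrt T)$ were $o(T)$, the event $\{N_{2,T}\ge T/4\}$ would have probability bounded below under $\nu(0)$ but vanishing under $\nu(1/\sqrt T)$, which contradicts the bounded KL. Hence $g_T(1/\sqrt T)=\Omega(T)$, and so $A_T=\Omega(\sqrt T)$.

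\textbf{Main obstacle.} The delicate point is the bookkeeping of the walk. One has to ensure the grid actually reaches the region where $g_T<T/4$ while staying inside $[0,1]$, which is where $A_T=o(T)$ is essential. One also has to handle the degenerate case of a very fast early drop ($k^*$ small) separately through the $\Omega(\sqrt T)$ regret bound. Constants are immaterial, so I would pick the thresholds $T/4$ and $T/5$ loosely to keep the inequalities strict.
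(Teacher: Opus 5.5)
Your argument is essentially the paper's proof. It uses the same walk $\Delta_{m+1}=\Delta_m+1/\sqrt{g_T(\Delta_m)}$ started from $g_T(0)=T/2$, the same stopping index (first $m$ with $g_T(\Delta_m)<T/4$), telescoping to get $B_T\ge T/(4k^*)$, a bound on $A_T$ read off from the walk's position, and the $\Omega(\sqrt T)$ minimax bound for the degenerate case. The one difference is that you evaluate the regret at $\Delta_{k^*-1}$, where $g_T\ge T/4$ holds automatically, rather than at $\Delta_{\tau_T}$. This replaces the paper's split on whether $g_T(\Delta_{\tau_T})\le T/20$ by the cleaner split $k^*=1$ versus $k^*\ge 2$, and the minimax bound is then needed only when $k^*=1$.

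However, your justification of that minimax bound fails as written: Bretagnolle--Huber does not give the contradiction you describe. Take $P=P_{\nu(0)}$, $Q=P_{\nu(1/\sqrt T)}$ and $A=\{N_{2,T}<T/4\}$. Bretagnolle--Huber gives $P(A)+Q(A^c)\ge\frac12e^{-1/4}\approx 0.39$. If $g_T(1/\sqrt T)=o(T)$, then $Q(A^c)\to0$, so this only forces $P(A)\ge\frac12e^{-1/4}-o(1)$. That is compatible with $\E_{\nu(0)}[N_{2,T}]=T/2$, which only gives $P(A)\le 2/3$; an allocation with $N_{2,T}\in\{0,T\}$ equally likely has $P(A)=1/2$. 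No threshold event helps either. The right-hand side of Bretagnolle--Huber is always below $1/2$, while the symmetry constraint only gives $P(N_{2,T}<\theta)\le\frac{T/2}{T-\theta}$, which is at least $1/2$.

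Use Pinsker's inequality instead. Since $N_{2,T}/T\in[0,1]$ and $D_{\text{KL}}(P\|Q)=\frac14$,
$$\bigl|\E_{\nu(0)}[N_{2,T}/T]-\E_{\nu(1/\sqrt T)}[N_{2,T}/T]\bigr|\le\sqrt{1/8},$$
hence $g_T(1/\sqrt T)\ge(\frac12-\frac1{2\sqrt2})T$ and $A_T\ge(\frac12-\frac1{2\sqrt2})\sqrt T$. Alternatively, by data processing, $D_{\text{KL}}(P\|Q)$ is at least the binary divergence between $P(A^c)\ge\frac14$ and $Q(A^c)\to0$, and that diverges. This is the ``contradiction with bounded KL'' you had in mind. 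With this replacement your proof is complete.

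Deriving the $\sqrt T$ bound directly inside the one-sided family $\nu(\Delta)$ from $g_T(0)=T/2$ is also more precise than the paper's citation of the textbook minimax bound. That bound is stated over all Gaussian instances rather than over this family.
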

\proof{Proof of \Cref{lem:g_tradeoff0}}
For each $T$, define a sequence of instances with $\Delta_{0, T} = 0$ and recursively 
\[\Delta_{m+1, T} = \Delta_{m, T} + \frac{1}{\sqrt{g_T(\Delta_{m, T})}}, m \ge 0.\]
By symmetry $g_T(0) = T/2.$ Let $\tau_{T} \triangleq \inf\{m \ge 1: g_T(\Delta_{m, T}) < \frac{T}{4}\}$ be the first time in the instance sequence that the expected pull count of the inferior arm drops below $T/4.$ Then for any $m \le \tau_T - 1$, it holds that $\frac{T}{4} \le g_T(\Delta_{m, T}) \le T$. This yields 
\[
\frac{m}{\sqrt{T}} \le \Delta_{m, T} \le \frac{2 m }{\sqrt{T}},  \qquad 0 \le m \le \tau_T.
\]
If $\Delta_{\tau_T, T} > 1$, then there exists a time $m \leq \tau_T$ such that $\Delta_{m-1, T} < 1$ yet $\Delta_{m, T} \ge 1$, due to monotonicity. Then $\Delta_{m-1, T} = \Delta_{m, T} - \frac{1}{\sqrt{g_T(\Delta_{m-1, T})}} \geq 1 - \frac{{2}}{\sqrt{T}} \ge \frac{1}{2}$ for $T \ge {16}$. Consequently, $\Delta_{m-1, T}\cdot g_T\left(\Delta_{m-1, T}\right) \ge \frac{1}{2}\times \frac{T}{4} = \frac{T}{8}$. Since the algorithm is active learning, this case should not occur infinitely often. Thus $\Delta_{\tau_T, T} \le 1$ for $T$ sufficiently large. This further implies $\Delta_{m, T} \le 1$ for all $m \le \tau_T$ for $T$ sufficiently large. 

Next, we consider two cases. In Case 1, suppose $g_T(\Delta_{\tau_T, T}) \le \frac{T}{20}$. Then combining with the definition of $\tau_T$, $g_T(\Delta_{\tau_{T - 1}, T}) - g_T(\Delta_{\tau_T, T}) \ge \frac{T}{4} - \frac{T}{20} = \frac{T}{5}.$ We thus have
\[
\sup_{\Delta \in[0, 1]}
\left|g_T(\Delta)-g_T\left(\Delta+\frac{1}{\sqrt{g_T(\Delta)}}\right)\right| \ge g_T(\Delta_{\tau_{T - 1}, T}) - g_T(\Delta_{\tau_T, T}) \ge \frac{T}{5}
\]
for $T$ sufficiently large.  Apply the standard minimax regret lower bound for Gaussian bandits (c.f. Thm. 15.2 in \cite{lattimore2020bandit}) $\sup_{\Delta\in[0, 1]} \Delta g_T(\Delta) \ge \frac{1}{27}\sqrt{T}$. Put together, we have 
\[
\left(\sup_{\Delta\in[0, 1]}g_T(\Delta)\Delta\right)\left(\sup_{\Delta\in[0, 1]}\left|g_T(\Delta)-g_T\left(\Delta+\frac{1}{\sqrt{g_T(\Delta)}}\right)\right|\right) \ge \frac{1}{135} T^{\frac{3}{2}}.
\]

Next we consider Case 2, i.e. $g_T(\Delta_{\tau_{T}, T}) > \frac{T}{20}$. It then must be $\frac{T}{20} < g_T(\Delta_{\tau_{T}, T}) < \frac{T}{4}.$ We have by telescoping 
\[
\sup_{\Delta\in[0, 1]}\left|g_T(\Delta)-g_T\left(\Delta+\frac{1}{\sqrt{g_T(\Delta)}}\right)\right| \ge \frac{\sum_{m = 0}^{\tau_T{-1}} \left(g_T(\Delta_{m, T}) - g_T(\Delta_{m+1, T})\right)}{\tau_T} \ge \frac{\frac{T}{2} - g_T(\Delta_{\tau_T, T})}{\tau_T} > \frac{T}{4 \tau_T}.
\]
Recall that $\Delta_{\tau_T, T} \ge \frac{\tau_T}{\sqrt{T}}$. Combining with $g_T(\Delta_{\tau_{T}, T}) > \frac{T}{{20}}$, we conclude that  $\sup_{\Delta\in[0, 1]}g_T(\Delta)\Delta \ge g_T(\Delta_{\tau_{T}, T})\Delta_{\tau_T, T}  > \frac{\tau_T \sqrt{T}}{20}.$ Put together, we have
\[
\left(\sup_{\Delta\in[0, 1]}g_T(\Delta)\Delta\right)\left(\sup_{\Delta\in[0, 1]}\left|g_T(\Delta)-g_T\left(\Delta+\frac{1}{\sqrt{g_T(\Delta)}}\right)\right|\right) \ge \frac{1}{80} T^{\frac{3}{2}}.
\]
Combining the above cases then completes the proof with the explicit constant $c = \frac{1}{135}.$ \qed
\endproof

\Cref{thm:lower-bound} is then immediate.
\begin{proof}{Proof of \Cref{thm:lower-bound}}
By definition and that $g_T(\Delta) \ge 1$ for $T \ge 2$ (since we assume the algorithm pulls each arm at least once),
\[
\mathcal{S}_T \geq \sup_{\Delta \in [0, 2]} S_T(\Delta) \ge \sup_{\Delta\in[0,1]}\max\left\{S_T(\Delta), S_T\left(\Delta + \frac{1}{\sqrt{g_T(\Delta)}}\right)\right\},
\]
Apply Lemma~\ref{lem:S_lower_bound0} to the above, we have
\[
\mathcal{S}_T \ge \frac{\sqrt{\frac{1}{2}e^{-\frac{1}{2}}}}{4}\cdot\sup_{\Delta \in[0, 1]}\left|g_T(\Delta)-g_T\left(\Delta+\frac{1}{\sqrt{g_T(\Delta)}}\right)\right|.
\]
Meanwhile, $\mathcal{R}_T \ge \sup_{\Delta \in[0, 1]}g_T(\Delta)\Delta$. Thus, apply \Cref{lem:g_tradeoff0},
    \begin{align*}
        \liminf_{T\rightarrow\infty}\frac{\mathcal R_T\mathcal S_T}{T^{\frac{3}{2}}}&\geq\liminf_{T\rightarrow\infty}\frac{\frac{1}{4}\sqrt{\frac{1}{2}e^{-\frac{1}{2}}}\left(\sup_{\Delta\in[0, 1]}g(\Delta)\Delta\right)\left(\sup_{\Delta\in[0, 1]}\left|g(\Delta)-g\left(\Delta+\frac{1}{\sqrt{g(\Delta)}}\right)\right|\right)}{T^{\frac{3}{2}}}\\
        &>\frac{1}{540}\sqrt{\frac{1}{2}e^{-\frac{1}{2}}},
    \end{align*}
    completing the proof. \qed
\end{proof}

\section{Concluding Remarks}\label{sec:conclusion}
In this paper, we identify and study a side effect of learning: the large variability in allocation. In the stochastic multi-armed bandit model, we show that there is a fundamental trade-off between any algorithm's allocation variability and expected regret, both evaluated in a worst-case sense, as long as the algorithm has an intention to learn. We further propose a family of \texttt{UCB-f} algorithms, a natural extension of \texttt{UCB1}, that achieves the Pareto frontier. Several natural questions arise from our analysis, which we outline below.
\begin{itemize}
    \item The lower bound in \Cref{thm:lower-bound} does not capture the dependence on the number of arms. Also the explicit $K$-dependence in Theorem~\ref{thm:ucb-f} is likely suboptimal. It remains open how the Pareto frontier scales with $K$.
    \item It is natural to ask whether other popular algorithms like \texttt{Thompson Sampling} can be modified to attain the full frontier. This necessary require analytical techniques different from the ones used for \texttt{UCB1-f}.
    \item Although our upper and lower bounds match up to polylog factors, it remains open whether the frontier can be sharpened further to remove these factors. 
    \item It would be interesting to extend our results to contextual bandits and, more broadly, to reinforcement learning, where exploration policies determine state–action visitation frequencies. The MAB model can be viewed as a special case of contextual bandits in which contexts are orthogonal and thus admit no information sharing across actions. We believe that with richer information sharing, the regret-allocation-variability trade-off may be less severe and allocational instability could be partially mitigated.

    \item Our findings point to intriguing connections with statistical inference under adaptive data collection. An open direction is to understand the \emph{rates} at which asymptotic normality emerges for estimators based on adaptively collected samples. Concretely, one explicit question is to characterize the convergence rate of $\sigma_i\sqrt{N_{i,T}}(\hat\mu_{i,T}-\mu_i)$ to the standard normal distribution as $T\to\infty$ for algorithms deemed stable, and in particular, to determine how this rate depends on the algorithm's allocation variability.

\end{itemize}

We view these as interesting directions and leave them for future investigation.


\newpage
\APPENDIX{}



This appendix collects proofs omitted from the main text. There are five sections. In \Cref{sec:lower-bound-proof} we prove \Cref{lem:S_lower_bound0} and \Cref{cor:hard-instance}. \Cref{sec: proof of minimax worst case} proves \Cref{thm:minimax-worst-case-interval}. In \Cref{sec: proof-ucb-f}, we provide the analysis of the proposed \texttt{UCB-f} family. \Cref{sec:pareto_other_proofs} presents other proofs of \Cref{sec:algorithm}. Finally, we prove \Cref{cor:platform-value-lower-bound} and \Cref{cor:minimax-sampling-instability} in \Cref{app:sampling_instability}.

\section{Proof of \Cref{lem:S_lower_bound0} and \Cref{cor:hard-instance}}\label{sec:lower-bound-proof}
{
We first prove Lemma~\ref{lem:S_lower_bound0}, which then completes the proof of Theorem~\ref{thm:lower-bound} (see Section~\ref{sec:lower-bound-proof0}). Recall that we consider two-armed Gaussian bandit instances. To simplify notations, we drop the subscript $T$, replacing $S_T(\cdot)$ by $S(\cdot)$ (the instance-dependent allocation variability), $N_{2, T}(\cdot)$ by $N(\cdot)$ (the number of pulls of the inferior arm) and $g_T(\cdot)$ by $g(\cdot)$ (the expected value of $N(\cdot)$). Let $P_{\nu\pi}$ denote the probability measure induced by the $T$-round interconnection of the algorithm $\pi$ and a bandit instance $\nu$. We sometimes omit the subscript $\pi$ and denote by $P_{\nu}$ when the algorithm is clear from the context. Let $D_{\text{KL}}(P\|Q)$ denote the KL divergence between two probability measures $P$ and $Q$. We then proceed with a sequence of lemmas.
}

{
\begin{lemma}\label{lem:S_lower_bound}
For any two-armed bandit instances $\nu$ and $\nu'$ and any fixed algorithm, suppose WLOG $g(\nu) \ge g(\nu')$, then 
    \begin{align*}        
    \max\{S(\nu),S(\nu')\}\geq\frac{g(\nu)-g(\nu')}{4}\sqrt{P_{\nu}\left(N(\nu)<\frac{g(\nu)+g(\nu')}{2}\right)+P_{\nu'}\left(N(\nu')\geq \frac{g(\nu)+g(\nu')}{2}\right)}.
    \end{align*}
\end{lemma}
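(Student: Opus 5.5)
Let $m \triangleq \frac{g(\nu)+g(\nu')}{2}$ and $d \triangleq g(\nu)-g(\nu')\ge 0$, so that $g(\nu)-m = m-g(\nu') = d/2$. The plan is to bound each variance from below by restricting it to the event where the pull count lands on the "wrong" side of the midpoint $m$, and then to combine the two bounds.

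\textbf{Step 1: bound $\var(N(\nu))$ on the wrong-side event.} On $\{N(\nu)<m\}$ we have $g(\nu)-N(\nu) > d/2$. Hence
\[
\var(N(\nu)) = \E_\nu\bigl[(N(\nu)-g(\nu))^2\bigr] \ge \frac{d^2}{4}\, P_\nu\bigl(N(\nu)<m\bigr).
\]

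\textbf{Step 2: the symmetric bound for $\nu'$.} On $\{N(\nu')\ge m\}$ we have $N(\nu')-g(\nu') \ge d/2$. Hence
\[
\var(N(\nu')) \ge \frac{d^2}{4}\, P_{\nu'}\bigl(N(\nu')\ge m\bigr).
\]

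\textbf{Step 3: combine.} Write $p \triangleq P_\nu(N(\nu)<m)$ and $q \triangleq P_{\nu'}(N(\nu')\ge m)$. Using $\max\{a,b\}\ge \frac{a+b}{2}$,
\[
\max\{S(\nu)^2,S(\nu')^2\} \ge \tfrac12\bigl(\var(N(\nu))+\var(N(\nu'))\bigr) \ge \frac{d^2}{8}\,(p+q).
\]
Taking square roots gives
\[
\max\{S(\nu),S(\nu')\} \ge \frac{d}{2\sqrt2}\sqrt{p+q} \ge \frac{d}{4}\sqrt{p+q},
\]
which is the claimed bound.

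No step is a real obstacle here; this is Chebyshev-type bookkeeping. The only points needing care are the strict versus non-strict inequalities at the threshold. On $\{N(\nu)<m\}$ the deviation is strictly greater than $d/2$, and on $\{N(\nu')\ge m\}$ it is at least $d/2$. Both give the $d/2$ lower bound we need.

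The substantive work is deferred to the lemmas that follow, not this one. There, Bretagnolle--Huber gives $p+q \ge \frac12 e^{-D_{\mathrm{KL}}(P_\nu\|P_{\nu'})}$. The divergence decomposition then gives $D_{\mathrm{KL}} = g(\Delta)\,(\Delta'-\Delta)^2/2 = \frac12$ when $\Delta'=\Delta+1/\sqrt{g(\Delta)}$. This yields the constant $\sqrt{\tfrac12 e^{-1/2}}/4$ in Lemma~\ref{lem:S_lower_bound0}.
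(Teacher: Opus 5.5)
Your proof is correct and follows essentially the same argument as the paper: you lower-bound each second moment by restricting to the event where the pull count falls on the wrong side of the midpoint $\frac{g(\nu)+g(\nu')}{2}$, then average over the two instances. The only difference is that you average the variances rather than the standard deviations, which avoids the paper's step $\sqrt{p}+\sqrt{q}\ge\sqrt{p+q}$ and gives the slightly sharper constant $\frac{1}{2\sqrt{2}}$ in place of $\frac{1}{4}$.
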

\proof{Proof of \Cref{lem:S_lower_bound0}}
Observe that
 \begin{align*}
    S(\nu)&=\sqrt{\mathbb E_{\nu}[(N(\nu)-g(\nu))^2]}\\
    &\geq\sqrt{\mathbb E_{\nu}\left[(N(\nu)-g(\nu))^2\mathds 1\left\{N(\nu)<\frac{g(\nu)+g(\nu')}{2}\right\}\right]}\\
    &\geq \frac{g(\nu)-g(\nu')}{2}\sqrt{P_{\nu}\left(N(\nu)<\frac{g(\nu)+g(\nu')}{2}\right)}.
\end{align*}
Similarly, 
\[
S(\nu')\geq \frac{g(\nu)-g(\nu')}{2}\sqrt{P_{\nu'}\left(N(\nu')\geq \frac{g(\nu)+g(\nu')}{2}\right)}.
\]
Also $\max\{S(\nu),S(\nu')\}\ge\frac{S(\nu)+S(\nu')}{2}.$ Apply the bounds on $S(\nu), S(\nu')$ derived above, we have
\begin{align*}
\frac{S(\nu)+S(\nu')}{2}
    &\geq\frac{g(\nu)-g(\nu')}{4}\left(\sqrt{P_{\nu}\left(N(\nu)<\frac{g(\nu)+g(\nu')}{2}\right)}+\sqrt{P_{\nu'}\left(N(\nu')\geq \frac{g(\nu)+g(\nu')}{2}\right)}\right)\\
    &\geq\frac{g(\nu)-g(\nu')}{4}\sqrt{P_{\nu}\left(N(\nu)<\frac{g(\nu)+g(\nu')}{2}\right)+P_{\nu'}\left(N(\nu')\geq \frac{g(\nu)+g(\nu')}{2}\right)}.
\end{align*}
Combining the above completes the proof. \qed
\endproof
}

{
We recall the Bretagnolle-Huber inequality (cf. Thm. 14.2 in \cite{lattimore2020bandit}).
\begin{lemma}[Bretagnolle–Huber inequality]\label{lem: BH inequality}
  Let $P, Q$ be probability measures on the same measurable space, and let $A$ be an arbitrary event. Then,
  \[
  P(A) + Q(A^c) \ge \frac{1}{2}e^{-D_{\text{KL}}(P\|Q)}.
  \]
\end{lemma}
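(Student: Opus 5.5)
The plan is the standard argument: pass to densities, lower bound the left-hand side by the overlap $\int \min(p,q)$, relate the overlap to the Bhattacharyya coefficient $\int\sqrt{pq}$ via Cauchy--Schwarz, and finish with Jensen's inequality. First, if $D_{\text{KL}}(P\|Q)=\infty$ the right-hand side is $0$ and there is nothing to prove. So assume it is finite, which in particular gives $P\ll Q$. Take the dominating measure $\mu=P+Q$ and write $p=dP/d\mu$, $q=dQ/d\mu$.

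\emph{Step 1: reduce to the overlap.} For any event $A$,
\[
P(A)+Q(A^c)=\int_A p\,d\mu+\int_{A^c} q\,d\mu\;\ge\;\int \min(p,q)\,d\mu .
\]

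\emph{Step 2: overlap versus Bhattacharyya coefficient.} Write $\sqrt{pq}=\sqrt{\min(p,q)}\sqrt{\max(p,q)}$ and apply Cauchy--Schwarz:
\[
\Bigl(\int\sqrt{pq}\,d\mu\Bigr)^2\le \int\min(p,q)\,d\mu\cdot\int\max(p,q)\,d\mu .
\]
Since $\min+\max=p+q$, we have $\int\max(p,q)\,d\mu=2-\int\min(p,q)\,d\mu\le 2$. Hence
\[
\int\min(p,q)\,d\mu\ge \tfrac12\Bigl(\int\sqrt{pq}\,d\mu\Bigr)^2 .
\]

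\emph{Step 3: Jensen.} Restrict the integral to $\{p>0\}$ and view it as an expectation under $P$. Then
\[
\Bigl(\int\sqrt{pq}\,d\mu\Bigr)^2=\exp\Bigl(2\log \E_P\bigl[\sqrt{q/p}\bigr]\Bigr)\ge \exp\Bigl(\E_P\bigl[\log(q/p)\bigr]\Bigr)=e^{-D_{\text{KL}}(P\|Q)},
\]
using concavity of $\log$. Chaining Steps 1--3 gives the claim.

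The only delicate point is Step 3. I need $\log(q/p)$ to be well defined $P$-a.s.; this holds because $P\ll Q$ gives $q>0$ $P$-a.s. I also need the expectation of $\log(q/p)$ to be finite, which is exactly the finiteness of the KL divergence assumed at the outset. The remaining steps are routine.
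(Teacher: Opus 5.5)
Your proof is correct and follows the standard route: overlap $\int\min(p,q)$, then Cauchy--Schwarz with $\int\max(p,q)\le 2$, then Jensen applied to $\log\E_P[\sqrt{q/p}]$. The paper does not prove this lemma itself but cites Thm.~14.2 of \cite{lattimore2020bandit}, whose proof is exactly this argument, and your edge cases (trivial when $D_{\text{KL}}=\infty$, with $P\ll Q$ giving $q>0$ $P$-a.s.) are handled correctly.
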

A direct application of 
Lemma~\ref{lem: BH inequality} with $P_{\nu\pi}$ and $P_{\nu'\pi}$, and the event $\left\{N < \frac{g(\nu) + g(\nu')}{2}\right\}$ yields
\begin{lemma}\label{lem:Bretagnolle-Huber}
Consider any $\nu,\nu'\in\mathcal P$ and a fixed algorithm $\pi$. Then we have
    \begin{align*}
        P_{\nu\pi}\left(N<\frac{g(\nu)+g(\nu')}{2}\right)+P_{\nu'\pi}\left(N\geq\frac{g(\nu)+g(\nu')}{2}\right)\geq\max\left\{\frac{1}{2}e^{-D_{\text{KL}}(P_{\nu\pi}\|P_{\nu'\pi})},\frac{1}{2}e^{-D_{\text{KL}}(P_{\nu'\pi}\|P_{\nu\pi})} \right\}.
    \end{align*}
\end{lemma}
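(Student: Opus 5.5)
The statement to prove is Lemma~\ref{lem:Bretagnolle-Huber}. The plan is to apply Lemma~\ref{lem: BH inequality} twice, once in each direction, with the same event. We work on the common measurable space carrying the $T$-round history $\mathcal{H}_T=(A_1,r_1,\dots,A_T,r_T)$, together with any internal randomization of the algorithm. On this space $P_{\nu\pi}$ and $P_{\nu'\pi}$ are both defined, and $N=N_{2,T}=\sum_{t\le T}\mathds{1}\{A_t=2\}$ is a measurable function of the history. This matters: $N$ is the same random variable under both measures, and only its law changes. So the set
\[
A \triangleq \left\{ N < \tfrac{g(\nu)+g(\nu')}{2}\right\}
\]
is a single event on this space, and its complement is $A^c=\{N\ge \frac{g(\nu)+g(\nu')}{2}\}$.

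First apply Lemma~\ref{lem: BH inequality} with $P=P_{\nu\pi}$, $Q=P_{\nu'\pi}$ and event $A$. This gives $P_{\nu\pi}(A)+P_{\nu'\pi}(A^c)\ge \frac12 e^{-D_{\text{KL}}(P_{\nu\pi}\|P_{\nu'\pi})}$, and the left-hand side is exactly the left-hand side of the claim.

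Next apply Lemma~\ref{lem: BH inequality} with the roles reversed, $P=P_{\nu'\pi}$ and $Q=P_{\nu\pi}$, but now with the event $A^c$. This gives $P_{\nu'\pi}(A^c)+P_{\nu\pi}(A)\ge \frac12 e^{-D_{\text{KL}}(P_{\nu'\pi}\|P_{\nu\pi})}$. The left-hand side is the same sum as before.

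Since one quantity is bounded below by both expressions, it is bounded below by their maximum, which is the claim. If either KL divergence is infinite, the corresponding bound reads $\ge 0$ and holds trivially, so no separate case is needed.

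The only point needing care is the one already noted: the event must be the same set under both measures. That holds because the measures are defined on a shared canonical space and $N$ is determined by the history. Nothing else is an obstacle.

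Looking ahead to Lemma~\ref{lem:S_lower_bound0}, the KL terms would then be evaluated with the divergence decomposition: $D_{\text{KL}}(P_{\nu\pi}\|P_{\nu'\pi})=g(\nu)\,D_{\text{KL}}(\mathcal N(-\Delta,1)\|\mathcal N(-\Delta',1))=g(\nu)(\Delta'-\Delta)^2/2=1/2$ when $\Delta'-\Delta=1/\sqrt{g(\nu)}$. That is where the constant $\sqrt{\frac12 e^{-1/2}}$ comes from, but it is not needed for the present statement.
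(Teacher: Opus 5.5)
Your proposal is correct and matches the paper's argument: the paper simply invokes the Bretagnolle--Huber inequality with $P_{\nu\pi}$, $P_{\nu'\pi}$ and the event $\{N<\frac{g(\nu)+g(\nu')}{2}\}$. Your second application, with the two measures swapped and the complementary event, is exactly what justifies the maximum over the two KL directions, a step the paper leaves implicit.
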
 
Here we remark that the two notations $P_{\nu\pi} \left(N<\frac{g(\nu)+g(\nu')}{2}\right)$ and $P_{\nu}\left(N(\nu)<\frac{g(\nu)+g(\nu')}{2}\right)$ refer to the same probability (i.e., the same event under the law induced by running algorithm $\pi$ on instance $\nu$). The next lemma follows from the divergence decomposition lemma (cf. Lemma 15.1 in \cite{lattimore2020bandit}), whose proof we omit.
\begin{lemma}\label{lem:divergence_decomposition}
    Consider any $\nu,\nu'\in\mathcal P$ and any algorithm $\pi$. WLOG assume that $\mu_1\geq\mu_2$ and $\mu_1'\geq\mu_2'$, namely, arm $1$ is the best arm. Then 
    \begin{align*}
        & D_{\text{KL}}(P_{\nu\pi}\|P_{\nu'\pi})=(T-g(\nu))D_{\text{KL}}(\nu_1\|\nu_1')+g(\nu)D_{\text{KL}}(\nu_2\|\nu_2'),\\
        & D_{\text{KL}}(P_{\nu'\pi}\|P_{\nu\pi})=(T-g(\nu'))D_{\text{KL}}(\nu_1'\|\nu_1)+g(\nu')D_{\text{KL}}(\nu_2'\|\nu_2).
    \end{align*}
\end{lemma}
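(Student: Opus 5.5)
The statement to prove is the divergence decomposition for two-armed instances. I would not prove it from scratch; I would specialize the general divergence decomposition (Lemma 15.1 in \cite{lattimore2020bandit}):
\[
D_{\text{KL}}(P_{\nu\pi}\|P_{\nu'\pi})=\sum_{i=1}^{K}\E_{\nu\pi}[N_{i,T}]\,D_{\text{KL}}(\nu_i\|\nu_i').
\]
The steps, in order, are as follows.

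\textbf{Step 1: the chain rule.} Write the density of the history $\mathcal{H}_T=(A_1,r_1,\dots,A_T,r_T)$ under $P_{\nu\pi}$ as
\[
\prod_{t=1}^{T}\pi_t(A_t\mid\mathcal{H}_{t-1})\,p_{\nu_{A_t}}(r_t),
\]
with respect to the product of counting measure and a common dominating measure (Lebesgue measure for the Gaussian instances of interest). The policy factors $\pi_t$ are the same under $\nu$ and $\nu'$, so they cancel in the log-likelihood ratio, leaving $\sum_t\log\bigl(p_{\nu_{A_t}}(r_t)/p_{\nu'_{A_t}}(r_t)\bigr)$.

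\textbf{Step 2: take expectations.} Take $\E_{\nu\pi}$ of this sum and condition on $(\mathcal{H}_{t-1},A_t)$. Given $A_t=i$, the reward $r_t$ is distributed as $\nu_i$, so the $t$-th term has conditional expectation $D_{\text{KL}}(\nu_{A_t}\|\nu'_{A_t})$. Summing over $t$ gives $\sum_i\E_{\nu\pi}[N_{i,T}]\,D_{\text{KL}}(\nu_i\|\nu_i')$. Each term is nonnegative and finite for Gaussians, so exchanging the sum and the expectation is justified.

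\textbf{Step 3: specialize to two arms.} With $K=2$ we have $N_{1,T}=T-N_{2,T}$, so $\E_{\nu}[N_{1,T}]=T-g(\nu)$ and $\E_\nu[N_{2,T}]=g(\nu)$. Here $g(\nu)$ is the expected pull count of arm $2$, which is the inferior arm under the WLOG labeling $\mu_1\ge\mu_2$. This yields the first identity. The second follows by swapping the roles of $\nu$ and $\nu'$, using $\E_{\nu'}[N_{2,T}]=g(\nu')$ and the analogous labeling $\mu'_1\ge\mu'_2$.

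The main point needing care is bookkeeping rather than mathematics. The labeling must be consistent across the two instances, so that arm $2$ is the one counted by $g$ in both. Otherwise, once absolute continuity holds (automatic for Gaussians with unit variance), this is a direct consequence of the textbook lemma. In the subsequent application, for $\nu(\Delta)$ versus $\nu(\Delta')$ only arm $2$ differs. The KL divergence then reduces to $g(\Delta)(\Delta'-\Delta)^2/2=\tfrac12$, which is what produces the constant $e^{-1/2}$ in Lemma~\ref{lem:S_lower_bound0}.
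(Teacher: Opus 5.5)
Your proposal is correct and follows the paper's route. The paper cites the divergence decomposition (Lemma 15.1 of \cite{lattimore2020bandit}) without proof; your chain-rule sketch reproduces that textbook argument and then specializes to two arms via $\mathbb{E}_{\nu}[N_{1,T}]=T-g(\nu)$. Your point that arm $2$ must be labeled consistently across $\nu$ and $\nu'$ is exactly the role of the WLOG assumption in the statement.
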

Recall that for Gaussian distribution $\mathcal{N}(\mu, 1)$ and $\mathcal{N}(\mu', 1)$, their KL divergence is equal to $\frac{1}{2}(\mu- \mu')^2$. Consider two Gaussian measures $\nu = \{\mathcal{N}(\mu_1, 1), \mathcal{N}(\mu_2, 1) \}$ and $\nu' = \{\mathcal{N}(\mu_1', 1), \mathcal{N}(\mu_2', 1) \}$, where $\mu_1\geq\mu_2$, $\mu_1'\geq\mu_2'$ and $\mu_1=\mu_1'$. \Cref{lem:Bretagnolle-Huber}--\Cref{lem:divergence_decomposition} together yield the following.
\begin{lemma}\label{lem:tail-bound-gaussian}
    Consider any algorithm $\pi$. For the two Gaussian measures $\nu,\nu'$ described above with $|\mu_2-\mu_2'|=\frac{1}{\sqrt{g(\nu)}}$ or $|\mu_2-\mu_2'|=\frac{1}{\sqrt{g(\nu')}}$,
    \begin{align*}
        P_{\nu\pi}\left(N(\nu)<\frac{g(\nu)+g(\nu')}{2}\right)+P_{\nu'\pi}\left(N(\nu')\geq\frac{g(\nu)+g(\nu')}{2}\right)\geq \frac{1}{2}e^{-\frac{1}{2}}.
    \end{align*}
\end{lemma}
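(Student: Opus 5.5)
The plan is to combine \Cref{lem:Bretagnolle-Huber} with \Cref{lem:divergence_decomposition} and then compute the resulting KL divergence explicitly. Note first that the event $\{N<\frac{g(\nu)+g(\nu')}{2}\}$ is a single measurable event on the common space of histories. Its complement is $\{N\geq \frac{g(\nu)+g(\nu')}{2}\}$. So \Cref{lem:Bretagnolle-Huber} gives the left-hand side a lower bound of
\[
\max\left\{\tfrac12 e^{-D_{\text{KL}}(P_{\nu\pi}\|P_{\nu'\pi})},\ \tfrac12 e^{-D_{\text{KL}}(P_{\nu'\pi}\|P_{\nu\pi})}\right\}.
\]
It therefore suffices to show that one of the two divergences is at most $\frac12$. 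We pick which one according to how the gap $|\mu_2-\mu_2'|$ is specified.

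Next, apply \Cref{lem:divergence_decomposition}. Since $\mu_1=\mu_1'$ and both arms have unit variance, $D_{\text{KL}}(\nu_1\|\nu_1')=D_{\text{KL}}(\nu_1'\|\nu_1)=0$, so the first-arm terms vanish. Using the Gaussian formula $D_{\text{KL}}(\mathcal N(\mu,1)\|\mathcal N(\mu',1))=\frac12(\mu-\mu')^2$, we get
\[
D_{\text{KL}}(P_{\nu\pi}\|P_{\nu'\pi})=\tfrac12 g(\nu)(\mu_2-\mu_2')^2,\qquad D_{\text{KL}}(P_{\nu'\pi}\|P_{\nu\pi})=\tfrac12 g(\nu')(\mu_2-\mu_2')^2 .
\]
The divergence decomposition counts pulls of arm $2$ under the measure in the first argument, and arm $2$ is the inferior arm in both instances. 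So $g(\cdot)$ is exactly the expected pull count appearing here, and the ordering assumptions $\mu_1\ge\mu_2$, $\mu_1'\ge\mu_2'$ are consistent with the decomposition as stated.

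We then split into two cases. If $|\mu_2-\mu_2'|=1/\sqrt{g(\nu)}$, the first divergence equals $\frac12 g(\nu)\cdot\frac{1}{g(\nu)}=\frac12$. If $|\mu_2-\mu_2'|=1/\sqrt{g(\nu')}$, the second divergence equals $\frac12$. In either case the corresponding term in the maximum is $\frac12 e^{-1/2}$, which gives the claim.

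The only point needing care, and the closest thing to an obstacle, is that $g(\nu),g(\nu')>0$, so that the gap $1/\sqrt{g}$ is well defined. This follows from the standing regularity condition that each arm is pulled at least once in expectation. Beyond that, the argument is a direct substitution.
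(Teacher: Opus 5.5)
Your proposal is correct and follows the same route as the paper. You apply the Bretagnolle--Huber bound in whichever direction matches the specified gap, and use the divergence decomposition with $\mu_1=\mu_1'$, so the relevant KL divergence is $\frac12 g(\cdot)(\mu_2-\mu_2')^2=\frac12$ and the bound $\frac12 e^{-1/2}$ follows.
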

We now complete the proof of Lemma~\ref{lem:S_lower_bound0}.

\begin{proof}{Proof of Lemma~\ref{lem:S_lower_bound0}}
The lemma directly follows from Lemma~\ref{lem:S_lower_bound} and Lemma~\ref{lem:tail-bound-gaussian}.
\qed
\end{proof}
}

We now provide the proof of \Cref{cor:hard-instance}. 
\begin{proof}{Proof of \Cref{cor:hard-instance}}
    {The proof is similar to the proof of \Cref{thm:lower-bound}. In particular, \Cref{lem:S_lower_bound0} indicates that it suffices to find values of $\Delta_T=O(T^{\alpha-\frac{1}{2}})$ such that $g_T(\Delta_T)=\Theta(T)$ (so that $\Delta_T'=O(T^{\alpha-\frac{1}{2}})$ as well) and $\left|g_T(\Delta_T)-g_T(\Delta_T')\right|=\Omega(T^{1-\alpha})$. 
    
    We utilize the proof of \Cref{lem:g_tradeoff0} to find this $\Delta_T$. In particular, in the proof of \Cref{lem:g_tradeoff0}, there are two cases. In the first case, let $\Delta_{T}=\Delta_{\tau_T-1,T}$. Then we have $g_T(\Delta_T)\geq\frac{T}{4}$ and $\left|g_T(\Delta_T)-g_T(\Delta_T')\right|\geq\frac{T}{5}$ when $T$ is sufficiently large. Also, since the worst-case regret is $\Theta(T^{\frac{1}{2}+\alpha})$, it must be that $\Delta_T=O(T^{\alpha-\frac{1}{2}})$. In the second case, from the proof, there must exist $\Delta_T\in[0,\Delta_{\tau_T-1,T}]$ such that $g_T(\Delta_T)\geq\frac{T}{4}$ and $\left|g_T(\Delta_T)-g_T(\Delta_T')\right|\geq\frac{T}{4\tau_T}$, where $\tau_T$ satisfies $\frac{\tau_T\sqrt{T}}{20}<\sup_{\Delta\in[0,1]}g_T(\Delta)\Delta=\Theta(T^{\frac{1}{2}+\alpha})$. Then, it must be that $\tau_T=O(T^\alpha)$ hence $\left|g_T(\Delta_T)-g_T(\Delta_T')\right|=\Omega(T^{1-\alpha})$, and that $\Delta_T=O(T^{\alpha-\frac{1}{2}})$ since $g_T(\Delta_T)\geq\frac{T}{4}$ and the worst-case regret is $\Theta(T^{\frac{1}{2}+\alpha})$.} \qed
\end{proof}

\section{Proof of \Cref{thm:minimax-worst-case-interval}}\label{sec: proof of minimax worst case}
To prove Theorem~\ref{thm:minimax-worst-case-interval}, we first introduce some additional notation. Recall that $\nu_1(\Delta) = \mathcal{N}(0, 1),$ $\nu_2(\Delta) = \mathcal{N}(-\Delta, 1)$. Also recall that $\mathbf{X}^i \triangleq (X^i_1, \dots, X^i_T)$ is the rewards of the first $T$ pulls from arm $i = 1, 2$. Then $\mathbf{X}^i$ follows \textit{i.i.d.} $\mathcal{N}(\mu_i, 1)$ with $\mu_1 = 0$ and $\mu_2 = -\Delta.$ Let $\mathbf{X} = (\mathbf{X}^1, \mathbf{X}^2)$. The first $T$ rounds of interaction between any algorithm and the bandits must be within $\mathbf{X}$. We adopt notation $\mathbb{E}_{\Delta}, P_{\Delta}$ to emphasize the underlying model $\nu(\Delta)$. In particular, $f_{\Delta}(\cdot)$ denote the \emph{p.d.f.} of $\mathbf{X}$ under $\nu(\Delta)$. We also denote $N = N_{2, T}$ the pull counts of the second (inferior) arm. Before proving the theorem, we first state the standard change of measure identity, the proof of which we omit.
\begin{lemma}[Change of measure]\label{lem:change-of-measure}
    Suppose $h: \mathbb{R}^{ 2 T} \to \mathbb R$ is a measurable, bounded function that maps $\mathbf{X}$ to a real number. Then
    \[
    \mathbb{E}_{\Delta} [h(\mathbf{X})] = \mathbb{E}_{\Delta'}\left[h(\mathbf{X}) \cdot \left(\frac{f_{\Delta}(\mathbf{X})}{f_{\Delta'}(\mathbf{X})}\right)\right]
    \]
\end{lemma}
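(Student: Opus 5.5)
The plan is to reduce the identity to a one-line computation with Lebesgue densities on $\mathbb{R}^{2T}$. Under $\nu(\Delta)$ the vector $\mathbf{X}=(\mathbf{X}^1,\mathbf{X}^2)$ consists of $2T$ independent Gaussians: the coordinates of $\mathbf{X}^1$ are $\mathcal{N}(0,1)$ and those of $\mathbf{X}^2$ are $\mathcal{N}(-\Delta,1)$. Hence $\mathbf{X}$ has the explicit product density
\[
f_\Delta(\mathbf{x})=\prod_{s=1}^T \phi(x^1_s)\,\prod_{s=1}^T \phi(x^2_s+\Delta),
\]
where $\phi$ is the standard normal density. The first step is to record that $f_{\Delta'}(\mathbf{x})>0$ for every $\mathbf{x}\in\mathbb{R}^{2T}$. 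This makes the ratio $f_\Delta/f_{\Delta'}$ a well-defined, finite, measurable function. It also shows that the two laws of $\mathbf{X}$ are mutually absolutely continuous, with Radon--Nikodym derivative $f_\Delta/f_{\Delta'}$.

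The second step is the computation itself. Since $h$ is bounded and measurable, $h f_\Delta$ is integrable, and
\[
\mathbb{E}_\Delta[h(\mathbf{X})]=\int_{\mathbb{R}^{2T}} h(\mathbf{x})f_\Delta(\mathbf{x})\,d\mathbf{x}
=\int_{\mathbb{R}^{2T}} h(\mathbf{x})\frac{f_\Delta(\mathbf{x})}{f_{\Delta'}(\mathbf{x})}f_{\Delta'}(\mathbf{x})\,d\mathbf{x}
=\mathbb{E}_{\Delta'}\!\left[h(\mathbf{X})\frac{f_\Delta(\mathbf{X})}{f_{\Delta'}(\mathbf{X})}\right].
\]
The middle equality only multiplies and divides by a strictly positive function. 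The right-hand side is finite because its integrand equals $h f_\Delta$ pointwise.

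The one point needing care, which I expect to be the only obstacle, is how the lemma will be used later. There $h$ will be something like $\mathds{1}\{N\in B\}$, and $N$ depends not only on $\mathbf{X}$ but also on the algorithm's internal randomization $U$. I would handle this in one of two ways. The first is to note that the law of $U$ does not depend on $\Delta$, so one conditions on $U$ and applies the identity above for each fixed value of $U$. The second is to enlarge the sample space to $(\mathbf{X},U)$, whose joint density ratio is still $f_\Delta(\mathbf{X})/f_{\Delta'}(\mathbf{X})$ because the factor for $U$ cancels. Either route extends the lemma to bounded measurable functions of $(\mathbf{X},U)$ by Fubini, which is the form needed in the proof of \Cref{thm:minimax-worst-case-interval}.
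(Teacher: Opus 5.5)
Your proof is correct, and it is exactly the standard density computation the paper has in mind when it omits the proof: $f_{\Delta'}$ is strictly positive, so you can multiply and divide inside $\int h f_\Delta\,d\mathbf{x}$, and integrability follows from $h$ being bounded. Your extension to bounded functions of $(\mathbf{X},U)$ via Fubini is also worthwhile. The paper later asserts that $N$ is a measurable function of $\mathbf{X}$ alone, which is literally true only for deterministic policies, so your extension is what makes the application in the proof of \Cref{thm:minimax-worst-case-interval} valid for randomized algorithms.
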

 We are now ready to prove \Cref{thm:minimax-worst-case-interval}.

\begin{proof}{Proof of \Cref{thm:minimax-worst-case-interval}}

Recall that $g_T(\Delta) = \mathbb{E}_\Delta[N]$, where $N$ is the pull count of the second arm, and we omit its dependence on the instance $\Delta$, since this dependence is made explicit in $\mathbb{E}_\Delta$. By definition, $R_T\left(\frac{3c_0}{\sqrt{T}}\right) \le \mathcal{R}_T  \le c_0 \sqrt{T}$ for $T$ sufficiently large (hence $\frac{3c_0}{\sqrt{T}} \le 2$). Recall that $R_T(\Delta) = \Delta g_T(\Delta)$. We thus have $g_T\left(\frac{3c_0}{\sqrt{T}}\right) \le \frac{T}{3}$. We also recall $g_T(0) = T/2.$ Denote $\frac{3 c_0}{\sqrt{T}} $ by $\Delta_0.$ For any $\Delta$, either $|g_T(\Delta) - g_T(0)| \ge \frac{T}{12}$ or $|g_T(\Delta) - g_T(\Delta_0)| \ge \frac{T}{12}$. WLOG let's assume $g_T(0) - g_T(\Delta) \ge \frac{T}{12}$ (in other cases the reasoning is similar and we omit). Under this condition, we derive the following bound on $S_T(\Delta)$.
\begin{align*}
    S_T(\Delta)&=\sqrt{\mathbb E_{\Delta}[(N -g_T(\Delta))^2]}\\
    &\geq\sqrt{\mathbb E_{\Delta}\left[(N-g_T(\Delta))^2\mathds 1\left\{N >\frac{g_T(\Delta)+g_T(0)}{2}\right\}\right]}\\
    &\geq \frac{|g_T(\Delta)-g_T(0)|}{2}\sqrt{P_{\Delta}\left(N>\frac{g_T(\Delta)+g_T(0)}{2}\right)}\\
    &\ge \frac{1}{24}\sqrt{P_{\Delta}\left(N>\frac{g_T(\Delta)+g_T(0)}{2}\right)} \cdot  T.
\end{align*}
Let $\mathcal{B} = \left\{N>\frac{g_T(\Delta)+g_T(0)}{2}\right\}$.  It suffices to show $P_{\Delta}\left(\mathcal{B} \right)$ is lower bounded by a constant. Importantly, under any algorithm, $N = N(\mathbf{X})$ is a measurable function of $\mathbf{X}$ satisfying the bound $N \le T$. Thus we may apply Lemma~\ref{lem:change-of-measure} with $\Delta' = 0$ and $h(\cdot) = \mathds{1}\{\mathcal{B} \}$:
\begin{align*}
P_\Delta\left(\mathcal{B}\right) = \mathbb{E}_{\Delta}\left[\mathds{1}\left\{\mathcal{B}\right\}\right] = \mathbb{E}_{0}\left[\mathds{1}\left\{\mathcal{B}\right\}\cdot \left(\frac{f_{\Delta}(\mathbf{X})}{f_{0}(\mathbf{X})}\right)\right].
\end{align*}
By definition of $\nu(\Delta)$, we have
\[
f_{\Delta}(\mathbf{X}) = \prod_{j = 1}^T \left(\frac{1}{2\pi}\exp\left(-\frac{(X^1_j)^2}{2}\right)\exp\left(-\frac{(X^2_j + \Delta)^2}{2}\right)\right).
\]
Plugging back to get
\begin{align*}
    P_\Delta\left(\mathcal{B}\right) &= \mathbb{E}_{0}\left[\mathds{1}\left\{\mathcal{B}\right\}  \prod_{j  = 1}^T\exp\left(-\frac{(X^2_j + \Delta)^2}{2}+ \frac{(X^2_j)^2}{2}\right) \right] \\
    &= \mathbb{E}_{0}\left[\mathds{1}\left\{\mathcal{B}\right\}  \cdot \exp\left(-  \frac{\Delta^2}{2} T- \Delta \sum_{j = 1}^T X^2_j\right) \right].
\end{align*}
Let $\mathcal{A} = \left\{\frac{1}{\sqrt{T}}\sum_{j = 1}^T X^2_j < 3\right\}$, we lower bound the above to get
\begin{align*}
  &\mathbb{E}_{0}\left[\mathds{1}\left\{\mathcal{B}\right\}  \cdot \exp\left(-  \frac{\Delta^2}{2} T- \Delta \sum_{j = 1}^T X^2_j\right) \right] \\
  & \ge \mathbb{E}_{0}\left[\mathds{1}\left\{\mathcal{B}\right\}  \cdot \exp\left(-  \frac{\Delta^2}{2} T- \Delta \sum_{j = 1}^T X^2_j\right) \mathds{1}\{\mathcal{A}\}\right] \\
  & \ge \mathbb{E}_{0}\left[\mathds{1}\left\{\mathcal{B}\right\}  \cdot \exp\left(-  \frac{\Delta^2}{2} T - 3\Delta \sqrt{T}\right) \mathds{1}\{\mathcal{A}\} \right]\\
  & \ge \exp\left(-  \frac{ c_1^2}{2} -  3 c_1\right) \mathbb{E}_{0}\left[\mathds{1}\left\{\mathcal{B}\right\} \cdot \mathds{1}\{\mathcal{A}\}\right]\\
  &= \exp\left(-  \frac{c_1^2}{2} - 3 c_1\right) \cdot P_{0}\left(\mathcal{B} \cap \mathcal{A}\right)
\end{align*}
where the second to the last inequality follows from $\Delta < \Delta_0 = \frac{c_1}{\sqrt{T}}.$ Observe that,
\begin{align*}
\frac{T}{2} &= \mathbb{E}_{0}\left[N \right]\\
&= \mathbb{E}_{0}\left[N\mathds{1}\{\mathcal{B}\} \right] + \mathbb{E}_{0}\left[N\mathds{1}\{\mathcal{B}^c\} \right]\\
& \le \frac{g_T(\Delta)+g_T(0)}{2}\cdot P_0\left(\mathcal{B}^c \right) + T \cdot P_0\left(\mathcal{B} \right)\\
& \le \left(\frac{T}{2} - \frac{T}{24}\right)\cdot P_0\left(\mathcal{B}^c \right) + T \cdot P_0\left(\mathcal{B} \right),
\end{align*}
which yields $P_0\left(\mathcal{B} \right) \ge \frac{1}{11}.$ On the other hand, under $\nu(0)$, $\sum_{j = 1}^T X^2_j \sim \mathcal{N}(0, T)$, thus $P_0(\mathcal{A}) =   \Phi(3)$, where $\Phi$ is the standard normal \emph{c.d.f.} Combining the above with an additional bound:
\[
P_{0}\left(\mathcal{B} \cap \mathcal{A}\right) = P_{0}\left(\mathcal{B} \right) + P_{0}\left(\mathcal{A}\right) - P_{0}\left(\mathcal{B} \cup \mathcal{A}\right) \ge \frac{1}{11} + \Phi(3) - 1  > \frac{1}{16}.
\]
Plugging back to the $S_T(\Delta)$ bound to get $S_T(\Delta) \ge \frac{e^{-  \frac{ c_1^2}{2} -  3 c_1}}{100} T$, completing the proof. \qed
\end{proof}

\section{Proof of \Cref{thm:ucb-f}}\label{sec: proof-ucb-f}
{
We introduce some additional notation. Throughout, fix a bandit instance $\nu = (\nu_1, \dots, \nu_K) \in \mathfrak{P}_{\text{sg}}(M, \sigma)$. Let $\widehat\mu_i(s) \triangleq \frac{1}{s}\sum_{t = 1}^s X^i_t$ denote the empirical mean of arm $i$ based on its first $s$ samples. Assume without loss of generality that the arm means are in a descending order, i.e. $\mu_1 \ge \mu_2 \ge \dots \mu_K$. In particular, the first arm has the largest mean, i.e. $\mu_1 = \mu^*$. Define $n_i^t,  i=1,2,...,K$, as the solution of the following fluid system of equations.
\begin{align}\label{eq:fluid-balance}
\begin{cases}
    \mu_1+\dfrac{f(t)}{\sqrt{n_1^t}}=\mu_i+\dfrac{f(t)}{\sqrt{n_i^t}}, & \forall i=2,\dots,K,\\[2mm]
    \sum_{i=1}^K n_i^t = t.
\end{cases}
\end{align}
For brevity, denote $n_i = n_i^T$ and $N_i = N_{i, T}$.
}

We state two auxiliary lemmas about $f(\cdot)$ and $n_i^t$.
\begin{lemma}\label{lem:key_ft_property}
    If the exploration function $f(\cdot)$ satisfies both $f(t)$ and $\frac{\sqrt{t}}{f(t)}$ increasing in $t$, then for any $\nu\in\mathcal P$, both $n_i^t$ and $\frac{\sqrt{n_i^t}}{f(t)}$ increase in $t$ for all $i=1,2,...,K$.
\end{lemma}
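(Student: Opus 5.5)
Throughout the argument I will write $a_i \triangleq \Delta_i = \mu_1-\mu_i \ge 0$, and I will work with the reparametrization $x_i \triangleq \sqrt{n_i^t}/f(t)$. With this notation the fluid system \eqref{eq:fluid-balance} becomes
\[
\frac{1}{x_i}=\frac{1}{x_1}+a_i,\quad i\ge 2,
\qquad \sum_{i=1}^K x_i^2 = \frac{t}{f(t)^2}.
\]
This gives $x_i = \phi_i(x_1) \triangleq x_1/(1+a_i x_1)$. Each $\phi_i$ is continuous and strictly increasing in $x_1>0$, with $\phi_i(0^+)=0$. Hence the map $F(x_1)\triangleq \sum_i \phi_i(x_1)^2$ is continuous and strictly increasing, running from $0$ to $\infty$ (its first term is $x_1^2$). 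The solution therefore exists and is unique: $x_1 = F^{-1}\!\big(t/f(t)^2\big)$.

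Monotonicity now follows from the hypothesis that $\sqrt t/f(t)$ is increasing. That makes $t/f(t)^2$ nondecreasing (strictly, if the hypothesis is strict), so $x_1$ is nondecreasing in $t$. Because each $\phi_i$ is increasing, every $x_i = \sqrt{n_i^t}/f(t)$ is nondecreasing in $t$. This is the second claim. The first claim uses the hypothesis that $f$ is increasing: we have $n_i^t = f(t)^2 x_i^2$, a product of two nonnegative nondecreasing factors, so it is nondecreasing.

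The main point needing care is the well-posedness step, i.e. that \eqref{eq:fluid-balance} has a unique positive solution for every $t$, which is what makes the implicit functions well defined. The reparametrization handles this through the strict monotonicity of $F$. It also takes care of ties ($a_i=0$ gives $x_i=x_1$) without any special treatment. No differentiability of $f$ is needed, since the argument is purely order-theoretic: composition and products of monotone maps. If strict increase is wanted, it follows whenever $t/f(t)^2$ is strictly increasing.
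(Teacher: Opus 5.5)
Your proof is correct and uses the same mechanism as the paper's. The balance equations force $1/x_i - 1/x_1 = \Delta_i$ independently of $t$, so all $x_i=\sqrt{n_i^t}/f(t)$ move in the same direction, and $\sum_i x_i^2 = t/f(t)^2$ then fixes that direction. The paper argues by contradiction (subtracting the balance equations at $t_1<t_2$ and summing squares), whereas your explicit parametrization $x_i = x_1/(1+\Delta_i x_1)$ gives a direct argument and also proves existence and uniqueness of the fluid solution, which the paper takes for granted.
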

\begin{proof}{Proof}
    We first prove that $\frac{\sqrt{n_i^t}}{f(t)}$ increases in $t$ for all $i$. Suppose this is not true, then there exists $i\in\{1,2,...,K\}$ and $t_1<t_2$ such that $\frac{\sqrt{n_i^{t_1}}}{f(t_1)}>\frac{\sqrt{n_i^{t_2}}}{f(t_2)}$.
    
    By definition, for any $j\neq i$, we have
    \begin{align*}
        \mu_i+\frac{f(t_1)}{\sqrt{n_i^{t_1}}}=\mu_j+\frac{f(t_1)}{\sqrt{n_j^{t_1}}}, \ \ \mu_i+\frac{f(t_2)}{\sqrt{n_i^{t_2}}}=\mu_j+\frac{f(t_2)}{\sqrt{n_j^{t_2}}},
    \end{align*}
    hence by subtracting the above two equations, we get
    \begin{align*}
        \frac{f(t_1)}{\sqrt{n_i^{t_1}}}-\frac{f(t_2)}{\sqrt{n_i^{t_2}}}=\frac{f(t_1)}{\sqrt{n_j^{t_1}}}-\frac{f(t_2)}{\sqrt{n_j^{t_2}}}<0.
    \end{align*}
    That is, $\frac{\sqrt{n_i^{t_1}}}{f(t_1)}>\frac{\sqrt{n_i^{t_2}}}{f(t_2)}$ or equivalently $\frac{n_i^{t_1}}{f(t_1)^2}>\frac{n_i^{t_2}}{f(t_2)^2}$ for all $i=1,2,...,K$. Sum over the inequalities for all $i$, we get
    \begin{align*}
        \frac{t_1}{f(t_1)^2}=\frac{\sum_{i=1}^Kn_i^{t_1}}{f(t_1)^2}>\frac{\sum_{i=1}^Kn_i^{t_2}}{f(t_2)^2}=\frac{t_2}{f(t_2)^2},
    \end{align*}
    contradicting $\frac{\sqrt{t}}{f(t)}$ being increasing in $t$.

    The term $n_i^t$ increases in $t$ simply follows from the fact that both $f(t)$ and $\frac{\sqrt{n_i^t}}{f(t)}$ increase in $t$.
\end{proof}

\begin{lemma}\label{lem:n_i_asymptotics}
 If the exploration function $f(\cdot)$ satisfies both $f(t)$ and $\frac{\sqrt{t}}{f(t)}$ increasing in $t$, then $\inf_{\nu\in\mathfrak{P}_{\text{sg}}}n_i$ and $\inf_{\nu\in\mathfrak{P}_{\text{sg}}}\frac{n_2}{f(T)}$ both diverges as $T \to \infty$ for any $i=2,...,K$. Moreover, for such $f(\cdot)$, $\sup_{\nu\in\mathfrak{P}_{\text{sg}}}(n_i\Delta_i)=O\left(T^{\frac{1}{2}}f(T)\right)$ for any $i=2,...,K$. 
\end{lemma}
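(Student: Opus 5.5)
The argument works directly with the fluid system \eqref{eq:fluid-balance}. For each $i\ge2$, write $x_i=\sqrt{n_i}/f(T)$, which gives $n_i=f(T)^2x_i^2$. The balance equation then reads
\[
\frac{1}{x_i}=\frac{1}{x_1}+\Delta_i,
\qquad\text{so}\qquad
x_i=\frac{x_1}{1+\Delta_i x_1}\le x_1 .
\]
Hence $n_1$ is the largest of the $n_i$. Since $\sum_i n_i=T$ and $K\le\bar K$, we get $n_1\ge T/\bar K$, so
\[
x_1\ge \frac{\sqrt{T/\bar K}}{f(T)} .
\]

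\textbf{Divergence.} Because $\Delta_i\le 2M$ on $\mathfrak{P}_{\mathrm{sg}}$,
\[
x_i\ge \frac{x_1}{1+2Mx_1}=\frac{1}{1/x_1+2M}.
\]
The map $x\mapsto x/(1+2Mx)$ is increasing, so this lower bound is uniform over the class. We have $f(T)\ge c\log T$ because $f(t)/\log t$ is non-decreasing, and $\sqrt{T}/f(T)$ is non-decreasing. Two cases arise.
\begin{itemize}
\item If $\sqrt{T}/f(T)\to\infty$, then $1/x_1\to0$ uniformly, so $x_i\ge c'>0$ and
\[
n_i\ge c'^2 f(T)^2\to\infty,\qquad \frac{n_i}{f(T)}\ge c'^2 f(T)\to\infty .
\]
\item If $\sqrt{T}/f(T)$ stays bounded, then $x_1$ is only bounded below by a positive constant. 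This still gives $x_i\ge c''>0$, and the same two conclusions follow.
\end{itemize}
In both cases the bounds are uniform in $\nu$, so $\inf_\nu n_i$ and $\inf_\nu n_2/f(T)$ diverge. The growth of $f$ itself is what drives both: $f(T)^2$ gives divergence of $n_i$, and $f(T)\to\infty$ gives divergence of $n_2/f(T)$.

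\textbf{Regret term.} Here
\[
n_i\Delta_i=f(T)^2x_i^2\Delta_i .
\]
From $1/x_i=1/x_1+\Delta_i\ge\Delta_i$ we get $x_i\le 1/\Delta_i$, hence $x_i^2\Delta_i\le x_i$. Also $x_i\le x_1$, and $n_1\le T$ gives $x_1\le\sqrt{T}/f(T)$. Therefore
\[
n_i\Delta_i\le f(T)^2\cdot\frac{\sqrt T}{f(T)}=\sqrt{T}\,f(T),
\]
uniformly over $\nu$. This is actually an exact bound, not merely $O(\cdot)$.

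The main point to verify is existence and uniqueness of the fluid solution, which is what makes $x_i$ well defined. The left-hand side $\sum_i f(T)^2 x_i(x_1)^2$ is continuous and strictly increasing in $x_1$, runs from $0$ to $\infty$, and so hits $T$ exactly once. With that in place, everything else is the algebra above.
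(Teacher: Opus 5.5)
Your argument is correct. It is essentially the computation the paper compresses into ``by straightforward algebra, $\inf_{\nu}n_i=\Theta(f(T)^2)$''. The paper omits the regret part, and your bound $n_i\Delta_i\le \sqrt{T}f(T)$, obtained from $x_i\le x_1\wedge \Delta_i^{-1}$, supplies it cleanly.

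One remark on hypotheses. Your step $f(T)\ge c\log T$ uses the hypothesis of Theorem~\ref{thm:ucb-f} ($f(t)/\log t$ non-decreasing), not the lemma's literal hypothesis ($f$ increasing). Some condition forcing $f(T)\to\infty$ is genuinely needed: for bounded increasing $f$, the instance with $\Delta_i=2M$ has $n_i\le f(T)^2/(4M^2)$, so the divergence claims would fail.
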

\begin{proof}{Proof}
    By straightforward algebra, one can get $\inf_{\nu\in\mathfrak{P}_{\text{sg}}}n_i=\Theta(f(T)^2).$  The rest of the proof follows and is omitted. \qed
\end{proof}

We now establish the following key lemma to prove \Cref{thm:ucb-f}.
{
\begin{lemma}\label{lem:N_i_tail_bound}
Consider \texttt{UCB-f} with an exploration function $f(\cdot)$ such that both
$f(t)/\log t$ and $\sqrt{t}/f(t)$ are non-decreasing in $t$. There exists a constant
$\underline{T}\equiv \underline{T}(M,\sigma,K,\alpha)$ such that, for any
$\nu\in\mathfrak{P}_{\mathrm{sg}}$, any $T\ge \underline{T}$, and any $i\in\{1,\dots,K\}$,
\begin{align*}
P_{\nu}\!\left(N_i-n_i>q\right)
&\le \frac{K-1}{4q^4},
\qquad \forall\, q\ge \frac{\log T}{f(T)}\left(n_i\wedge n_2\right),\\
P_{\nu}\!\left(N_i-n_i<-q\right)
&\le \frac{(K-1)^6}{4q^4},
\qquad \forall\, q\ge \frac{(K-1)\log T}{f(T)}\,n_2.
\end{align*}
\end{lemma}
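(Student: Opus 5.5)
Fix an arm $i$ and $q$ in the stated range, and set $m \triangleq n_i+q$. The approach is to convert the deviation event into a statement about the last time arm $i$ is pulled while its count is near $m$, and then use concentration of empirical means together with the fluid balance \eqref{eq:fluid-balance}.

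\textbf{Upper tail.} If $N_i-n_i>q$, then arm $i$ is pulled at some round $t\le T$ with $N_{i,t-1}=\lceil m\rceil-1\ge m-1$. At that round its index exceeds the index of every other arm $j$:
\[
\widehat\mu_i(N_{i,t-1})+\frac{f(t)}{\sqrt{m-1}} \;\ge\; \widehat\mu_j(N_{j,t-1})+\frac{f(t)}{\sqrt{N_{j,t-1}}}.
\]
Since $\sum_j N_{j,t-1}=t-1\le T$ and $N_{i,t-1}>n_i$, pigeonhole gives some $j\ne i$ with $N_{j,t-1}\le n_j^{T}$. By \Cref{lem:key_ft_property}, $\sqrt{n_j^T}/f(T)\ge \sqrt{n_j^t}/f(t)$ for $t\le T$. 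Hence, using the fluid relation $\mu_i+f(T)/\sqrt{n_i}=\mu_j+f(T)/\sqrt{n_j}$, the index of $j$ is at least
\[
\widehat\mu_j+f(t)\Bigl(\frac{1}{\sqrt{n_i}}+\frac{\mu_i-\mu_j}{f(T)}\Bigr).
\]
It is convenient to compare everything at the scale $f(t)$ versus $f(T)$. Up to that rescaling, the event therefore forces
\[
\bigl(\widehat\mu_i(m-1)-\mu_i\bigr)+\bigl(\mu_j-\widehat\mu_j(s)\bigr)\;\ge\; f(t)\Bigl(\frac{1}{\sqrt{n_i}}-\frac{1}{\sqrt{n_i+q}}\Bigr)\;\gtrsim\;\frac{f(t)\,q}{n_i^{3/2}}\wedge\frac{f(t)}{\sqrt{n_i}},
\]
for some sample size $s\le n_j$.

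Next I would take a union bound over $j\ne i$ (this produces the factor $K-1$) and split the deviation evenly between the two empirical means. For arm $i$, with $m-1$ samples, a sub-Gaussian tail gives $\exp\bigl(-c\,f^2q^2/n_i^2\bigr)$. For arm $j$, the deviation is uniform over $s\le n_j$; I would handle it with a maximal inequality (Doob applied to the sub-Gaussian martingale $\sum_{u\le s}(X^j_u-\mu_j)$), which gives $\exp\bigl(-c f^2q^2/(n_i^3/n_j)\bigr)$-type bounds. Because $q\ge (\log T/f(T))(n_i\wedge n_2)$, the exponent is at least of order $\log^2 T$, so each term is $\le e^{-c\log^2T}$. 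This is at most $1/(4q^4)$ once $T\ge\underline T$, since $q\le T$. The hypothesis that $f(t)/\log t$ is non-decreasing is used to control the rescaling between $f(t)$ and $f(T)$: the pull time $t$ must satisfy $t\gtrsim n_i$, so $f(t)$ is not too small relative to $f(T)$.

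\textbf{Lower tail.} If $N_i<n_i-q$, then $\sum_{j\ne i}(N_j-n_j)>q$, so some $j\ne i$ satisfies $N_j-n_j>q/(K-1)$. I would apply the upper-tail bound to that arm with $q'=q/(K-1)$. The threshold on $q$ in the lower-tail statement, $q\ge (K-1)(\log T/f(T))\,n_2$, is exactly what makes $q'$ admissible, using $n_j\wedge n_2\le n_2$. The union bound then gives $(K-1)\cdot(K-1)\cdot(K-1)^4/(4q^4)=(K-1)^6/(4q^4)$.

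\textbf{Main obstacle.} The hard part is the upper tail when $t$ is much smaller than $T$ and the fluid point $n^t$ differs from $n^T$. Then the monotonicity from \Cref{lem:key_ft_property} only gives one-sided comparisons, and $f(t)$ may be much smaller than $f(T)$, which weakens the gap lower bound. My first attempt would be to parametrize by the time $t$ at which $N_{i,t-1}$ first reaches $m-1$, and to use $n_i^t\le n_i$ so that the gap measured at scale $f(t)/\sqrt{n_i^t}$ stays at least as large. A possible fallback is to union-bound over $t$, paying a factor $T$, which the $e^{-c\log^2T}$ decay absorbs.
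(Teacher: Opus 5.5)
Your lower-tail reduction matches the paper's and is fine. The upper tail, however, has two genuine gaps. Both come from comparing against the horizon fluid point $n^T$ and from discarding arm $j$'s actual exploration bonus.

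\textbf{First gap: the comparison with $n^T$.} The phrase ``up to that rescaling'' hides a term of the wrong sign. Substituting $f(t)/\sqrt{n_j}=f(t)/\sqrt{n_i}+\frac{f(t)}{f(T)}(\mu_i-\mu_j)$ and centering, the right-hand side becomes $f(t)\bigl(\frac{1}{\sqrt{n_i}}-\frac{1}{\sqrt{N_{i,t-1}}}\bigr)-(\mu_i-\mu_j)\bigl(1-\frac{f(t)}{f(T)}\bigr)$. Pigeonhole does not let you choose $j$. So whenever $\mu_j<\mu_i$ and $t$ is well below $T$, the second term is negative, of constant order, and swamps the $O(f(t)q/n_i^{3/2})$ gap. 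An example is $i=2$, $\mu_3$ well below $\mu_2$, and $t\approx n_2+q\ll T$.

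Even without that term, the arm-$i$ exponent at the threshold is at most of order $(f(t)/f(T))^2\log^2T$. The hypotheses only give $f(t)/f(T)\ge\sqrt{t/T}$. The condition that $f(t)/\log t$ is non-decreasing gives the \emph{upper} bound $f(t)\le f(T)\log t/\log T$, so it cannot control this ratio. For $t$ near $n_i+q\ll T$ the exponent can tend to $0$ (e.g.\ $f(t)=t^{1/4}\log t$ and a clearly suboptimal arm $i$). Hence your fallback union bound over $t$ has no $e^{-c\log^2T}$ decay to absorb the factor $T$.

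Your ``main obstacle'' paragraph points the right way, but the fix is to redo the pigeonhole at the current time:
\begin{itemize}
\item Since $n_i^{t-1}\le n_i$, the event $N_{i,t-1}=n_i+q$ puts arm $i$ at least $q$ above $n_i^{t-1}$.
\item This gives some $j$ with $N_{j,t-1}\le n_j^{t-1}-\frac{q}{K-1}\le n_j^t-\frac{q}{K-1}$.
\item The time-$t$ balance $\mu_j-\mu_i=f(t)\bigl(\frac{1}{\sqrt{n_i^t}}-\frac{1}{\sqrt{n_j^t}}\bigr)$ then cancels the mean gap exactly.
\item The threshold transfers by monotonicity of $n_i^t/f(t)$: $q\ge n_i\log T/f(T)\ge n_i^t\log t/f(t)$.
\end{itemize}
Even so, you only get a per-round bound $t^{-6}$: once $q$ exceeds a constant multiple of $n_i^t$, the exponent is only of order $f(t)^2$. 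The bound $\frac{K-1}{4q^4}$ comes from summing $(K-1)\,t\cdot t^{-6}$ over $t>n_i+q$, not from any $e^{-c\log^2T}$ estimate.

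\textbf{Second gap: the arm-$j$ step.} After replacing $f(t)/\sqrt{N_{j,t-1}}$ by $f(t)/\sqrt{n_j}$, you need $\mu_j-\widehat\mu_j(s)\ge\delta/2$ for some $s\le n_j$ with a fixed, bounded $\delta$. For Gaussian arms, the $s=1$ event alone has probability bounded away from zero. Doob's inequality bounds $\max_{s\le n_j}\sum_{u\le s}(\mu_j-X^j_u)$ against a fixed level, not the $s$-dependent level $s\delta/2$. So the claimed $\exp(-cf^2q^2n_j/n_i^3)$ bound does not follow.

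The paper instead sums over arm $j$'s exact count $m$ and keeps its bonus. Together with the pigeonhole margin $\frac{q}{K-1}$ (which you dropped), this yields the extra gap $f(t)\bigl(\frac{1}{\sqrt m}-\frac{1}{\sqrt{m+q/(K-1)}}\bigr)$. When $m<n_i^t+q$, the variance is at most $2\sigma^2/m$. This term alone then gives an exponent at least $\frac{f(t)^2}{4\sigma^2}\phi\bigl(n_i^t+q,\frac{q}{K-1}\bigr)$, where $\phi(x,a)=\bigl(1-\sqrt{x/(x+a)}\bigr)^2$ is decreasing in $x$. Without this small-sample compensation the union bound does not close.
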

}
\begin{proof}{Proof}
Throughout, fix $\nu \in \mathfrak{P}_{\text{sg}}$ and we omit the subscript $\nu$ for brevity. By Lemma~\ref{lem:key_ft_property}, $n_i^t \le n_i$ and $\frac{n_i}{f(T)} \ge \frac{n_i^t}{f(t)}$ for any $t \le T$ and $i \in [K].$ We first bound the right tail $P(N_i > n_i + q)$.
\paragraph{Bounding $P(N_i > n_i + q)$.} Observe that  $\{N_i > n_i +  q \}$ implies that there exists time $ t \in [n_i + q +1, T]$, such that $N_{i, t-1} = n_i + q$ and $A_{t} = i$. By the allocation rule of \texttt{UCB-f}, at time $t$, the UCB index of arm $i$ is maximal among all arms:
\[
\widehat\mu_i(n_i+q)+\frac{f(t)}{\sqrt{n_i+q}}
\ \ge\
\widehat\mu_j(N_{j,t-1})+\frac{f(t)}{\sqrt{N_{j,t-1}}}
\qquad \forall j\neq i.
\]
Meanwhile, $N_{i, t-1} = n_i +  q$ implies that $\sum_{j\neq i} N_{j,t-1} \ =\ t - 1 - (n_i+q).$ Since $n_i^s \le n_i$ for $s \le T$, it follow that $\sum_{j\neq i} N_{j,t-1}
\ =\ t - 1 - (n_i+q)
\ \le\ t-1 -(n_i^{t-1}+q)
\ =\ \sum_{j\neq i} n_j^{t-1}\ -\ q.$ With a pigeonhole argument, there must exist $j \ne i$ such that $N_{j, t-1} \le n_j^{t-1} - \frac{q}{K - 1}.$ Combining the above and a union bound, we have 
\begin{align}
P(N_i > n_i+q)
&\le \sum_{t=n_i+q+1}^{T}\ \sum_{j\neq i}\ \sum_{m= 1}^{n_j^{t-1}-\frac{q}{K-1}}
P\!\left(
\widehat\mu_i(n_i+q)+\frac{f(t)}{\sqrt{n_i+q}}
\ge
\widehat\mu_j(m)+\frac{f(t)}{\sqrt{m}}
\right).
\label{eq:right_tail_reduce}
\end{align}
Fix $(t,j,m)$ in the summation above.
Rearrange the index inequality by centering at the means:
\[
\big(\widehat\mu_i(n_i+q)-\mu_i\big) - \big(\widehat\mu_j(m)-\mu_j\big)
\ \ge\
(\mu_j-\mu_i) + f(t)\!\left(\frac{1}{\sqrt{m}}-\frac{1}{\sqrt{n_i+q}}\right).
\]
Using the defining property of $n^t$ and we can rewrite the above RHS as 
\[
f(t)\!\left(
\frac{1}{\sqrt{n_i^{t}}}-\frac{1}{\sqrt{n_i+q}}
+\frac{1}{\sqrt{m}}-\frac{1}{\sqrt{n_j^{t}}}
\right).
\]
By Lemma~\ref{lem:key_ft_property}, $n_i + q = n_i^T + q \ge n_i^{t} + q$. Also, $n_j^{t} \ge n_j^{t-1} \ge m + \frac{q}{K-1}$. We derive the bound
\begin{align}
&(\mu_j-\mu_i) + f(t)\!\left(\frac{1}{\sqrt{m}}-\frac{1}{\sqrt{n_i+q}}\right)\nonumber\\
& \ \ge\ 
f(t)\!\left(
\frac{1}{\sqrt{n_i^{t}+q}}-\frac{1}{\sqrt{n_i^{t}+q+\frac{q}{K-1}}}
+
\frac{1}{\sqrt{m}}-\frac{1}{\sqrt{m+\frac{q}{K-1}}}
\right),
\label{eq:rhs_lower}
\end{align}
where we used $\frac{1}{\sqrt{x}} - \frac{1}{\sqrt{x + q}} \ge \frac{1}{\sqrt{x + q}} - \frac{1}{\sqrt{x + q + \frac{q}{K-1}}}$ for $K \ge 2$ with $x$ set to $n_i^{t}$.
Let $Z =\big(\widehat\mu_i(n_i+q)-\mu_i\big) - \big(\widehat\mu_j(m)-\mu_j\big).$ By assumption and the property of sub-Gaussian random variables, $Z$ is $\sigma\sqrt{\frac{1}{n_i+q}+\frac{1}{m}}$-sub-Gaussian. Particularly,
\[
P(Z\ge x)\ \le\ \exp\!\left(-\frac{x^2}{2\sigma^2\left(\frac{1}{n_i+q}+\frac{1}{m}\right)}\right)
\ \le\ 
\exp\!\left(-\frac{x^2}{2\sigma^2\left(\frac{1}{n_i^t+q}+\frac{1}{m}\right)}\right),
\]
where we used $n_i^t\le n_i$ so $\frac{1}{n_i+q}\le \frac{1}{n_i^t+q}$.
Combining with \eqref{eq:rhs_lower} gives
\begin{equation}\label{eq:basic_exp_bound}
P\!\left(
\widehat\mu_i(n_i+q)+\frac{f(t)}{\sqrt{n_i+q}}
\ge
\widehat\mu_j(m)+\frac{f(t)}{\sqrt{m}}
\right)
\ \le\
\exp\!\left(
-\frac{f(t)^2}{2\sigma^2}\cdot
\frac{(A+B)^2}{\frac{1}{n_i^t+q}+\frac{1}{m}}
\right),
\end{equation}
where $A = \frac{1}{\sqrt{n_i^t + q}} - \frac{1}{\sqrt{n_i^t + q  +\frac{q}{K-1}}}$ and $B = \frac{1}{\sqrt{m}} - \frac{1}{\sqrt{m + \frac{q}{K-1}}}$. Next we lower bound the exponent in the above RHS. Define, for $x>0$ and $a>0$,
\[
\phi(x, a):=x\left(\frac{1}{\sqrt{x}}-\frac{1}{\sqrt{x+a}}\right)^2
=\left(1-\sqrt{\frac{x}{x+a}}\right)^2.
\]
A direct derivative check shows $\phi(\cdot, a)$ is decreasing in $x$ for each fixed $a>0$. Note that $(n_i^t + q)A^2 = \phi\left(n_i^t + q, \frac{q}{K-1}\right)$ and $m B^2 = \phi\left(m, \frac{q}{K-1}\right)$. Consider two cases.

\emph{Case 1: $m\ge n_i^t + q$.} Then $\frac{1}{n_i^t + q}+\frac{1}{m}\le \frac{2}{n_i^t + q}$ and $(A+B)^2\ge A^2$, hence
\[
\frac{(A+B)^2}{\frac{1}{n_i^t + q}+\frac{1}{m}}
\ \ge\ \frac{A^2}{2/(n_i^t + q)}
\ =\ \frac{n_i^t + q}{2}A^2
\ =\ \frac12\,\phi\left(n_i^t + q, \frac{q}{K-1}\right).
\]

\emph{Case 2: $m<n_i^t + q$.} Then $\frac{1}{n_i^t + q}+\frac{1}{m}\le \frac{2}{m}$ and $(A+B)^2\ge B^2$, hence
\[
\frac{(A+B)^2}{\frac{1}{n_i^t + q}+\frac{1}{m}}
\ \ge\ \frac{B^2}{2/m}
\ =\ \frac{m}{2}B^2
\ =\ \frac12\,\phi\left(m;\frac{q}{K-1}\right)
\ \ge\ \frac12\,\phi\left(n_i^t + q, \frac{q}{K-1}\right),
\]
where the last step uses that $\phi(\cdot, a)$ is decreasing and $m<n_i^t +q$. Combining, we have shown that for every $m \ge 1$, 
\begin{equation*}
\frac{(A+B)^2}{\frac{1}{n_i^t + q}+\frac{1}{m}}
\ \ge\ \frac{1}{2}\,\phi\left(n_i^t + q, \frac{q}{K-1} \right).
\end{equation*}
Using $1-\sqrt{1-x}\ge x/2$ for $x\in[0,1]$ and the definition of $\phi$, we further bound the above RHS by
\begin{align*}
    \frac{(A+B)^2}{\frac{1}{n_i^t + q}+\frac{1}{m}}
\ \ge\ \frac{1}{8}\Big(\frac{q}{(K-1)n_i^t+Kq}\Big)^2 \ \ge \ \frac{1}{32}\Big(\frac{q^2}{(K-1)^2(n_i^t)^2}\ \wedge\ \frac{1}{K^2}\Big).
\end{align*}
Plugging back to \eqref{eq:basic_exp_bound}, we derive the following bound on the exponent
\begin{align}\label{eq:exp-exponent-bound0}
  \frac{f(t)^2}{2\sigma^2}\cdot \frac{(A + B)^2}{\frac{1}{n_i^t + q} + \frac{1}{m}} & \ge \frac{f(t)^2}{64\sigma^2} \Big(\frac{q^2}{(K-1)^2(n_i^t)^2}\ \wedge\ \frac{1}{K^2}\Big).
\end{align}
Recall we assume $q \ge \frac{\log T}{f(T)}\left(n_i \wedge n_2\right).$ When $i \ne 1$, this condition is $q \ge \frac{n_i \log T}{f(T)}.$ By Lemma~\ref{lem:key_ft_property}, $\frac{n_i}{f(T)} \ge \frac{n_i^t}{f(t)}$. Combining, we have 
\[
q^2 \ge \left(\frac{n_i \log T}{f(T)}\right)^2 \ge \left( \frac{n_i^t \log T}{f(t)}\right)^2 \ge \frac{(n_i^t \log t)^2}{f(t)^2}.
\]
Plugging back to \eqref{eq:exp-exponent-bound0} and we have the RHS is at least $\frac{(\log t)^2}{64\sigma^2 K^2}$. By choosing $\underline{T} = \exp\left(384 \sigma^2 K^2\right)$, for all $t \ge \underline{T}$ we have $\frac{(\log t)^2}{64\sigma^2 K^2}  \ge 6 \log t$. Plugging back to \eqref{eq:basic_exp_bound} we have for all $t \ge \underline{T}$, 
\[
P\!\left(
\widehat\mu_i(n_i+q)+\frac{f(t)}{\sqrt{n_i+q}}
\ge
\widehat\mu_j(m)+\frac{f(t)}{\sqrt{m}}
\right)
\ \le\ t^{-6}.
\]
Now plugging the above back to \eqref{eq:right_tail_reduce}, using the crude bounds
$n_j^{t-1}-\frac{q}{K-1} \le t$ and $|\{j\neq i\}|=K-1$, we obtain for $n_i \ge \underline{T}$,
\begin{align*}
P(N_i > n_i+q)
&\le \sum_{t=n_i+q +1}^{T}\ \sum_{j\neq i}\ \sum_{m=1}^{n_j^{t-1}-\frac{q}{K-1}} t^{-6}
\ \le\ \sum_{t=n_i+q +1}^{T}(K-1)\cdot t\cdot t^{-6}\\
&=\ (K-1)\sum_{t=n_i+q +1}^{T} t^{-5}
\ \le\ (K-1)\int_{n_i+q}^{\infty} x^{-5}\,dx
\ =\ \frac{K-1}{4(n_i+q)^4}
\ \le\ \frac{K-1}{4q^4},
\end{align*}
where the last inequality uses $n_i\ge 0$ (which holds for all large $T$). Also, by Lemma~\ref{lem:n_i_asymptotics}, $n_i = \omega(1)$ as $T \to \infty$. hence for sufficiently large $T$ depending on $M, \sigma, K$, condition $n_i  \ge \underline{T}$ can hold uniformly over the instance class $\mathfrak{P}_{\text{sg}}$. This proves the right-tail bound for $i\ge 2$ when $q\ge  \frac{\log T}{f(T)}n_i$.

For $i=1$, the exact same argument goes through with $n_i$ replaced by $n_2$ in the threshold. Here such a replacement is possible since in treating the exponent of \eqref{eq:basic_exp_bound}, we have $m \le n_j^{t-1} -\frac{q}{K-1} < n_2^{t} + q$ for any $j = 1, \dots, K$ (since the arms are in descending order). The same two-case splitting argument would yield 
\[
\frac{(A+B)^2}{\frac{1}{n_1^t + q}+\frac{1}{m}}
\ >\ \frac{1}{2}\,\phi\left(n_2^t + q, \frac{q}{K-1} \right),
\]
and the rest of the proof follows. Combining the above completes the proof of the right tail for all $i \in [K]$.

\paragraph{Bounding $P(N_i < n_i - q)$.}  We now turn to the left tail. Fix $i\in[K]$ and $q>0$. We make use of the right tail above to establish a crude bound. In particular, since $\sum_{k=1}^K N_k = \sum_{k=1}^K n_k = T$, we have
\[
\{N_i<n_i-q\}\ \subseteq\ \left\{\sum_{j\neq i}(N_j-n_j)>q\right\}
\ \subseteq\ \bigcup_{j\neq i}\left\{N_j-n_j>\frac{q}{K-1}\right\}.
\]
Therefore, $P(N_i-n_i<-q)
\ \le\ \sum_{j\neq i}P\!\left(N_j-n_j>\frac{q}{K-1}\right).$ If $q\ge \frac{(K-1)\log T}{f(T)}n_2$, then for every $j\neq i$ we have
$\frac{q}{K-1}\ge \frac{\log T}{f(T)}n_2\ge \frac{\log T}{f(T)}\left(n_j\wedge n_2\right)$, so the right-tail bound gives
\[
P\!\left(N_j-n_j>\frac{q}{K-1}\right)
\ \le\ \frac{K-1}{4\left(\frac{q}{K-1}\right)^4}
\ =\ \frac{(K-1)^5}{4q^4}.
\]
Summing over at most $K-1$ arms yields
\[
P(N_i-n_i<-q)\ \le\ (K-1)\cdot \frac{(K-1)^5}{4q^4}\ =\ \frac{(K-1)^6}{4q^4}
\]
as claimed.

This completes the proof.   \qed
\end{proof}

We are now ready to prove \Cref{thm:ucb-f}.

\begin{proof}{Proof of \Cref{thm:ucb-f}}
Fix $\alpha\in[0,\tfrac12)$. Recall $n_i = n_i^T$ is the solution of the fluid system at horizon $T$, and we use $N_i = N_{i,T}$ interchangeably.
\paragraph{Allocation variability.}
By \Cref{lem:N_i_tail_bound}, for any $T\ge \underline T$, any $\nu \in \mathfrak{P}_{\text{sg}}$ and any $i\in[K]$,
\[
P\big(|N_i-n_i|>q\big)\ \le\ \frac{(K-1)^6}{2q^4},
\qquad \forall q\ge \frac{(K-1)\log T}{f(T)}n_2.
\]
Fix $q_0\triangleq \frac{(K-1)\log T}{f(T)}n_2$. Using
$\E[Z^2]=\int_0^\infty 2u\,P(|Z|>u)\,du$, we obtain
\begin{align*}
\E\big[(N_i-n_i)^2\big]
&= \int_0^{q_0} 2u\,P\big(|N_i-n_i|>u\big)\,du
   +\int_{q_0}^{\infty} 2u\,P\big(|N_i-n_i|>u\big)\,du \\
&\le q_0^2 + \int_{q_0}^{\infty} 2u\cdot \frac{(K-1)^6}{2u^4}\,du
 \;=\; q_0^2 + \frac{(K-1)^6}{2}\int_{q_0}^{\infty} u^{-3}\,du \\
&= \frac{(K-1)^2 (\log T)^2 n_2^2}{f(T)^2}+\frac{(K-1)^4 f(T)^2}{4n_2^2(\log T)^2}.
\end{align*}
By \Cref{lem:n_i_asymptotics}, $\inf_{\nu\in\mathfrak{P}_{\text{sg}}} n_2/f(T)\to\infty$ as $T\to\infty$.
Hence there exists $T_0=T_0(K,\alpha)$ such that for all $T\ge T_0$ and all $\nu\in\mathfrak{P}_{\text{sg}}$,
\[
\frac{(K-1)^4 f(T)^2}{4n_2^2(\log T)^2}\ \le\ \frac{(K-1)^2 (\log T)^2 n_2^2}{f(T)^2}.
\]
Therefore, for all $T\ge \underline{\underline T}\triangleq \underline T\vee T_0$, $\E\big[(N_i-n_i)^2\big]\ \le\ 2\frac{(K-1)^2 (\log T)^2 n_2^2}{f(T)^2},$ and hence 
\[
\sqrt{\var(N_i)}\ \le\ \sqrt{\E[(N_i-n_i)^2]}\ \le\ \sqrt{2}(K-1)\frac{\log T}{f(T)}n_2.
\]
Since $n_2=O(T)$, we obtain 
\[
\mathcal S_T
=\sup_{\nu\in\mathfrak{P}_{\text{sg}}}\max_{i\in[K]}\sqrt{\var_\nu(N_{i,T})}
= O\left(\frac{T\log T}{f(T)}\right).
\]

\paragraph{Regret.}
We bound $\E N_i$ via the right tail. By \Cref{lem:N_i_tail_bound}, for $T\ge \underline T$ and
any $i\in[2, K]$,
\[
P(N_i-n_i>q)\ \le\ \frac{K-1}{4q^4},
\qquad \forall q\ge \frac{\log T}{f(T)}n_i.
\]
Taking $q=n_i$ and using $\E[Y]=\int_0^\infty P(Y>u)\,du$ for $Y\ge 0$, we get
\begin{align*}
\E N_i
&\le n_i + \E[(N_i-n_i)_+]
 \le n_i + \int_0^{n_i} 1\,du + \int_{n_i}^{\infty} P(N_i-n_i>u)\,du \\
&\le 2n_i + \int_{n_i}^{\infty} \frac{K-1}{4u^4}\,du  \;=\;2n_i + \frac{K-1}{12\,n_i^3}.
\end{align*}
Since by Lemma~\ref{lem:n_i_asymptotics}, $\inf_{\nu\in\mathfrak{P}_{\text{sg}}}n_i\to\infty$, there exists $T_0'=T_0'( M, K)$ such that $\frac{K-1}{12n_i^3}\le 2n_i$ for all $i$ and all $\nu$ whenever $T\ge T_0'$. Thus for $T\ge \underline T\vee T_0'$, $\E_{\nu} N_i \le 4n_i,$ for all $i\in[2, K]$ and $\nu \in \mathfrak{P}_{\text{sg}}$ Therefore,
\[
\sup_{\nu\in\mathfrak{P}_{\text{sg}}} R_T(\nu)
=\sup_{\nu\in\mathfrak{P}_{\text{sg}}}\sum_{i=2}^K \Delta_i\,\E_{\nu} N_i
\le 4\sup_{\nu\in\mathfrak{P}_{\text{sg}}}\sum_{i=2}^K \Delta_i\,n_i
\le 4\sum_{i=2}^K \sup_{\nu\in\mathfrak{P}_{\text{sg}}}(n_i\Delta_i)
=O\!\left(T^{\frac12}f(T)\right),
\]
where the last step again uses \Cref{lem:n_i_asymptotics}. This completes the proof. \qed
\end{proof}

\section{Other proof of \Cref{sec:algorithm}}\label{sec:pareto_other_proofs}
We first prove \Cref{prop:pareto-region-achievability}.
\begin{proof}{Proof of \Cref{prop:pareto-region-achievability}}

Fix $\alpha\in(0,\frac{1}{2})$ and $\beta\in(0,\alpha]$. Consider the following policy for horizon length $T$. At $t=1,...,T-\lceil T^{\frac{1}{2}+\alpha}\rceil$, apply \texttt{UCB-f} with exploration function $f(t)=t^{\alpha-\beta}\log t$. In the remaining time $t=T-\lceil T^{\frac{1}{2}+\alpha}\rceil+1,...,T$, play each arm round-robin. Then by \Cref{thm:ucb-f}, it is straightforward to see that under this policy, 
    \begin{align*}
        R_T=\Theta(T^{\frac{1}{2}+\alpha}), \ \ S_T=\tilde \Theta(T^{1-\alpha+\beta}).
    \end{align*}
\end{proof}

We now prove \Cref{prop:regret-variability}.
\begin{proof}{Proof of \Cref{prop:regret-variability}}
    We first prove the lower bound.

    \paragraph{Lower bound: $\sup_{\nu}\sqrt{\var \hat R_T(\nu)} = \Omega\left(\frac{\sqrt{T}}{\log T}\right)$.}
    Similar to the proof of Theorem~\ref{thm:lower-bound} in Section~\ref{sec:lower-bound-proof0} and Appendix~\ref{sec:lower-bound-proof}, we focus on the family of instances $\nu(\Delta) = \{\mathcal{N}(0, 1), \mathcal{N}(-\Delta, 1)\}$. By \Cref{lem:S_lower_bound}, we know that 
    \begin{align*}        \sup_{\nu\in\mathfrak{P}_{\text{sg}}}\sqrt{\var_{\nu}(\hat R_T(\nu))}&\ge\sup_{\Delta \in [0 ,2]}\Delta S_T(\Delta)\\
        &\geq\sup_{\Delta,\Delta'\in[0,2], g_T(\Delta)\geq g_T(\Delta')}\Bigg\{\frac{(\Delta\vee\Delta')(g_T(\Delta)-g_T(\Delta'))}{4}\\
        &\qquad\qquad\qquad\cdot\sqrt{P_{\Delta}\left(N_T<\frac{g_T(\Delta)+g_T(\Delta')}{2}\right)+P_{\Delta'}\left(N_T\geq\frac{g_T(\Delta)+g_T(\Delta')}{2}\right)}\Bigg\}.
    \end{align*}
    By \Cref{lem:Bretagnolle-Huber}--\ref{lem:tail-bound-gaussian}, $\sqrt{P_{\Delta}\left(N_T<\frac{g_T(\Delta)+g_T(\Delta')}{2}\right)+P_{\Delta'}\left(N_T\geq\frac{g_T(\Delta)+g_T(\Delta')}{2}\right)}$ is lower bounded by a constant when $\Delta'-\Delta=\frac{1}{\sqrt{g_T(\Delta)}}$. Therefore it suffices to show
    \begin{align*}      \liminf_{T\rightarrow\infty}\frac{\sup_{\Delta\in[0,1]}\left(\Delta+\frac{1}{\sqrt{g_T(\Delta)}}\right)\Big|g_T(\Delta)-g_T\left(\Delta+\frac{1}{\sqrt{g_T(\Delta)}}\right)\Big|}{\sqrt{T}/\log T}>0.
    \end{align*}
    The proof is similar to that of Lemma~\ref{lem:g_tradeoff0}. In particular, define    
    \begin{align*}
        \xi(T):=\sup_{\Delta\geq 0}\left(\Delta+\frac{1}{\sqrt{g_T(\Delta)}}\right)\left|g_T(\Delta)-g_T\left(\Delta+\frac{1}{\sqrt{g_T(\Delta)}}\right)\right|.
    \end{align*}
    Then    \begin{align}\label{eq:g_different_uppbound2}
        g_T(\Delta)-g_T\left(\Delta+\frac{1}{\sqrt{g_T(\Delta)}}\right)\leq\frac{\xi(T)}{\Delta+\frac{1}{\sqrt{g_T(\Delta)}}}, \ \ \forall\Delta\geq 0.
    \end{align}
    Define 
    \begin{align}\label{eq:Delta_sequence2}
        \Delta_{0, T}=0, \ \ \Delta_{m+1, T}=\Delta_{m, T}+\frac{1}{\sqrt{g_T(\Delta_{m, T})}}, \forall m \ge 0,
    \end{align}
    Since $g_T(\Delta) \le T$ for any $\Delta \ge 0$, we have $\Delta_{m, T}\geq\frac{m}{\sqrt T}, \ \ \forall m \ge 0$. Combining this bound with  \eqref{eq:g_different_uppbound2} and telescoping, for all $n\geq 1$ such that $\Delta_{n, T} \le 1$
    \begin{align}\label{eq:telescoping}
        g_T(0)-g_T(\Delta_{n, T})&=\sum_{m=1}^{n}(g_T(\Delta_{m-1, T})-g_T(\Delta_{m, T})) \leq \sum_{m=1}^n\frac{\xi(T)}{\Delta_{m, T}}\leq\xi(T)\sum_{m=1}^n\frac{\sqrt{T}}{m}.
    \end{align}
    Consider $\tau_T = \inf\{m \ge 1: g_T(\Delta_m) < \frac{T}{4} \}$. Then similar as in the proof of Lemma~\ref{lem:g_tradeoff0}, we must have $\Delta_{\tau_T, T} \le 1$ for $T$ sufficiently large, due to active learning. This also implies $\tau_T \le \sqrt{T},$ since $\Delta_{m, T} \ge \frac{m}{\sqrt{T}}$. We thus apply \eqref{eq:telescoping} to get
    \begin{align*}
        g_T(0) - g_T(\Delta_{\tau_T, T}) \le \xi(T) \sum_{m = 1}^{\tau_T} \frac{\sqrt{T}}{m} \le \xi(T)\sqrt{T}\left(\log\tau_T + 1 \right) \leq \xi(T)\sqrt{T}\left(\frac{1}{2}\log T + 1 \right),
    \end{align*}
    which is further bounded by $\sqrt{T}\log T$ for $T$ sufficiently large. On the other hand
    $g_T(0) - g_T(\Delta_{\tau_T, T}) \ge \frac{T}{2} - \frac{T}{4}$, due to the symmetry assumption at $\Delta = 0$ and the definition of $\tau_T$. Combining the above we conclude that 
    $\xi(T) \ge \frac{\sqrt{T}}{4 \log T}$ for $T$ sufficiently large, completing the proof.   

    \paragraph{Achievability by \texttt{UCB-f}.} From the proof of Theorem~\ref{thm:ucb-f}, for any $\nu \in \mathfrak{P}_{\text{sg}}$ and any $i \in [K]$, we have 
    \[
    \E_{\nu}[(N_i - n_i)^2] \le \frac{2(K-1)^2(\log T)^2 n_2^2 }{f(T)}
    \]
    In the case of two arms (where arm 2 is the inferior arm), we have 
    $\hat R_T(\nu) = \Delta N_2$. And,
    \begin{align*}
        \sqrt{\var_{\nu}(\hat R_T(\nu))} = \sqrt{\var_{\nu}(\Delta N_2)} \le \frac{\sqrt{2}(K-1) \log T}{f(T)}\Delta n_2.
    \end{align*}
    Apply Lemma~\ref{lem:n_i_asymptotics} by taking superior over all two armed instances, we thus have the worst case regret variability of \texttt{UCB-f} under any choice of $f(t)$ satisfying the conditions in Theorem~\ref{thm:ucb-f} is at most $O(\sqrt{T}\log T)$, completing the proof.   \qed
    \end{proof}

\section{Proof of \Cref{sec:implications}}\label{app:sampling_instability}
We prove \Cref{cor:platform-value-lower-bound} and \Cref{cor:minimax-sampling-instability} in this appendix.
\begin{proof}{Proof of \Cref{cor:platform-value-lower-bound}}
    Note that
    \begin{align*}
        \sup_{\nu\in\mathfrak{P}_{\mathrm{sg}}}\left(R_T(\nu)+(S_T(\nu))^\rho\right)\geq \mathcal R_T, \ \ \sup_{\nu\in\mathfrak{P}_{\mathrm{sg}}}\left(R_T(\nu)+(S_T(\nu))^\rho\right)\geq\sup_{\nu\in\mathfrak{P}_{\mathrm{sg}}}(S_T(\nu))^\rho=\mathcal S_T^\rho.
    \end{align*}
    By the minimax-regret lower bound, we have $\liminf_{T\rightarrow\infty}\frac{\sup_{\nu\in\mathfrak{P}_{\mathrm{sg}}}\left(R_T(\nu)+(S_T(\nu))^\rho\right)}{\sqrt{T}}>c_1$ for some constant $c_1>0$. Consider an active learning policy. By \Cref{thm:lower-bound}, there exists an increasting sequence of $T_1,T_2,...$ with $\lim_{j\rightarrow\infty}T_j=\infty$, and a large constant $J>0$ such that for all $j\geq J$,
    \begin{align}\label{eq:RS_lowerbound}
        \mathcal R_{T_j}\cdot\mathcal S_{T_j}>CT_{j}^{\frac{3}{2}}.
    \end{align}
    If $\liminf_{j\rightarrow\infty}\mathcal R_{T_j}>c_2T_j^{\frac{3\rho}{2+2\rho}}$ for some constant $c_2>0$, then we are done. Otherwise, for any $\epsilon>0$, there must exist $j_1,j_2,...$ increasing to $\infty$, and a constant $L>0$ such that $\mathcal R_{T_{j_l}}<\epsilon T_{j_l}^{\frac{3\rho}{2+2\rho}}$ for all $l\geq L$. By \eqref{eq:RS_lowerbound}, this implies that $\mathcal S_{T_{j_l}}>\frac{C}{\epsilon}T_{j_l}^{\frac{3}{2+2\rho}}$ for all $l\geq L$. Equivalently, $\sup_{\nu\in\mathfrak{P}_{\mathrm{sg}}}\left(R_T(\nu)+(S_T(\nu))^\rho\right)\geq \frac{C}{\epsilon}T_{j_l}^{\frac{3\rho}{2+2\rho}}$. \qed
\end{proof}
\begin{proof}{Proof of \Cref{cor:minimax-sampling-instability}}
    Consider any minimax optimal algorithm that satisfies $\mathcal R_T\leq c_0\sqrt T$ for some $c_0>0$ over instance class $\mathfrak{P}_{\mathrm{sg}}$. Consider the instance where both arms are $\mathcal N(0,1)$. By \Cref{thm:minimax-worst-case-interval}, for $T$ sufficiently large, $\sqrt{\var (N_{1,T})}>c_2T$ for some $c_2>0$. 

Suppose the algorithm satisfies sampling stability. There must exist deterministic scalars $n_T\to\infty$ such that
\[
\frac{N_{1,T}}{n_T}\xrightarrow{p}1.
\]
First, we claim $n_T\le 2T$ for all large $T$. By convergence in probability, for all large $T$ we have
$P(N_{i,T}\ge n_T/2)\ge 1/2$. But $N_{i,T}\le T$, hence $n_T/2\le T$.

Now fix any $\delta\in(0,1)$. By convergence in probability, for all large $T$,
\[
P\big(|N_{i,T}-n_T|\ge \delta n_T\big)\le \delta.
\]
Using the bound $0\le N_{i,T}\le T$ and $n_T\le 2T$, we obtain
\begin{align*}
\mathbb E\big[(N_{i,T}-n_T)^2\big]
&\le (\delta n_T)^2 \;+\; (T+n_T)^2\,P\big(|N_{i,T}-n_T|\ge \delta n_T\big)\le(4\delta^2+9\delta)\,T^2.
\end{align*}
Finally, Jensen's inequality gives $(\mathbb E[N_{i,T}]-n_T)^2\le \mathbb E[(N_{i,T}-n_T)^2]$,
so
\[
\var(N_{i,T})
\le \mathbb E[(N_{i,T}-n_T)^2] + (\mathbb E[N_{i,T}]-n_T)^2
\le 2\,\mathbb E[(N_{i,T}-n_T)^2]
\le 2(4\delta^2+9\delta)T^2.
\]
Because $\delta>0$ is arbitrary, this implies $\var(N_{i,T})=o(T^2)$, a contradiction. \qed
\end{proof}

\end{document}